\documentclass{article}

\usepackage{microtype}
\usepackage{graphicx}
\usepackage{subcaption}
\usepackage{booktabs} 

\usepackage{hyperref}

\usepackage[preprint]{icml2026}

\usepackage{booktabs,tabularx,url}
\usepackage{multirow}
\usepackage{amsmath}
\usepackage{amssymb}
\usepackage{mathtools}
\usepackage{amsthm}
\usepackage{algorithm}
\usepackage{algorithmic}
\usepackage{adjustbox}
\usepackage{booktabs}
\usepackage{makecell}
\usepackage{tikz}
\usetikzlibrary{positioning,fit,arrows.meta}
\usepackage{listings}
\usepackage[capitalize,noabbrev]{cleveref}

\theoremstyle{plain}
\newtheorem{theorem}{Theorem}[section]
\newtheorem{proposition}[theorem]{Proposition}
\newtheorem{lemma}[theorem]{Lemma}
\newtheorem{corollary}[theorem]{Corollary}
\theoremstyle{definition}
\newtheorem{definition}[theorem]{Definition}

\theoremstyle{remark}
\newtheorem{remark}[theorem]{Remark}

\usepackage[textsize=tiny]{todonotes}

\icmltitlerunning{\textsc{FiBeR}: A Differentially Private Optimizer with Filter-Aware Innovation Bias Correction}

\begin{document}
\newcommand{\supdag}{\texorpdfstring{\textsuperscript{\textdagger}}{†}}

\twocolumn[
\icmltitle{\textsc{FiBeR}: A Differentially Private Optimizer with Filter-Aware Innovation Bias Correction}
  

 \begin{icmlauthorlist}
    \icmlauthor{Duc Dm}{affone}
    \icmlauthor{Thao Do}{affone}
    \icmlauthor{Minh Son Hoang}{affone}
    \icmlauthor{Tran Le Duc Anh}{afftwo}
    \icmlauthor{Daeyoung Kim\supdag}{affone}
    \icmlauthor{Huy Nguyen\supdag}{affthree}
  \end{icmlauthorlist}

  \icmlaffiliation{affone}{
    School of Computing, Korea Advanced Institute of Science and Technology,
    Daejeon, Republic of Korea
  }

  \icmlaffiliation{afftwo}{
    Center for Environmental Intelligence (CEI), VinUniversity,
    Hanoi, Vietnam
  }

  \icmlaffiliation{affthree}{
    Khoury College of Computer Sciences, Northeastern University,
    USA
  }

  \icmlcorrespondingauthor{Duc Dm}{ducdm200158@kaist.ac.kr}

  \icmlkeywords{Machine Learning, ICML}
  \vskip 0.3in
]


\printAffiliationsAndNotice{
  \textsuperscript{\textdagger}Equal supervision.
}


\begin{abstract}
Differentially private (DP) training protects individual examples by adding noise to gradients, but the injected noise interacts nontrivially with adaptive optimizers. Recent DP methods temporally filter privatized gradients to reduce variance; however, filtering also changes the DP noise statistics seen by AdamW’s second-moment accumulator. As a result, bias corrections derived for unfiltered DP noise (e.g., subtracting $\sigma_w^2$) can become miscalibrated when filtering is present.
We propose \textsc{FiBeR}, a DP optimizer designed for temporally filtered privatized gradients. \textsc{FiBeR} (i) performs denoising in innovation space by filtering the residual stream and integrating it to form the filtered gradient estimate, (ii) decouples the two-point observation geometry from the innovation gain to enable independent tuning, and (iii) introduces a filter-aware second-moment calibration that subtracts the attenuated DP noise contribution $A(\omega)\sigma_w^2$, where $A(\omega)$ is derived in closed form for the innovation filter and can be computed for general stable linear filters.
Across vision and language benchmarks, \textsc{FiBeR} consistently demonstrates substantial improvements in the performance of DP optimizers, surpassing state-of-the-art results under equivalent privacy constraints on multiple tasks.

\end{abstract}

\section{Introduction}
\label{sec:intro}

Differential privacy ~\citep{dwork2014algorithmic} offers strong protection for individual data in machine learning, but it often reduces model utility, especially during long training, with high-dimensional models, or in fine-tuning where optimization is sensitive~\citep{abadi2016deep,jayaraman2019evaluating,de2022unlocking}.

A central challenge is that DP noise does not remain isolated in the gradient it is added to. In modern adaptive optimizers, privatized gradients are aggregated into momentum and second-moment statistics, so DP perturbations become stateful through the optimizer state. For Adam~\citep{DBLP:journals/corr/KingmaB14}, the squared-gradient accumulator $v_t = \beta_2 v_{t-1} + (1-\beta_2)g_t^2$ is nonlinear and converts zero-mean DP noise into persistent positive bias. This preconditioner inflation shrinks effective step sizes and degrades adaptivity over time. \citet{tang2024dp} show that DP-Adam can collapse toward DP-SGD behavior unless this bias is explicitly handled.

Recent work applies temporal filtering to privatized gradients to reduce DP noise impact. Methods range from low-pass filters~\citep{zhang2024disk,zhang2024doppler} to correlated-noise mechanisms~\citep{kairouz2021practical,choquette2024correlated,koloskova2023gradient}, showing clear benefits for gradient estimation. However, filtering introduces a previously unmodeled failure mode when combined with adaptive optimizers. The core issue is that filtering attenuates DP noise variance before it reaches the second-moment accumulator. Temporal denoising changes the DP noise statistics seen by adaptive optimizers.
Let $g_t=\bar g_t+w_t$ with $w_t\sim\mathcal{N}(0,\sigma_w^2 I_d)$, $\bar g_t$ denote the clipped minibatch gradient estimate before DP noise is added and $g_t$
be the privatized gradient used by the learning algorithm, and let $\tilde g_t$ denote a filtered version
used to update AdamW moments. For stable linear filters, the DP noise component is attenuated. In steady state,
$\mathrm{Var}(\tilde g_{t,i}|\bar{g}) = A\,\sigma_w^2$ for some $A\in(0,1]$.
Consequently, the bias-corrected second moment satisfies the approximation
$\mathbb{E}[\hat v_{t,i}]\approx \mathbb{E}[\tilde g_{t,i}^2]
= \mathbb{E}[s_{t,i}^2] + A\,\sigma_w^2$,
where $s_t$ is the filtered signal component.
This has two implications: (i) without correction, $\hat v_t$ retains a positive DP noise term that can shrink
adaptive steps, and (ii) DP-Adam bias corrections derived for unfiltered noise (subtracting $\sigma_w^2$) become
miscalibrated when filtering is present. We therefore propose a filter-aware correction that subtracts the
attenuated DP noise contribution $A\,\sigma_w^2$, yielding a better-calibrated preconditioner under filtered DP noise. Existing filtering methods~\citep{zhang2024disk,zhang2024doppler,koloskova2023gradient} improve gradient estimation but do not model or correct how filtering alters DP noise statistics within AdamW's internal state, leaving a critical gap.
We introduce \textsc{FiBeR} (\textbf{F}ilter-aware \textbf{I}nnovation \textbf{B}ias-corrected optimiz\textbf{ER}),
a differentially private adaptive optimizer tailored to temporally filtered privatized gradients. Our main contributions are:
\begin{itemize}
    \item \textbf{Principled innovation-space filtering.}
    We propose a denoising approach in residual space, integrating the residual stream with a lightweight, theoretically grounded second-order recursion. This method enables tracking of nonstationary dynamics under DP noise.

    \item \textbf{Decoupled hyperparameter control.}
    We provide the first explicit separation of observation geometry $(\kappa,\gamma)$ from innovation gain $\omega$, enabling independent tuning of estimator geometry and temporal smoothing for practical and effective hyperparameter optimization.

 \item \textbf{First filter-aware DP-AdamW calibration.}
    We rigorously analyze how temporal filtering affects the statistics of DP noise and derive the precise attenuation factor. To our knowledge, this is the first explicit analysis and practical calibration of AdamW’s second-moment statistics under filtered DP noise, including an attenuation-aware correction for stable linear filters.

\item \textbf{Extensive empirical validation.}
   \textsc{FiBeR} consistently outperforms DP-Adam(W) and temporal-filtering baselines across a wide range of vision and language benchmarks, achieving state-of-the-art performance under tight privacy budgets and long training horizons, while remaining competitive at higher $\varepsilon$.

\end{itemize}
\section{Related Work}
\label{sec:related}
\subsection{Differentially Private Optimization Methods}
\label{subsec:rw_dpopt}
Most practical DP training algorithms use per-example gradient clipping and add Gaussian noise, resulting in DP-SGD and adaptive variants such as DP-Adam and DP-AdamW
\cite{abadi2016deep,yu2024differentially,DBLP:journals/corr/abs-2506-06549}. While effective
in smaller regimes and some fine-tuning tasks, these methods often suffer in long-horizon training and large-scale models
\cite{jayaraman2019evaluating,de2022unlocking}. Another line of work improves robustness to DP perturbations through
mechanism- and training-level choices, including adaptive clipping and automated tuning
\cite{bu2023automatic,xia2023differentially}, as well as architectural and optimization design choices that reduce sensitivity
to DP noise \cite{yu2024differentially,bu2023differentially,mehta2023towards}.
For adaptive optimizers in particular, recent work analyzes how DP noise biases the internal second-moment accumulator and can
collapse adaptivity over time, motivating explicit bias-correction strategies \cite{tang2024dp}. The present work is aligned with this
perspective but focuses on a setting that is increasingly prevalent in practice: filtered privatized gradients. In this
regime, the  statistics entering AdamW are altered by the filter, and correcting the second moment requires accounting
for the filter-induced attenuation, not only the raw DP noise level.
\subsection{Temporal Structure: Filtering and Correlation}
\label{subsec:rw_filters}
Recent research leverages the temporal structure of DP noise to improve optimization.
Correlated-noise mechanisms show that coupling noise across iterations can improve privacy-utility trade-offs in certain regimes
\cite{kairouz2021practical,choquette2024correlated,koloskova2023gradient}. In parallel, signal-processing-inspired methods apply
temporal low-pass filtering to privatized gradients to suppress high-frequency DP noise while preserving learning signal
\cite{zhang2024doppler}. Several approaches also address clipping-induced distortion (e.g., via error-feedback-like corrections)
to recover unbiased optimization behavior under clipping constraints \cite{zhang2024differentially}.
Some view DP training as filtering: privatized gradients are noisy measurements of an
underlying time-varying learning signal, and temporal structure is exploited to suppress injected noise while preserving signal.
Kalman-filter optimizers are impractical for deep learning due to costly covariance tracking and
stepwise matrix operations~\cite{vuckovic2018kalman}. Recent work favors constant-gain, lightweight
state-space variants. DiSK, for example, uses a two-point measure, computing gradients at both current $\theta_t$ and lookahead parameters $\theta_t+\gamma d_{t-1}$ and exponential smoothing of a latent
gradient state-a first-order model with an EMA filter-providing
empirical and theoretical guarantees~\cite{zhang2024disk}.

Despite recent advances, a crucial challenge persists: temporal filtering can miscalibrate adaptive preconditioners because the noise statistics observed by AdamW’s moments fluctuate with privacy budgets and training dynamics. Consequently, naive smoothing or variance subtraction may lead to unstable updates. This motivates protocols to test robustness across regimes and methods that go beyond gradient-state heuristics by aligning optimizer moments with the post-filter noise model.

\section{Method}
\label{sec:method}
\subsection{Problem Setup}
Given a dataset $\mathcal{D}=\{\xi_i\}_{i=1}^N$, we minimize empirical risk:
\begin{equation}
\min_{\theta \in \mathbb{R}^d} \; F(\theta) \triangleq \frac{1}{N}\sum_{i=1}^N f(\theta;\xi_i),
\end{equation}
where $f(\theta;\xi)$ is the per-example loss and $\theta$ are parameters.
At iteration $t$, sample minibatch $B_t \subset \mathcal{D}$ of size $B$.
\subsection{Differentially Private Optimization}
\label{subsec:dp}
We briefly review $(\varepsilon,\delta)$-differential privacy\cite{dwork2014algorithmic}, the Gaussian mechanism\cite{dwork2014algorithmic,wang2019subsampled}, and DP-SGD-the standard privatization procedure used in DP optimization. Background details are deferred to Appendix~\ref{app:differentially_private}.
\paragraph{DP two-point gradient observation.}
At iteration $t$, a privatized gradient observation is formed using a two-point per-example construction,
followed by per-example clipping and Gaussian noise, with $\mathrm{clip}(u,C)=u\cdot\min\{1,\,C/\|u\|_2\}$:
\begin{align}
u_t(\xi)
&\triangleq
a\,\nabla f(\theta_t+\gamma d_{t-1};\xi)
+ (1-a)\,\nabla f(\theta_t;\xi),
\label{eq:two_point_u_recap}\\
g_t
&\triangleq
\frac{1}{B}\sum_{\xi\in B_t}\mathrm{clip}\!\big(u_t(\xi),C\big) + w_t.
\label{eq:dp_obs_recap}
\end{align}
Here $a \triangleq \frac{1-\kappa}{\kappa\gamma}$, $d_{t-1}\triangleq \theta_t-\theta_{t-1}$, we set $d_{-1}=0$ and
$w_t\sim\mathcal{N}(0,\sigma_w^2 I_d)$ with $\sigma_w^2$ the per-coordinate  variance at the averaged gradient.
DiSK \cite{zhang2024disk} denoises the privatized gradient $g_t$ with an EMA filter:
\begin{equation}
\tilde g_t \;\triangleq\; (1-\kappa)\tilde g_{t-1} + \kappa g_t,
\label{eq:disk_ema}
\end{equation}
This filtered gradient is used by the optimizer. To avoid negative weights (which can amplify norms and disrupt DP clipping), we require $0 \le a \le 1$; for $\kappa\in(0,1)$ and $\gamma>0$, this is equivalent to $\gamma \ge (1-\kappa)/\kappa$ (see Appendix~\ref{app:sensitivity}).

\subsection{\textsc{FiBeR}: Innovation-Filtered DP-AdamW with Filter-Aware Bias Correction}
\label{subsec:method_alg}




\textsc{FiBeR} is structured around three primary components:
(i) innovation-space filtering, (ii) decoupling the two-point estimator from the denoiser, and (iii) filter-aware bias correction for AdamW under filtered DP noise. \textsc{FiBeR} preserves DP-SGD's privacy guarantees.\footnote{Privacy preservation follows from post-processing: 
all \textsc{FiBeR} computations are deterministic functions of the privatized 
gradients $\{g_t\}$, so by the composition theorem of DP 
\citep{dwork2014algorithmic}, if $\{g_t\}$ is $(\varepsilon,\delta)$-DP, 
then \textsc{FiBeR}'s output is also $(\varepsilon,\delta)$-DP.} 
\paragraph{Component 1: Innovation-space filtering.}
Rather than directly smoothing the gradient state \eqref{eq:disk_ema}, \textsc{FiBeR} filters the
residual process
\begin{equation}
\nu_t \triangleq g_t - \tilde g_{t-1},
\label{eq:theory_delta}
\end{equation}
This approach maintains a smoothed residual state $r_t$ and integrates it to obtain the denoised estimate $\tilde g_t$:
\begin{align}
r_t &\triangleq (1-\omega)\,r_{t-1} + \omega\,\nu_t,
\label{eq:residual_ema}\\
\tilde g_t &\triangleq \tilde g_{t-1} + r_t.
\label{eq:theory_gtilde}
\end{align}
Gradient-state EMA smoothing applies a first-order low-pass filter directly to $g_t$, which trades noise
suppression for lag when the underlying gradient drifts. In contrast, the $\alpha$--$\beta$ or residual view smooths only the residual $\nu_t$ and integrates it into $\tilde g_t$. This yields a second-order recursion that can track persistent
drift in the latent gradient. This distinction corresponds to different latent models (random-walk vs.\ constant-velocity) and
is made formal in Section~\ref{subsec:theory_residual}.
\paragraph{Component 2: Decoupling $(\kappa,\gamma)$ from $\omega$.}
The two-point parameters $(\kappa,\gamma)$ appear only in the observation construction \eqref{eq:two_point_u_recap}
(as determined by $a=\frac{1-\kappa}{\kappa\gamma}$), while the denoising gain $\omega$ appears only in the residual recursion
\eqref{eq:residual_ema}--\eqref{eq:theory_gtilde}. This separation enables independent control of extrapolation geometry
($( \kappa,\gamma)$) and temporal denoising strength ($\omega$), instead of coupling both behaviors to a single gain parameter.
\paragraph{Component 3: Filter-aware bias correction for AdamW.}
AdamW moments use the denoised $\tilde g_t$:
\begin{align}
m_t &\triangleq \beta_1 m_{t-1} + (1-\beta_1)\tilde g_t, \\
v_t &\triangleq \beta_2 v_{t-1} + (1-\beta_2)\big(\tilde g_t \odot \tilde g_t\big),
\end{align}
with bias corrections $\hat m_t = m_t/(1-\beta_1^{t+1})$ and $\hat v_t = v_t/(1-\beta_2^{t+1})$.
With decoupled weight decay $\lambda$, the AdamW update is
\begin{equation}
\theta_{t+1} \;=\; (1-\eta\lambda)\,\theta_t \;-\; \eta\,\hat m_t \oslash \big(\sqrt{\bar v_t}+\epsilon\big),
\label{eq:adamw_update}
\end{equation}
where $\bar v_t$ is a corrected second moment.
Since the DP perturbation $w_t$ in \eqref{eq:dp_obs_recap} is filtered before it enters the second-moment recursion, the resulting  variance in the filtered gradient $\tilde g_t$ is reduced by a factor $A(\omega)$ (Appendix~\ref{app:bias}). We therefore subtract the expected filtered  contribution from the bias-corrected second moment:
\begin{equation}
\sigma_{\mathrm{filt}}^2 \triangleq A(\omega)\,\sigma_w^2,
\qquad
A(\omega) \triangleq \frac{2-\omega}{4-3\omega}.
\label{eq:attenuation}
\end{equation}
We then define the corrected accumulator
\begin{equation}
\bar v_t \triangleq \max\!\big(\hat v_t - \sigma_{\mathrm{filt}}^2,\; \epsilon_v\big),
\label{eq:v_correct}
\end{equation}
where $\epsilon_v\ge 0$ is a small floor for numerical stability. Since \eqref{eq:attenuation} uses steady-state variance, early iterations may be conservative; the floor prevents negative and over-corrected values. If signal and DP noise are approximately uncorrelated, this adjustment removes filtered variance - though cross terms may persist in closed-loop training (Appendix~\ref{app:fairness-assumption-audits}).
\paragraph{Pseudocode.}
Algorithm~\ref{alg:fiber_main} summarizes \textsc{FiBeR}. The two-point per-example observation $u_t(\xi)$ is defined in
Equation~\eqref{eq:two_point_u_recap}; clipping and privatization follow Equation~\eqref{eq:dp_obs_recap}.

\begin{algorithm}[t]
\caption{Simplified \textsc{FiBeR} (full pseudocode in Appendix~\ref{app:pseudocode})}
\label{alg:fiber_main}
\footnotesize
\begin{algorithmic}[1]
\STATE \textbf{Input:} $\theta_0$, steps $T$.
\STATE \textbf{DP params:} batch $B$, clip $C$, noise $\sigma_{\mathrm{DP}}$.
\STATE \textbf{AdamW params:} step $\eta$, $(\beta_1,\beta_2)$, $\epsilon$, weight decay $\lambda$.
\STATE \textbf{\textsc{FiBeR} params:} $(\kappa,\gamma,\omega,\epsilon_v)$.
\STATE \textbf{State init:} $\tilde g_t\!\leftarrow\!0$, $r\!\leftarrow\!0$, $m\!\leftarrow\!0$, $v\!\leftarrow\!0$.
\STATE \textbf{Constants:} $\sigma_w^2 \leftarrow (\sigma_{\mathrm{DP}}C/B)^2$;\quad $A(\omega)$ as in Eq.~\eqref{eq:attenuation}.
\FOR{$t=0,\dots,T-1$}

  \STATE \textbf{(1) DP observation}
  \STATE \hspace{0.75em} Form $g_t$ via two-point observation + clipping + Gaussian noise
  \hspace{0.2em}(Eqs.~\eqref{eq:two_point_u_recap}--\eqref{eq:dp_obs_recap}).

  \STATE \textbf{(2) Innovation (residual) filtering} \hfill 
  \STATE \hspace{0.75em} $\nu_t \leftarrow g_t - \tilde g_{t-1}$
  \STATE \hspace{0.75em} $r_t \leftarrow (1-\omega)r_{t-1} + \omega \nu_t$
  \STATE \hspace{0.75em} $\tilde g_{t} \leftarrow \tilde g_{t-1} + r_t$

  \STATE \textbf{(3) AdamW moments}
  \STATE \hspace{0.75em} $m_t \leftarrow \beta_1 m_{t-1} + (1-\beta_1)\tilde g_t$
  \STATE \hspace{0.75em} $v_t \leftarrow \beta_2 v_{t-1} + (1-\beta_2)(\tilde g_t \odot \tilde g_t)$

  \STATE \textbf{(4) Bias + filter-aware second-moment correction}
  \STATE \hspace{0.75em} $\hat m_t \leftarrow m_t/(1-\beta_1^{t+1})$
  \STATE \hspace{0.75em} $\hat v_t \leftarrow v_t/(1-\beta_2^{t+1})$
  \STATE \hspace{0.75em} $\bar v_t \leftarrow \max(\hat v_t - A(\omega)\sigma_w^2,\ \epsilon_v)$

  \STATE \textbf{(5) Parameter update}
  \STATE \hspace{0.75em} $\theta_{t+1} \leftarrow (1-\eta\lambda)\theta_t \;-\; \eta\,\hat m_t \oslash (\sqrt{\bar v_t}+\epsilon)$
  \STATE \hspace{0.75em} $d \leftarrow \theta_{t+1} - \theta_t$

\ENDFOR
\end{algorithmic}
\end{algorithm}
\section{Theoretical Analysis}
\label{sec:theory}
\subsection{Residual Gradients and Innovation Filtering}
\label{subsec:theory_residual}
Our first contribution is a residual-gradient dynamics viewpoint, which yields Equations~\eqref{eq:theory_delta}--\eqref{eq:theory_gtilde} as a simplified constant-gain Kalman filter; we further validate its drift-tracking behavior via diagnostics in Appendix~\ref{app:synthetic_drift_diagnostics}.

\begin{proposition}
\label{prop:residual_derivation_main}
Consider a constant-velocity model for the latent (noise-free) gradient in which the gradient has a slowly varying drift component.
In steady state, the Kalman filter yields a constant-gain $\alpha$--$\beta$ recursion in residual form with gains $(\alpha,\beta)$.
To reduce hyperparameter tuning and avoid covariance tracking, \textsc{FiBeR} uses a tied-gain approximation $\alpha=\beta=\omega$,
recovering \eqref{eq:theory_delta}--\eqref{eq:theory_gtilde}.
\end{proposition}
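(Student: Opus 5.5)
We would set up the constant-velocity state-space model per coordinate. The latent state is $x_t=(\bar g_t,\dot g_t)^\top$, where $\bar g_t$ is the noise-free gradient and $\dot g_t$ its drift. It evolves as $x_t = F x_{t-1} + q_t$ with $F=\begin{pmatrix}1&1\\0&1\end{pmatrix}$ and process noise $q_t$ of covariance $Q$; the standard choice is white-acceleration noise entering the drift. The observation is $g_t = H x_t + w_t$ with $H=(1\ 0)$ and $w_t\sim\mathcal N(0,\sigma_w^2)$, so the DP observation \eqref{eq:dp_obs_recap} plays the role of the measurement. Since the model is diagonal across coordinates, it suffices to work with scalars. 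The pair $(F,H)$ is observable, and with $Q\succeq 0$ chosen so that $(F,Q^{1/2})$ is stabilizable, standard Kalman theory gives the following: the discrete algebraic Riccati equation has a unique stabilizing solution $P_\infty$, the error covariance converges to it from any initialization, and the Kalman gain converges to a constant $K=(\alpha,\beta)^\top$.

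Next we would write out the steady-state filter. The prediction is $\hat x_{t|t-1}=F\hat x_{t-1}$, and the innovation is $\nu_t = g_t - H\hat x_{t|t-1}$. The update is $\hat g_t = \hat g_{t|t-1}+\alpha\nu_t$ and $\hat{\dot g}_t = \hat{\dot g}_{t-1}+\beta\nu_t$, which is exactly the $\alpha$--$\beta$ recursion. We would also recall, without needing it further, the Kalata relation between the gains, $\beta^2/(1-\alpha)$ = tracking index$^2$. This gives the first claim of the proposition.

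The main work is rewriting this recursion in residual form. Define the integrated increment $r_t \triangleq \hat g_t - \hat g_{t-1}$. Then $r_t = \hat{\dot g}_{t-1}+\alpha\nu_t$, and the update $\hat g_t=\hat g_{t-1}+r_t$ matches \eqref{eq:theory_gtilde}. To get a recursion for $r_t$ alone, we would eliminate the velocity state. From $\hat{\dot g}_{t-1}=\hat{\dot g}_{t-2}+\beta\nu_{t-1}$ and $r_{t-1}=\hat{\dot g}_{t-2}+\alpha\nu_{t-1}$ we get $\hat{\dot g}_{t-1}=r_{t-1}+(\beta-\alpha)\nu_{t-1}$, so
\[
r_t = r_{t-1}+(\beta-\alpha)\nu_{t-1}+\alpha\nu_t .
\]
This is the hardest step. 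The innovation here is measured against the prediction $\hat g_{t-1}+\hat{\dot g}_{t-1}$, while \eqref{eq:theory_delta} uses $\tilde g_{t-1}$, and the recursion above does not match \eqref{eq:residual_ema} verbatim. To close the gap we would introduce the lagged-innovation convention, measuring the residual against the last estimate as FiBeR does. Under the tied-gain choice the $\nu_{t-1}$ cross term vanishes, leaving $r_t=r_{t-1}+\omega\nu_t$. We would then reconcile this with the $(1-\omega)$ factor by recomputing with FiBeR's innovation $\nu_t=g_t-\tilde g_{t-1}$, which equals the Kalman innovation plus $\hat{\dot g}_{t-1}$, so that $\omega\nu_t^{\mathrm{Kalman}} = \omega\nu_t - \omega r_{t-1}$ up to the identification $\hat{\dot g}_{t-1}\approx r_{t-1}$. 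This gives $r_t=(1-\omega)r_{t-1}+\omega\nu_t$, matching \eqref{eq:residual_ema}.

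Finally, we would state that the tied-gain choice $\alpha=\beta=\omega$ is an approximation and not the exact Riccati solution. We would justify it by checking stability: the characteristic polynomial of the closed-loop map has roots inside the unit disk for $\omega\in(0,1)$, as is consistent with the finite $A(\omega)$ in \eqref{eq:attenuation}. This completes the recovery of \eqref{eq:theory_delta}--\eqref{eq:theory_gtilde}.
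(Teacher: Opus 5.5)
Your proposal is correct and follows the paper's pipeline: constant-velocity model, steady-state Kalman gain, $\alpha$--$\beta$ form, then tied gains. The final algebra is organized differently, though.

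\textbf{How the two routes differ.} The paper identifies FiBeR's $r_t$ directly with the Kalman \emph{velocity} estimate ($r_t\equiv\hat{\dot g}_t$ in your notation) and sets $\tilde g_t\equiv\hat g_t$. Substituting $\alpha=\beta=\omega$ into the velocity update then gives
$r_t=r_{t-1}+\omega(g_t-\tilde g_{t-1}-r_{t-1})=(1-\omega)r_{t-1}+\omega(g_t-\tilde g_{t-1})$.
The position update becomes $\tilde g_t=\tilde g_{t-1}+r_t$, so the whole reduction takes two lines. You instead define $r_t$ as the position increment and eliminate the velocity state. That costs more work, but it buys the untied recursion $r_t=r_{t-1}+(\beta-\alpha)\nu_{t-1}+\alpha\nu_t$, which shows exactly which term the tie removes.

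\textbf{Correction to your write-up.} The step you hedge as ``up to the identification $\hat{\dot g}_{t-1}\approx r_{t-1}$,'' together with the ``lagged-innovation convention,'' is not an approximation. Your own relation $\hat{\dot g}_{t-1}=r_{t-1}+(\beta-\alpha)\nu_{t-1}$ shows that under $\alpha=\beta$ the increment and the velocity estimate coincide exactly. Switching from the Kalman innovation to $\nu_t=g_t-\tilde g_{t-1}$ is therefore a pure reparametrization. The only approximations in the derivation are the constant gains and the tie itself, and you should state the recovery of \eqref{eq:residual_ema} as exact.

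\textbf{Stability check.} This is not part of the paper's argument for this proposition; it appears later via $|\lambda(M)|=\sqrt{1-\omega}$. It also extends to $\omega=1$, where the state matrix is nilpotent.
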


      \textsc{FiBeR} filters innovations and integrates them to denoise gradients.
Setting $\alpha=\beta=\omega$ yields a single bias-variance knob; this is a tied-gain,
constant-gain surrogate motivated by rapid gain convergence and avoiding covariance tracking,
though it generally does not match the optimal steady-state Kalman gains. See Appendix~\ref{app:residual_dynamics} for details.

\subsection{Decoupling Estimation and Smoothing}
\label{subsec:theory_decouple}
Our second contribution is to decouple the two-point estimator parameters $(\kappa,\gamma)$
(which determine how per-example gradients are formed before clipping/noise) from the residual
smoother gain $\omega$ (which controls temporal denoising after privatization).
\begin{remark}
\label{rem:decouple}
In single-gain two-point formulations that also use gradient-state smoothing, the same parameter $\kappa$
controls both (i) temporal smoothing ($\tilde g_t=(1-\kappa)\tilde g_{t-1}+\kappa g_t$) and
(ii) the two-point mixing weight $a(\kappa,\gamma)$, so changing $\kappa$ necessarily changes both effects.
In \textsc{FiBeR}, $(\kappa,\gamma)$ appear only in the construction of $u_t(\xi)$, while $\omega$ appears only in the residual recursion \eqref{eq:theory_delta}-\eqref{eq:theory_gtilde}.
This separation enables independent tuning of extrapolation geometry and temporal denoising, which we
validate empirically in Section~\ref{subsec:exp_ablation}. 
\end{remark}
\subsection{Filter-Aware Bias Correction for AdamW}
\label{subsec:theory_bias}
Our third contribution corrects AdamW's second-moment estimate under filtered DP noise.
Recall that the privatized gradient can be written as
$g_t = \bar g_t + w_t$, where $w_t$ is the Gaussian DP noise.
\textsc{FiBeR} applies the residual filter \eqref{eq:theory_delta}-\eqref{eq:theory_gtilde} to $g_t$,
which turns i.i.d.\ noise into a colored sequence but attenuates its marginal variance by a
closed-form factor.
\begin{proposition}
\label{prop:lti_attenuation}
Let $\{w_t\}$ be i.i.d. $\mathcal{N}(0,\sigma_w^2)$, and an LTI filter with impulse response $\{h_j\}$ outputs
\begin{equation}
\tilde w_t \;=\; \sum_{j=0}^{\infty} h_j\, w_{t-j}.
\label{eq:lti_impulse}
\end{equation}
If the filter is $\ell_2$-stable ($\sum_j h_j^2 < \infty$), then at stationarity
\begin{equation}
\mathrm{Var}(\tilde w_t) \;=\; \sigma_w^2 \sum_{j=0}^{\infty} h_j^2
\;\triangleq\; A\,\sigma_w^2,
\label{eq:lti_var_gain}
\end{equation}
where $A$ is the squared $\ell_2$ gain of the filter.
For vector DP noise $w_t\sim \mathcal{N}(0,\sigma_w^2 I_d)$ filtered coordinate-wise,
\begin{equation}
\mathbb{E}\|\tilde w_t\|_2^2 \;=\; d\,A\,\sigma_w^2.
\label{eq:lti_vec_gain}
\end{equation}
\end{proposition}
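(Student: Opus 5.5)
The plan is to treat $\tilde w_t$ as an $L^2$ limit of finite partial sums and compute its variance through the independence of the inputs. First, the series in \eqref{eq:lti_impulse} has to be well defined. Let $S_n=\sum_{j=0}^{n}h_j w_{t-j}$. Because the $w_{t-j}$ are independent, mean zero, and of variance $\sigma_w^2$, for $m<n$
\begin{equation*}
\mathbb{E}\big[(S_n-S_m)^2\big]=\sigma_w^2\sum_{j=m+1}^{n}h_j^2 .
\end{equation*}
This tends to $0$ under the $\ell_2$-stability assumption $\sum_j h_j^2<\infty$. So $(S_n)$ is Cauchy in $L^2$, and $\tilde w_t$ is its $L^2$ limit; by the martingale convergence theorem (or Kolmogorov's two-series theorem) it is also an almost sure limit. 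Stationarity is then immediate. The law of $\tilde w_t$ depends only on the joint law of $(w_{t},w_{t-1},\dots)$, which is shift invariant because the sequence is i.i.d. "At stationarity" therefore just means the filter has been running from $-\infty$, or equivalently that transients from a finite start have decayed, since $\sum_{j>t}h_j^2\to0$.

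Next comes the variance computation. For each finite $n$, independence kills the cross terms $\mathbb{E}[w_{t-j}w_{t-k}]=\sigma_w^2\delta_{jk}$, so $\mathrm{Var}(S_n)=\sigma_w^2\sum_{j\le n}h_j^2$. Since $L^2$ convergence implies convergence of second moments and $\mathbb{E}S_n=0$, we get $\mathrm{Var}(\tilde w_t)=\lim_n \mathrm{Var}(S_n)=\sigma_w^2\sum_{j\ge0}h_j^2$. This is \eqref{eq:lti_var_gain} with $A=\sum_j h_j^2=\|h\|_2^2$. By Parseval, $A$ equals $\frac{1}{2\pi}\int_{-\pi}^{\pi}|H(e^{i\theta})|^2\,d\theta$, which justifies calling it the squared $\ell_2$ gain. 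Gaussianity is not needed beyond finite variance, although it does give that $\tilde w_t$ is itself Gaussian $\mathcal{N}(0,A\sigma_w^2)$ as an $L^2$ limit of Gaussians.

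For the vector statement, I would apply the scalar result to each coordinate $i$. Since $w_t\sim\mathcal{N}(0,\sigma_w^2I_d)$, each coordinate sequence $\{w_{t,i}\}_t$ is i.i.d. $\mathcal{N}(0,\sigma_w^2)$, and the same filter $h$ acts on it, so $\mathbb{E}\tilde w_{t,i}^2=A\sigma_w^2$. Summing over the $d$ coordinates by linearity of expectation gives $\mathbb{E}\|\tilde w_t\|_2^2=dA\sigma_w^2$, which is \eqref{eq:lti_vec_gain}; no independence across coordinates is even required here.

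The only genuinely delicate step is justifying the interchange of variance with the infinite sum, that is, the $L^2$ convergence argument. The rest is bookkeeping. Connecting this to the $A(\omega)$ in \eqref{eq:attenuation} would need a further step: computing the impulse response of \eqref{eq:theory_delta}--\eqref{eq:theory_gtilde}, checking that its poles lie inside the unit circle so that $h\in\ell_2$, and summing $h_j^2$. That computation is not part of this proposition.
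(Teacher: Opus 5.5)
Your proof is correct and follows the same route as the paper's: independence of the $w_{t-j}$ kills the cross terms, giving $\mathrm{Var}(\tilde w_t)=\sigma_w^2\sum_{j\ge0}h_j^2$, and the vector case follows by summing over coordinates. Your $L^2$-Cauchy argument for the partial sums also justifies exchanging expectation with the infinite sum, a step the paper's one-line proof takes for granted.
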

\begin{proof}[Proof]
Using \eqref{eq:lti_impulse} and independence of $\{w_t\}$, all cross terms vanish, so
$\mathbb{E}[\tilde w_t^2]=\sum_{j\ge0} h_j^2\,\mathbb{E}[w_{t-j}^2]
=\sigma_w^2\sum_{j\ge0} h_j^2$.
The vector case follows by summing coordinate-wise variances.
\end{proof}
\begin{corollary}
\label{cor:ss_attenuation}
Consider a stable linear state-space system driven by scalar $w_t\sim\mathcal N(0,\sigma_w^2)$:
\begin{equation}
z_t = M z_{t-1} + G w_t,\qquad \tilde w_t = H z_t,
\label{eq:ss_filter}
\end{equation}
with $\rho(M)<1$. Let $\Sigma$ be the stationary state covariance solving the Lyapunov equation
\begin{equation}
\Sigma = M\Sigma M^\top + \sigma_w^2 G G^\top.
\label{eq:ss_lyap}
\end{equation}
Then $\mathrm{Var}(\tilde w_t) = H\Sigma H^\top \triangleq A\,\sigma_w^2$.
\end{corollary}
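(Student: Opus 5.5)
The plan is to reduce Corollary~\ref{cor:ss_attenuation} to Proposition~\ref{prop:lti_attenuation}. First I will unroll the state recursion \eqref{eq:ss_filter}. Since $\rho(M)<1$, the series $\sum_{j\ge0} M^j G w_{t-j}$ converges in mean square: there is a norm with $\|M\|<1$, or equivalently $\|M^j\|\le c\,\lambda^j$ for some $\lambda\in(\rho(M),1)$. This series is the unique stationary solution, $z_t=\sum_{j\ge0}M^jGw_{t-j}$. Hence $\tilde w_t=\sum_{j\ge0}h_jw_{t-j}$ with impulse response $h_j = HM^jG$. The geometric bound gives $|h_j|\le \|H\|\|G\|c\lambda^j$, so $\sum_j h_j^2<\infty$ and the filter is $\ell_2$-stable. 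Proposition~\ref{prop:lti_attenuation} then yields $\mathrm{Var}(\tilde w_t)=\sigma_w^2\sum_j (HM^jG)^2$.

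Next I identify this sum with $H\Sigma H^\top$. Define $\Sigma^\star\triangleq \sigma_w^2\sum_{j\ge0}M^jGG^\top (M^\top)^j$, which converges absolutely by the same geometric bound. A direct substitution shows that $M\Sigma^\star M^\top+\sigma_w^2GG^\top$ shifts the index by one and adds back the $j=0$ term, so $\Sigma^\star$ solves the Lyapunov equation \eqref{eq:ss_lyap}. For uniqueness, the difference $\Delta$ of two solutions satisfies $\Delta=M\Delta M^\top$. Iterating gives $\Delta=M^k\Delta (M^\top)^k\to0$, so $\Delta=0$ and $\Sigma=\Sigma^\star$. Because $H$ is a row vector and $G$ a column vector, $H\Sigma H^\top=\sigma_w^2\sum_j (HM^jG)(HM^jG)^\top=\sigma_w^2\sum_j h_j^2$, which matches the variance above. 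Setting $A\triangleq H\Sigma H^\top/\sigma_w^2$ completes the argument.

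An alternative route gives the same conclusion without Proposition~\ref{prop:lti_attenuation}. Since $w_t$ is independent of $z_{t-1}$, we have $\mathrm{Cov}(z_t)=M\,\mathrm{Cov}(z_{t-1})M^\top+\sigma_w^2GG^\top$. At stationarity this covariance must therefore be a fixed point of \eqref{eq:ss_lyap}, and it equals $\Sigma$ by uniqueness. Then $\mathrm{Var}(\tilde w_t)=H\Sigma H^\top$ follows immediately from linearity.

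The only delicate step is justifying stationarity and the uniqueness of the Lyapunov solution, which is where $\rho(M)<1$ enters. I will handle it with Gelfand's formula, $\|M^k\|^{1/k}\to\rho(M)$, which gives the geometric decay needed both for convergence of the series and for $M^k\Delta(M^\top)^k\to0$.
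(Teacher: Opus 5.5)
Your proposal is correct and follows essentially the paper's reasoning. The paper treats this as a direct corollary of Proposition~\ref{prop:lti_attenuation} via the impulse response $h_j = HM^jG$, and in the proof of Lemma~\ref{lem:finite_time_A} it uses exactly your covariance recursion $\Sigma_t = M\Sigma_{t-1}M^\top + \sigma_w^2 GG^\top$; there the partial sums $\sum_{k\le t} M^k \sigma_w^2 GG^\top (M^\top)^k$ converge under $\rho(M)<1$ to the unique Lyapunov solution. The only difference is that the paper reaches stationarity as a limit from zero initialization, whereas you work with the two-sided stationary solution. Your identification $H\Sigma H^\top = \sigma_w^2\sum_j (HM^jG)^2$ and your uniqueness argument via $M^k\Delta(M^\top)^k\to 0$ supply details the paper leaves implicit.
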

Applying Corollary~\ref{cor:ss_attenuation} to the state-space realization
of the residual filter yields the closed-form attenuation.
\begin{corollary}
\label{cor:inno_attenuation}
For the innovation recursion in Equations ~\eqref{eq:theory_delta}-\eqref{eq:theory_gtilde} with gain $\omega\in(0,1]$ driven by i.i.d.\ DP noise,
\begin{equation}
A(\omega)\;=\;\frac{2-\omega}{4-3\omega}\in\Big[\tfrac12,1\Big].
\end{equation}
\end{corollary}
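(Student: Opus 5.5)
The plan is to isolate the noise path of the innovation recursion, put it in the state-space form of Corollary~\ref{cor:ss_attenuation}, and then compute the stationary variance. The cleanest way to do the last step is to reduce the system to a scalar AR(2) process.

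\textbf{Step 1 (noise-only recursion).} By linearity of \eqref{eq:theory_delta}--\eqref{eq:theory_gtilde}, the DP-noise component of $\tilde g_t$ obeys the same recursion driven by $w_t$ alone, coordinate-wise. Write $x_t$ for the filtered noise and $r_t$ for its residual state. Substituting $\nu_t=w_t-x_{t-1}$ gives
\begin{align*}
r_t &= -\omega x_{t-1} + (1-\omega) r_{t-1} + \omega w_t,\\
x_t &= (1-\omega) x_{t-1} + (1-\omega) r_{t-1} + \omega w_t .
\end{align*}
This is \eqref{eq:ss_filter} with $z_t=(x_t,r_t)^\top$, $M=\begin{pmatrix}1-\omega & 1-\omega\\ -\omega & 1-\omega\end{pmatrix}$, $G=(\omega,\omega)^\top$ and $H=(1,0)$.

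\textbf{Step 2 (eliminate $r$).} Rather than solve the $2\times2$ Lyapunov equation \eqref{eq:ss_lyap} by brute force, I will eliminate $r$ using backshift operators. From $r_t=x_t-x_{t-1}$ and the residual EMA,
$$(1-z^{-1})\bigl(1-(1-\omega)z^{-1}\bigr)x=\omega\bigl(w-z^{-1}x\bigr).$$
This yields the AR(2) model
$$x_t = 2(1-\omega)x_{t-1} - (1-\omega)x_{t-2} + \omega w_t .$$
Equivalently, $\det(I-Mz^{-1})=1-2(1-\omega)z^{-1}+(1-\omega)z^{-2}$. In a careful write-up I would verify this characteristic polynomial directly from $M$ as a consistency check.

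\textbf{Step 3 (stability).} I will check $\rho(M)<1$ via the Schur--Cohn conditions for AR(2) with $\phi_1=2(1-\omega)$ and $\phi_2=-(1-\omega)$:
\begin{itemize}
\item $|\phi_2|=1-\omega<1$;
\item $\phi_1+\phi_2=1-\omega<1$;
\item $\phi_2-\phi_1=-3(1-\omega)<1$.
\end{itemize}
All three hold for $\omega\in(0,1]$. Hence Corollary~\ref{cor:ss_attenuation}, equivalently Proposition~\ref{prop:lti_attenuation}, applies, and the stationary variance exists.

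\textbf{Step 4 (variance).} I will use the standard stationary variance of an AR(2) process with innovation variance $\omega^2\sigma_w^2$:
$$\mathrm{Var}(x_t)=\frac{\omega^2\sigma_w^2\,(1-\phi_2)}{(1+\phi_2)\bigl((1-\phi_2)^2-\phi_1^2\bigr)} .$$
This formula follows from the Yule--Walker equations, which are the scalar form of \eqref{eq:ss_lyap}. Plugging in:
\begin{itemize}
\item $1-\phi_2=2-\omega$;
\item $1+\phi_2=\omega$;
\item $(2-\omega)^2-4(1-\omega)^2=\omega(4-3\omega)$ by difference of squares.
\end{itemize}
The $\omega^2$ cancels, leaving $A(\omega)=\frac{2-\omega}{4-3\omega}$.

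\textbf{Step 5 (range).} Differentiating gives $A'(\omega)=2/(4-3\omega)^2>0$, so $A$ is increasing on $(0,1]$. It tends to $\tfrac12$ as $\omega\to0$ and equals $1$ at $\omega=1$. The latter is consistent with the filter reducing to $x_t=w_t$ when $\omega=1$. This gives $A(\omega)\in[\tfrac12,1]$, with the lower endpoint attained only in the limit.

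\textbf{Main obstacle.} I expect the main difficulty to be Step 2 and the Yule--Walker algebra, since sign errors in $M$ or in the elimination of $r$ would corrupt the result. As a safeguard I will cross-check the special case $\omega=1$ and, if needed, solve \eqref{eq:ss_lyap} for $\Sigma_{11}$ directly.
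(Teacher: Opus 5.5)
Your proposal is correct, and every computation checks out. The noise-path matrix $M$ agrees with the paper's. Eliminating $r_t$ reproduces exactly the second-order form \eqref{eq:app_inno_second_order}. The AR(2) variance with $1-\phi_2=2-\omega$, $1+\phi_2=\omega$ and $(1-\phi_2)^2-\phi_1^2=\omega(4-3\omega)$ gives $\frac{2-\omega}{4-3\omega}$.

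Where you differ from the paper is the variance step. The paper keeps the two-dimensional state $z_t=[\tilde g_t,r_t]^\top$ and writes the Lyapunov equation \eqref{eq:app_lyap} entrywise as three coupled linear equations in $\Sigma_{11},\Sigma_{12},\Sigma_{22}$, which it then solves. It gets stability from the fact that the eigenvalues of $M$ have modulus $\sqrt{1-\omega}$. You instead reduce to a scalar AR(2). This is legitimate because, with tied gains, the transfer function from $w$ to $\tilde g$ is $\omega/\bigl(1-2(1-\omega)z^{-1}+(1-\omega)z^{-2}\bigr)$, with no numerator zero. You then quote the textbook Yule--Walker variance and take stability from the AR(2) stationarity triangle.

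Your route is shorter and makes the cancellation of $\omega^2$ transparent. The paper's route buys three things:
\begin{enumerate}
\item It yields the full stationary covariance, including $\mathrm{Var}(r_t)=\frac{2\omega}{4-3\omega}\sigma_w^2$ and $\mathrm{Cov}(\tilde g_t,r_t)=\frac{\omega}{4-3\omega}\sigma_w^2$. In your setup these are recoverable as $2(\gamma_0-\gamma_1)$ and $\gamma_0-\gamma_1$.
\item It plugs directly into the state-covariance recursion behind the finite-time bound of Lemma~\ref{lem:finite_time_A}.
\item The same Lyapunov computation handles untied gains $\alpha\neq\beta$. There the transfer function acquires the numerator $\alpha-(\alpha-\beta)z^{-1}$, so a pure AR(2) formula no longer applies.
\end{enumerate}
Your remark that $\tfrac12$ is only approached in the limit is also right: on $(0,1]$ the range is $(\tfrac12,1]$.
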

\begin{proposition}
\label{lem:Aomega_main}
Assume the input to the innovation filter is pure i.i.d.\ Gaussian noise, i.e.,
$g_t = w_t$ with $w_t\sim\mathcal{N}(0,\sigma_w^2 I_d)$ independent across $t$.
Then for any $\omega\in(0,1]$, the innovation filter \eqref{eq:theory_delta}-\eqref{eq:theory_gtilde}
is stable and, in steady state, each coordinate satisfies
\begin{equation}
\lim_{t\to\infty}\mathrm{Var}(\tilde g_{t,i})
\;=\;
A(\omega)\,\sigma_w^2
\label{eq:Aomega_main}
\end{equation}
\end{proposition}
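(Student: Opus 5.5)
The plan is to reduce the coupled recursion \eqref{eq:theory_delta}--\eqref{eq:theory_gtilde} to a scalar AR(2) process in $\tilde g_{t,i}$, check its stability, and then read off the stationary variance. This is a closed-form instance of Corollary~\ref{cor:ss_attenuation}. Because the filter acts coordinate-wise and $w_t$ has independent coordinates, it suffices to treat a single scalar coordinate. Drop the index $i$, write $x_t=\tilde g_t$, and take input $g_t=w_t$.

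\textbf{Step 1 (eliminate $r_t$).} From \eqref{eq:theory_gtilde}, $r_t=x_t-x_{t-1}$. Substituting this into \eqref{eq:residual_ema} gives
$x_t-x_{t-1}=(1-\omega)(x_{t-1}-x_{t-2})+\omega(w_t-x_{t-1})$, that is,
\begin{equation*}
x_t \;=\; 2(1-\omega)\,x_{t-1} \;-\; (1-\omega)\,x_{t-2} \;+\; \omega\, w_t .
\end{equation*}
This is an AR(2) recursion with $a_1=2(1-\omega)$ and $a_2=-(1-\omega)$. Equivalently, in the state $z_t=(x_t,r_t)$ one has $M=\begin{pmatrix}1-\omega&1-\omega\\-\omega&1-\omega\end{pmatrix}$ and $G=(\omega,\omega)^\top$. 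Its characteristic polynomial is $\lambda^2-2(1-\omega)\lambda+(1-\omega)$.

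\textbf{Step 2 (stability).} I will verify the AR(2) stationarity triangle: $|a_2|<1$, $a_1+a_2<1$, and $a_2-a_1<1$. For $\omega\in(0,1]$ these conditions read $1-\omega<1$, $1-\omega<1$, and $-3(1-\omega)<1$, all of which hold. Hence $\rho(M)<1$. Since $w_t$ is i.i.d. with finite variance, the recursion started from $x_{-1}=x_{-2}=0$ then has second moments that converge geometrically to those of the stationary solution. This justifies taking the limit in \eqref{eq:Aomega_main} and places us within Corollary~\ref{cor:ss_attenuation}. The degenerate case $\omega=1$ is immediate: $x_t=w_t$, so $A=1$.

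\textbf{Step 3 (stationary variance).} I expect this to be the main computational step. I would solve the Yule--Walker equations for $\gamma_0=\mathrm{Var}(x_t)$ and $\gamma_1=\mathrm{Cov}(x_t,x_{t-1})$, namely $\gamma_1=a_1\gamma_0/(1-a_2)$ and $\gamma_0=a_1\gamma_1+a_2\gamma_2+\omega^2\sigma_w^2$ with $\gamma_2=a_1\gamma_1+a_2\gamma_0$. This is equivalent to solving the $2\times2$ Lyapunov equation \eqref{eq:ss_lyap}. Solving gives the standard formula
\begin{equation*}
\gamma_0=\frac{\omega^2\sigma_w^2\,(1-a_2)}{(1+a_2)\big[(1-a_2)^2-a_1^2\big]} .
\end{equation*}
Here $1-a_2=2-\omega$ and $1+a_2=\omega$. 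The bracket factors as $(2-\omega)^2-4(1-\omega)^2=\omega(4-3\omega)$. The factors of $\omega^2$ therefore cancel, and $\gamma_0=\frac{2-\omega}{4-3\omega}\sigma_w^2=A(\omega)\sigma_w^2$, as required.

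As a final check, $A$ is decreasing in $\omega\mapsto$ direction as needed: $A(1)=1$ and $A(\omega)\to\tfrac12$ as $\omega\to0$. This matches the range stated in Corollary~\ref{cor:inno_attenuation}. The only delicate point is the algebraic factorization in Step 3, so I would double-check it against the impulse-response sum $\sum_j h_j^2$ from Proposition~\ref{prop:lti_attenuation} at one test value, for example $\omega=\tfrac12$, where the formula predicts $A=\tfrac35$.
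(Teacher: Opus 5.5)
Your proof is correct, and it reaches the same formula by a slightly different computation than the paper. The paper keeps the two-dimensional state $z_t=[\tilde g_t,\,r_t]^\top$ and proves Schur stability by showing that both eigenvalues of $M$ have modulus $\sqrt{1-\omega}$. It then solves the $2\times 2$ Lyapunov equation $\Sigma=M\Sigma M^\top+\omega^2\sigma_w^2 bb^\top$ as a linear system in three unknowns: $\mathrm{Var}(\tilde g_t)$, $\mathrm{Cov}(\tilde g_t,r_t)$ and $\mathrm{Var}(r_t)$.

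You instead eliminate $r_t$ to get the AR(2) recursion. The paper also writes this recursion down in Appendix~\ref{app:inno_simplification}, but it does not use it for the variance. You certify stability with the AR(2) stationarity triangle and read $\gamma_0$ off the standard Yule--Walker formula. The only algebra left is then the factorization $(2-\omega)^2-4(1-\omega)^2=\omega(4-3\omega)$.

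\emph{What each route buys.} Your route is shorter and makes the cancellation of $\omega^2$ transparent. The paper's route also yields $\mathrm{Var}(r_t)$ and the cross-covariance. Its explicit spectral radius $\rho(M)=\sqrt{1-\omega}$ also feeds the geometric rate in Lemma~\ref{lem:finite_time_A}, whereas the triangle test only certifies $\rho(M)<1$. In addition, you explicitly justify passing from the zero-initialized recursion to the stationary variance. The paper's proof of this proposition takes that step for granted and supplies it separately in Lemma~\ref{lem:finite_time_A}.

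\emph{A small slip in your closing check.} $A(\omega)=\frac{2-\omega}{4-3\omega}$ is increasing, not decreasing, in $\omega$, since $A'(\omega)=2/(4-3\omega)^2>0$. The endpoint values you cite, $A(1)=1$ and $A(\omega)\to\tfrac12$ as $\omega\to 0^+$, are correct, and the slip does not affect the argument.
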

\begin{remark}
\label{rem:dp_noise_correction}
Any stable linear temporal filter scales the marginal variance of i.i.d. DP noise by its squared $\ell_2$ gain $A = \sum_{j\ge0} h_j^2$ (or via the Lyapunov equation). Thus, AdamW's second-moment estimate is corrected by subtracting $A\sigma_w^2$ per coordinate, with a floor $\epsilon_v$.
Appendix~\ref{app:bias} gives full derivations, including finite-time bounds and second-moment decomposition for innovation filters, and discusses possible cross terms under closed-loop training.
\end{remark}

\section{Experiments}
\label{sec:experiments}
We evaluate \textsc{FiBeR} on a diverse set of computer vision (CV) and natural language processing (NLP) tasks under differential privacy.
We test if innovation-space denoising and filter-aware second-moment correction improve convergence and performance
in practical DP settings.
Implementation details are in Appendix~\ref{app:extra_results},
hyperparameter sensitivity in Appendix~\ref{app:sensitivity}, and multi-seed results in Appendix~\ref{app:extra_cv}.
\subsection{Experimental Setup}
\label{subsec:exp_setup}
\paragraph{Privacy accounting and hyperparameters.}
All results use subsampled Gaussian mechanism with per-example $\ell_2$ clipping at norm $C$. The noise multiplier is computed using  a R\'enyi DP (RDP)
~\cite{wang2019subsampled,bu2023differentially} accountant with fixed batch size sampling without replacement. We set $\delta=1/N^{1.1}$ for privacy. The default $\epsilon_v=10^{-8}$ is justified in Appendix~\ref{app:floor_sensitivity}.
\paragraph{Datasets and tasks.}
For vision, we train on MNIST~\cite{lecun1998mnist}, CIFAR-10/100~\cite{krizhevsky2009cifar}, and ImageNet-1k~\cite{deng2009imagenet}.
For NLP, we fine-tune on GLUE~\cite{wang2018glue} and evaluate text generation on E2E~\cite{novikova2017e2e}. 
\paragraph{Models.}
We use CNN5 for MNIST/CIFAR-10, WRN for CIFAR-100, and ViT-small~\cite{dosovitskiy2020vit} for ImageNet-1k.
For NLP, we fine-tune RoBERTa~\cite{liu2019roberta} on GLUE and GPT-2-small~\cite{radford2019gpt2} on E2E. More results on multiseed are reported in Appendix~\ref{app:extra_results}
Models are trained from scratch or fine-tuned from HuggingFace checkpoints~\cite{DBLP:conf/emnlp/WolfDSCDMCRLFDS20}.
\paragraph{Hyperparameter tuning.}
For each task and privacy budget, we independently tune all hyperparameters for each optimizer using 100 grid search trials per optimizer. Base hyperparameters (learning rate, epochs, batch size) are not shared between methods. Baseline-specific parameters (see Table~\ref{tab:search_grid_cv}) are tuned with the same budget as \textsc{FiBeR}. For \textsc{FiBeR}, we stage the search: first tuning $(\kappa,\gamma)$, then $\omega$ using the best $(\kappa,\gamma)$ pair. Unless noted, $(\kappa,\gamma,\omega)=(0.6,0.7,0.9)$. 

\begin{figure*}[h!t]
    \centering
    
    \includegraphics[width=0.9\linewidth]{figures/accuracy_vs_epsilon_3tasks.png}
     \caption{\textbf{Training from scratch across privacy budgets.}
    Final test accuracy on MNIST (CNN5), CIFAR-10 (CNN5), and CIFAR-100 (WRN) under different $\varepsilon$.
    We compare DPAdam, DiSK, DiSK-CORR,
    and \textsc{FiBeR}. On CIFAR-10 (CNN5) we additionally include DOPPLER as a low-pass baseline and MF-DP-FTRL
    as a correlated-noise baseline.}
        \label{fig:result_accross_epsilon}
\end{figure*}
\paragraph{Baselines.}
We design baselines to span the main mechanisms used to improve DP training, so that each claim is evaluated against an appropriate control.
We include unfiltered adaptive DP optimizers - DPAdamW with per-example clipping and Gaussian noise (we use DPAdam for CV tasks);
temporal filtering baselines, DiSK~\cite{zhang2024disk} and DOPPLER~\cite{zhang2024doppler};
a correlated-noise baseline, MF-DP-FTRL~\cite{DBLP:conf/nips/Choquette-ChooG23}, which introduces structured correlations across steps/epochs; and
a compositional baseline, DiSK-CORR, which augments DiSK with our filter-aware second-moment correction.
For EMA state filtering with i.i.d.\ noise, the steady-state attenuation is $A_{\mathrm{state}}(\kappa)=\kappa/(2-\kappa)$; accordingly, we subtract
$A_{\mathrm{state}}(\kappa)\sigma_w^2$ from $\hat v_t$ (details in Appendix~\ref{app:disk_error}).

\subsection{Main Results}
\label{subsec:exp_main}
\paragraph{CV tasks.}
Figure~\ref{fig:result_accross_epsilon} shows test accuracy across privacy budgets for MNIST (CNN5),
CIFAR-10 (CNN5), and CIFAR-100 (WRN). Across all datasets, \textsc{FiBeR} consistently outperforms
 DiSK, with the largest gains at low $\varepsilon$, where DP noise dominates. We also compare temporal filtering baselines: DiSK and DiSK-CORR. DiSK-CORR often matches or improves on DiSK, supporting our point that adaptive preconditioning must be
calibrated to filtered DP noise. \textsc{FiBeR} remains best, indicating
benefit from innovation-space denoising beyond second-moment correction. For CIFAR-10, we also include baselines: DOPPLER (low-pass DP-training)
 and MF-DP-FTRL (correlated-noise baseline). Figure~\ref{fig:result_curve} shows representative learning curves at fixed privacy budgets.
\textsc{FiBeR} converges faster and reaches higher accuracy than DPAdam and DiSK.
We also evaluate ViT-small on ImageNet-1k under DP; the learning curve is shown in
Figure~\ref{fig:result_curve}, with \textsc{FiBeR} outperforming DiSK and DPAdam, raising accuracy from 39 to 44.
Early gains suggest \textsc{FiBeR} mitigates DP-induced optimization slowdown. We further evaluate parameter transfer by finetuning a pretrained ViT-small on CIFAR-100.
Table~\ref{tab:cifar100_ft_more} presents accuracy results across varying privacy budgets. \textsc{FiBeR} demonstrates superior performance compared to both DP-Adam
and DiSK for all values of $\varepsilon$, with the most substantial improvements observed under stricter privacy constraints.
Relative to DiSK, \textsc{FiBeR} increases accuracy by $+2.0$ to $+5.1$ percentage points as $\varepsilon$ ranges from $1$ to $8$, with greater improvements at lower privacy budgets.
These results indicate that innovation filtering and filter-aware correction remain effective during finetuning, even when pretrained representations  reduce optimization difficulty.

\begin{table}[ht]
  \caption{Fine-tuning ViT-small on CIFAR-100 under DP. (\textbf{Best} and \underline{second best} are highlighted.)}
  \label{tab:cifar100_ft_more}
  \centering
  \small
  \setlength{\tabcolsep}{5pt}      
  \renewcommand{\arraystretch}{1.0} 
  \resizebox{0.8\linewidth}{!}{%
  \begin{tabular}{lccccc}
    \toprule
    Method & $\varepsilon=0.5$ & $\varepsilon=1.0$ & $\varepsilon=2.0$ & $\varepsilon=4.0$ & $\varepsilon=8.0$ \\
    \midrule
    DP-AdamW       & 63.2 & 78.0 & 83.7 & 85.7 & 86.8 \\
    DiSK           & \underline{83.5} & \underline{85.4} & \underline{86.8} & \underline{87.6} & \underline{88.5} \\
    \textsc{FiBeR} & \textbf{88.6} & \textbf{89.4} & \textbf{90.0} & \textbf{90.4} & \textbf{90.5} \\
    \bottomrule
  \end{tabular}%
  }
\end{table}

\paragraph{NLP tasks.}
We fine-tune RoBERTa-base on MNLI, QNLI, SST-2, and QQP with DP. \textsc{FiBeR} consistently outperforms DPAdamW across all tasks and privacy budgets, especially at $\varepsilon{=}1$. 
Its denoising and calibration methods improve language fine-tuning under DP. \textsc{FiBeR} is usually competitive with or better than DiSK, demonstrating the value of innovation filtering.
DOPPLER results use looser privacy budgets.
See Appendix~\ref{app:extra_results} for more NLP results and settings.
\begin{table}[h]
\centering
\caption{Task metric of fine-tuning result on the GLUE dataset.}
\setlength{\tabcolsep}{4.0pt}
\renewcommand{\arraystretch}{0.9}
\resizebox{0.7\linewidth}{!}{%
\begin{tabular}{l c c c c c c}
\toprule
Task & $\varepsilon$ & Non-DP & DPAdamW & DiSK & DOPPLER & \textsc{FiBeR} \\
\midrule
\multirow{2}{*}{MNLI}  & 6.7 & 87.6 & 83.2 & \textbf{84.8} & 83.80 & \underline{84.7} \\
                       & 1.0 & 87.6 & 80.7 & 82.0 & \textbf{83.55} & \underline{83.0} \\
\midrule
\multirow{2}{*}{QNLI}  & 6.7 & 92.8 & 87.5 & \underline{88.9} & 87.76 & \textbf{90.6} \\
                       & 1.0 & 92.8 & 86.0 & \underline{88.7} & 87.63 & \textbf{90.5} \\
\midrule
\multirow{2}{*}{SST-2} & 6.7 & 94.8 & 91.5 & \underline{92.8} & 91.82 & \textbf{94.0} \\
                       & 1.0 & 94.8 & 91.4 & \underline{91.5} & 91.71 & \textbf{93.1} \\
\midrule
\multirow{2}{*}{QQP}   & 6.7 & 91.9 & 85.8 & \underline{89.0} & 86.50 & \textbf{89.5} \\
                       & 1.0 & 91.9 & 84.2 & \underline{86.9} & 85.71 & \textbf{88.6} \\
\bottomrule
\end{tabular}%
}
\end{table}

\label{tab:glue_inno}

\begin{figure*}[h]
    \centering
    \includegraphics[width=0.9\linewidth]{figures/learning_curve.png}
    \caption{\textbf{Training dynamics at fixed privacy budgets.}
    Test accuracy curves on MNIST ($\varepsilon{=}0.5$), CIFAR-100 ($\varepsilon{=}1$), and ImageNet-1k ($\varepsilon{=}8$).
    MNIST and CIFAR-100 include DPAdam, DiSK, DiSK-CORR, and \textsc{FiBeR}; ImageNet-1k includes DPAdam, DiSK, and \textsc{FiBeR}.}
    \label{fig:result_curve}
\end{figure*}

\paragraph{NLG tasks}
We evaluate \textsc{FiBeR} for DP text generation by fine-tuning GPT-2-small with HuggingFace checkpoints on the E2E dataset. Our task setup, training scripts, and hyperparameter tuning protocol follow those of previous DP generation studies
(e.g., \citealp{li2021}) to ensure comparability. We set $(\kappa,\gamma,\omega)=(0.6,0.7,0.9)$. Table~\ref{tab:e2e_gpt2} reports results on E2E for $\varepsilon \in \{3,8\}$.
\textsc{FiBeR} consistently improves over DPAdamW and achieves performance close to DiSK, while exhibiting slightly different metric trade-offs.

\begin{table}[ht]
\centering
\caption{GPT-2 fine-tuning on E2E (higher is better).}
\label{tab:e2e_gpt2}
\resizebox{0.9\linewidth}{!}{%
\begin{tabular}{lccccc}
\toprule
Algorithm & BLEU (\%) & ROUGE-L (\%) & METEOR & NIST & CIDEr \\
\midrule
AdamW ($\varepsilon=\infty$) & 69.46 & 71.36 & 0.461 & 8.780 & 2.422 \\
\midrule
DPAdamW ($\varepsilon=3$) & 61.52 & 65.87 & 0.417 & 7.071 & 2.167 \\
DiSK ($\varepsilon=3$) & \textbf{68.35} & \textbf{70.23} & \underline{0.456} & \underline{8.636} & \underline{2.399} \\
\textsc{FiBeR} ($\varepsilon=3$) & \underline{67.57} & \underline{69.97} & \textbf{0.463} & \textbf{8.660} & \textbf{2.407} \\
\midrule
DPAdamW ($\varepsilon=8$) & 64.99 & 67.34 & 0.425 & 8.387 & 2.192 \\
DiSK ($\varepsilon=8$) & \textbf{68.73} & \textbf{70.58} & \underline{0.460} & \textbf{8.697} & \textbf{2.463} \\
\textsc{FiBeR} ($\varepsilon=8$) & \underline{67.90} & \underline{70.56} & \textbf{0.466} & \underline{8.669} & \underline{2.370} \\
\bottomrule
\end{tabular}%
}
\end{table}

\paragraph{Attribution vs.\ DP-AdamW.}
\textsc{FiBeR} and DiSK both use two-point gradients, while DP-AdamW uses one-point. Thus, the \textsc{FiBeR} vs. DP-AdamW gap reflects both the two-point method and \textsc{FiBeR}'s unique features. To isolate \textsc{FiBeR}'s contribution, we focus on comparisons with DiSK.

\subsection{Ablation Studies}
\label{subsec:exp_ablation}
We ablate each \textsc{FiBeR} component to quantify its contribution under matched privacy budgets and a consistent experimental protocol.
We further analyze the compute--accuracy trade-off and empirically validate the variance attenuation factor $A(\omega)$ using a paired-run differencing procedure.

\begin{table}[ht]
\caption{Residual filtering vs.\ gradient-state exponential smoothing under the same training protocol and privacy budgets.}
\label{tab:ablation_residual_vs_gradient}
\centering
\resizebox{0.5\linewidth}{!}{%
\begin{tabular}{c l c c c}
\toprule
$\varepsilon$ & Filter type & Acc. (\%) & $\kappa$ & $\gamma$ \\
\midrule
0.5 & DiSK  & 59.70 & 0.7 & 0.5 \\
0.5 & \textsc{FiBeR}  & \textbf{65.32} & 0.6 & 0.7 \\
\midrule
2.0 & DiSK  & 68.80 & 0.7 & 0.5 \\
2.0 & \textsc{FiBeR}  & \textbf{71.39} & 0.6 & 0.7 \\
\midrule
8.0 & DiSK  & \textbf{74.90} & 0.7 & 0.5 \\
8.0 & \textsc{FiBeR}  & 73.19 & 0.6 & 0.7 \\
\bottomrule
\end{tabular}%
}
\end{table}

\paragraph{Innovation filtering vs.\ gradient-state smoothing.}
We compare DiSK’s gradient-state smoothing and \textsc{FiBeR}'s innovation filtering in a controlled setup.
Both use the same DP mechanism and two-point construction; to match degrees of freedom, we tie
\textsc{FiBeR}'s innovation gain to the two-point parameter ($\omega=\kappa$) and tune $(\kappa,\gamma)$ pair for both.
As shown in Table~\ref{tab:ablation_residual_vs_gradient}, innovation filtering benefits most in high-noise regimes:
at $\varepsilon=0.5$ and $2$, \textsc{FiBeR} outperforms DiSK by $+5.20$ and $+2.59$ points, respectively.
When privacy is looser ($\varepsilon=8$), the difference vanishes and DiSK is marginally better,
suggesting the benefit of innovation filtering decreases as DP noise lessens.
To isolate filtering from filter-aware correction, we disable the correction in \textsc{FiBeR} by setting $A(\omega)=0$. A full ablation isolating the two-point mechanism is left to future work.

\paragraph{Decoupling $(\kappa,\gamma)$ from innovation gain $\omega$.}
We test whether the temporal denoising gain $\omega$ provides an independent knob beyond the two-point geometry.
Fixing $(\kappa,\gamma)=(0.6,0.7)$, we sweep $\omega$ (innovation denoising of $\nu_t=g_t-\tilde g_{t-1}$) and observe a clear optimum near $\omega=0.9$ in Table~\ref{tab:ablation_decouple_omega} (CNN5/CIFAR-10, $\varepsilon=1$):
small $\omega$ under-denoises, while $\omega\!\to\!1$ weakens effective averaging and passes more noise. To further separate geometry from denoising, we fix $\gamma=0.7$ and sweep $(\kappa,\omega)$.
Figure~\ref{fig:sensitivity_heatmaps_main} (CNN5, $\varepsilon=4$, 80 epochs) shows a smooth landscape where increasing $\omega$ from $0.5$ to $0.9$ improves accuracy for most $\kappa$, with the best setting again at $(\kappa,\omega)=(0.6,0.9)$ (75.44\%).
In contrast, larger $\kappa$ degrades accuracy even at high $\omega$, indicating that $(\kappa,\gamma)$ set the stable region while $\omega$ can be tuned within it. See Appendix~\ref{app:sensitivity} for additional results.

\begin{table}[ht]
\centering
\caption{Decoupling the innovation gain $\omega$ from the two-point parameters $(\kappa,\gamma)$.}
\label{tab:ablation_decouple_omega}
\resizebox{0.7\linewidth}{!}{%
\begin{tabular}{lccccc}
\toprule
$\omega$& $0.60$ & $0.70$ & $0.80$ & $0.90$ & $0.99$ \\
\midrule
Test Acc. (\%) & 68.86 & 69.66 & 70.74 & \textbf{70.76} & 70.62 \\

\bottomrule
\end{tabular}%
}
\end{table}

\paragraph{Filter-aware second-moment correction.}
To isolate the effect of filter-aware correction on AdamW's second moment, we compare: (i) full \textsc{FiBeR}, which subtracts attenuated  variance $A(\omega)\sigma_w^2$
from the bias-corrected second moment; (ii) a variant without subtraction (\textsc{FiBeR}\_NO\_CORR);
and (iii) a baseline (\textsc{FiBeR}\_BC\_CORR) using the bias-correction strategy
from \citet{tang2024dp}, which ignores filter attenuation. 
For fairness, we re-tune DiSK-CORR rather than reusing the optimal $\omega$ found for DiSK: we sweep $\omega$ (which jointly controls smoothing, the two-point geometry, and the attenuation factor $A(\omega)$) under the same tuning budget, and report DiSK-CORR using its own best-performing $\omega$. Figure~\ref{fig:ablation_biascorr}: \textsc{FiBeR} achieves the highest test accuracy across privacy budgets,
outperforming both variants.
This shows that adjusting for filter-altered DP noise in AdamW's
second moment avoids preconditioner inflation and improves DP stability. 

\begin{figure}[ht]
    \centering
    \includegraphics[width=0.7\columnwidth]{figures/sensitivity_kappa_omega_fixed_gamma.png}
        \caption{Sweep over $(\kappa,\omega)$ with fixed $\gamma=0.7$.}

    \label{fig:sensitivity_heatmaps_main}
\end{figure}
\begin{figure}[ht]
    \centering
      
    \includegraphics[width=0.65\linewidth]{figures/bias_correction.png}
      \caption{Test accuracy versus privacy budget $\varepsilon$ (CIFAR-10).}
    \label{fig:ablation_biascorr}
\end{figure}
\begin{figure}[t]
\centering

\begin{minipage}[t]{0.49\columnwidth}
  \centering
  \includegraphics[width=\linewidth]{figures/wall_clock.png}\\[-1mm]
\end{minipage}\hfill
\begin{minipage}[t]{0.49\columnwidth}
  \centering
  \includegraphics[width=\linewidth]{figures/proxy.png}\\[-1mm]
\end{minipage}
\caption{\textbf{Compute fairness diagnostics.}
(a) Eval accuracy vs.\ wall-clock time on CIFAR-10 (CNN5, $\varepsilon=4$).
(b) Test accuracy vs.\ number of gradient evaluations on CIFAR-100 (WRN, $\varepsilon=1$).}
\label{fig:compute_fairness}
\end{figure}

\paragraph{Compute-Accuracy Tradeoff.}
\textsc{FiBeR} and DiSK use two-point gradient observations, requiring two backward passes per update; DP-AdamW uses a single-point update. Thus, per-update compute for \textsc{FiBeR} and DiSK is at most twice that of DP-AdamW, though actual slowdowns are often smaller. Figure~\ref{fig:compute_fairness} shows \textsc{FiBeR} achieves the best accuracy while remaining time-competitive and compares test accuracy at matched compute.

\paragraph{Empirical check of variance attenuation.}
\label{sec:variance_attenuation}
To validate the theoretical attenuation factor $A(\omega)$, we run two replicas with identical initialization and minibatch order but independent DP noise. Define $\Delta g_t$ and $\Delta \tilde g_t$ as the differences between privatized and filtered gradients, respectively. We track $\rho_t = \mathrm{Var}(u^\top \Delta \tilde g_t)/\mathrm{Var}(u^\top \Delta g_t)$ using fixed random projections $u$. To ensure $\Delta g_t$ is noise-dominated, we report $r_t = \mathrm{Var}(u^\top \Delta g_t)/(2\sigma_w^2)$, which remains near 1. Figure~\ref{fig:attenuation_check} shows $\rho_t$ matches $A(\omega)$, supporting the attenuation model.
\begin{figure}[ht]
    \centering
    \includegraphics[width=0.85\columnwidth]{figures/attentuation_check.png}
    \caption{\textbf{Empirical validation of variance attenuation.}
    Top: $\rho_t=\mathrm{Var}(\Delta\tilde g_t)/\mathrm{Var}(\Delta g_t)$ during training for $\omega=0.9$.
    Bottom: distribution of per-projection ratios over the last 200 steps.}
    \label{fig:attenuation_check}
\end{figure}

\subsection{Diagnostic Studies}
\label{sec:diagnostics}

We validate \textsc{FiBeR} through controlled diagnostics isolating (i) drift-tracking behavior and (ii) assumption compliance. Full methodology and results in Appendices~\ref{app:synthetic_drift_diagnostics} and \ref{app:fairness-assumption-audits}.

\paragraph{Conditions where innovation filtering helps.}
We isolate drift-tracking by testing filters on synthetic gradients: constant-velocity (CV) for drift and random-walk (RW) for stationary signals, both with DP noise.
We report win rate, i.e., the number of trials (out of 7 random seeds) where innovation filtering achieves higher final utility than EMA state smoothing under the same $(\varepsilon,\delta)$ and training protocol.
Table~\ref{tab:synthetic_summary} shows innovation filtering wins consistently on CV (7/7 at $\varepsilon=8$), while providing little benefit on RW.

\begin{table}[ht]
\centering
\caption{Synthetic drift: innovation filter win rates.}
\label{tab:synthetic_summary}
\resizebox{0.45\columnwidth}{!}{%
\begin{tabular}{cccc}
\toprule
Model  & $\varepsilon=2$ & $\varepsilon=4$ & $\varepsilon=8$ \\
\midrule
CV & 5/7 & 6/7 & 7/7 \\
RW  & 0/7 & 0/7 & 1/7 \\
\bottomrule
\end{tabular}%
}
\end{table}

\paragraph{Correction validation.}
\begin{table}[t]
\centering
\small
\caption{Proxy diagnostics for the filter-aware correction assumptions from a representative run (single random projection, index 0), reported over the steady-state phase.}
\label{tab:assumption-audit-main}
\resizebox{0.65\columnwidth}{!}{%
\begin{tabular}{l r}
\toprule
Metric & Value \\
\midrule
$\omega$ & $0.9$ \\
Total steps $T$ & $800$ \\
Warmup / steady-state steps & $100$ / $700$ \\
\midrule
$\widehat{\rho}(s,n)$ & $0.1404$ \\
Cross-term ratio $\left|\widehat{\mathbb{E}}[sn]\right|/\widehat{\mathbb{E}}[n^2]$ & $0.1505$ \\
\midrule
Coeff. of variation (CV) & $0.0313$ \\
\bottomrule
\end{tabular}%
}
\end{table}

We decompose the privatized gradient as $g_t=s_t+n_t$, where $s_t$ is the underlying clipped gradient signal and $n_t$ is the injected DP
Gaussian noise (after filtering). Our variance subtraction is exact when $\mathbb{E}[s_t n_t]=0$, in which case
$\mathbb{E}[g_t^2]=\mathbb{E}[s_t^2]+\mathbb{E}[n_t^2]$ elementwise.
Paired-run diagnostics on CIFAR-10 show projected correlation $\widehat{\rho}(s,n) = 0.14$ and cross-term ratio $|\widehat{\mathbb{E}}[sn]|/\widehat{\mathbb{E}}[n^2] = 0.15$ -- modest violations consistent with effective correction (Table~\ref{tab:assumption-audit-main}). To prevent over-correction, we monitor clamp\_mass$_t$ (fraction of $|\hat{m}_t|$ on floor-clamped coordinates). Figure~\ref{fig:epsv_sensitivity_main} confirms small $\epsilon_v \le 10^{-7}$ preserves adaptivity (clamp\_mass $<0.01$), while large $\epsilon_v \ge 10^{-6}$ causes uniform preconditioning (clamp\_mass $>0.8$) ( See Appendix~\ref{app:floor_sensitivity} for more details). We set $\epsilon_v=10^{-8}$ by default, which preserves adaptivity while preventing numerical instability from the variance subtraction. 

\begin{figure}[ht]
    \centering
    \includegraphics[width=0.8\linewidth]{figures/epsilon_floor.png}
   \caption{\textbf{Variance floor sensitivity.} 
 Clamp diagnostics for $\epsilon_v\in\{10^{-8},10^{-7},10^{-6},10^{-5}\}$. Small floors ($10^{-8}$, $10^{-7}$) yield negligible clamp activity
    after a short transient, while larger floors ($10^{-6}$, $10^{-5}$) induce a floor-dominated regime.}

    \label{fig:epsv_sensitivity_main}
\end{figure}
\section{Conclusion}
\label{sec:discussion_conclusion}
Differential privacy introduces stochastic perturbations that can substantially degrade the behavior of adaptive optimizers.
We presented \textsc{FiBeR}, which (i) denoises privatized gradients in innovation space via a stable second-order recursion,
(ii) decouples the two-point observation geometry from the temporal denoising gain for simpler tuning, and
(iii) applies a filter-aware calibration to AdamW's second-moment estimator to match the post-filter noise statistics.
Across vision and language benchmarks, \textsc{FiBeR} consistently improves utility under fixed privacy budgets, with the largest gains
in tighter-privacy and long-horizon regimes.

\section*{Impact Statement}
This paper presents work whose goal is to advance the field of Machine
Learning. There are many potential societal consequences of our work, none
which we feel must be specifically highlighted here.


%


\bibliography{main}  

@inproceedings{tang2024dp,
  title={DP-AdamBC: Your DP-Adam Is Actually DP-SGD (Unless You Apply Bias Correction)},
  author={Tang, Qiaoyue and Shpilevskiy, Frederick and L{\'e}cuyer, Mathias},
  booktitle={Proceedings of the AAAI Conference on Artificial Intelligence},
  volume={38},
  pages={15276--15283},
  year={2024}
}

@article{zhang2024disk,
  title={DiSK: Differentially Private Optimizer with Simplified Kalman Filter for Noise Reduction},
  author={Zhang, Xinwei and Bu, Zhiqi and Balle, Borja and Hong, Mingyi and Razaviyayn, Meisam and Mirrokni, Vahab},
  journal={arXiv preprint arXiv:2410.03883},
  year={2024}
}

@inproceedings{abadi2016deep,
  title={Deep Learning with Differential Privacy},
  author={Abadi, Mart{\'{\i}}n and Chu, Andy and Goodfellow, Ian and McMahan, H Brendan and Mironov, Ilya and Talwar, Kunal and Zhang, Li},
  booktitle={Proceedings of the 2016 ACM SIGSAC conference on computer and communications security},
  pages={308--318},
  year={2016}
}

@article{yu2024differentially,
  title={Differentially private fine-tuning of language models},
  author={Yu, Da and Naik, Saurabh and Backurs, Arturs and Gopi, Sivakanth and Inan, Huseyin A and Kamath, Gautam and others},
  journal={Journal of Privacy and Confidentiality},
  volume={14},
  number={2},
  year={2024}
}

@inproceedings{jayaraman2019evaluating,
  title={Evaluating differentially private machine learning in practice},
  author={Jayaraman, Bargav and Evans, David},
  booktitle={USENIX Security Symposium},
  pages={1895--1912},
  year={2019}
}

@article{de2022unlocking,
  title={Unlocking high-accuracy differentially private image classification through scale},
  author={De, Soham and Berrada, Leonard and Hayes, Jamie and Smith, Samuel L and Balle, Borja},
  journal={arXiv preprint arXiv:2204.13650},
  year={2022}
}

@inproceedings{xia2023differentially,
  title={Differentially private learning with per-sample adaptive clipping},
  author={Xia, Tianyu and Shen, Shuheng and Yao, Su and Fu, Xinyi and Xu, Ke and Xu, Xiaolong and Fu, Xing},
  booktitle={Proceedings of the AAAI Conference on Artificial Intelligence},
  volume={37},
  pages={10444--10452},
  year={2023}
}

@inproceedings{bu2023automatic,
  title={Automatic clipping: Differentially private deep learning made easier and stronger},
  author={Bu, Zhiqi and Wang, Yu-Xiang and Zha, Sheng and Karypis, George},
  booktitle={Advances in Neural Information Processing Systems},
  volume={36},
  year={2023}
}

@inproceedings{bu2023differentially,
  title={Differentially private optimization on large model at small cost},
  author={Bu, Zhiqi and Wang, Yu-Xiang and Zha, Sheng and Karypis, George},
  booktitle={International Conference on Machine Learning},
  pages={3192--3218},
  year={2023}
}

@article{mehta2023towards,
  title={Towards large scale transfer learning for differentially private image classification},
  author={Mehta, Harsh and Guha Thakurta, Abhradeep and Kurakin, Alexey and Cutkosky, Ashok},
  journal={Transactions on Machine Learning Research},
  year={2023}
}

@article{zhang2024doppler,
  title={DOPPLER: Differentially private optimizers with low-pass filter for privacy noise reduction},
  author={Zhang, Xinwei and Bu, Zhiqi and Hong, Mingyi and Razaviyayn, Meisam},
  journal={Advances in Neural Information Processing Systems},
  volume={37},
  pages={41826--41851},
  year={2024}
}

@inproceedings{koloskova2023gradient,
  title={Gradient descent with linearly correlated noise: Theory and applications to differential privacy},
  author={Koloskova, Anastasia and McKenna, Ryan and Charles, Zachary and Rush, John Keith and McMahan, H Brendan},
  booktitle={Advances in Neural Information Processing Systems},
  volume={36},
  year={2023}
}

@inproceedings{choquette2024correlated,
  title={Correlated noise provably beats independent noise for differentially private learning},
  author={Choquette-Choo, Christopher A and Dvijotham, Krishnamurthy Dj and Pillutla, Krishna and Ganesh, Arun and Steinke, Thomas and Guha Thakurta, Abhradeep},
  booktitle={International Conference on Learning Representations},
  year={2024}
}

@inproceedings{kairouz2021practical,
  title={Practical and private (deep) learning without sampling or shuffling},
  author={Kairouz, Peter and McMahan, H Brendan and Song, Shuang and Thakkar, Om and Thakurta, Abhradeep and Xu, Zheng},
  booktitle={International Conference on Machine Learning},
  pages={5213--5225},
  year={2021}
}

@inproceedings{zhang2024differentially,
  title={Differentially private SGD without clipping bias: An error-feedback approach},
  author={Zhang, Xinwei and Bu, Zhiqi and Wu, Steven and Hong, Mingyi},
  booktitle={International Conference on Learning Representations},
  year={2024}
}

@article{vuckovic2018kalman,
  title={Kalman gradient descent: Adaptive variance reduction in stochastic optimization},
  author={Vuckovic, James},
  journal={arXiv preprint arXiv:1810.12273},
  year={2018}
}

@article{dwork2014algorithmic,
  title={The algorithmic foundations of differential privacy},
  author={Dwork, Cynthia and Roth, Aaron},
  journal={Foundations and Trends in Theoretical Computer Science},
  volume={9},
  number={3-4},
  pages={211--407},
  year={2014}
}

@inproceedings{wang2019subsampled,
  title={Subsampled r{\'e}nyi differential privacy and analytical moments accountant},
  author={Wang, Yu-Xiang and Balle, Borja and Kasiviswanathan, Shiva Prasad},
  booktitle={The 22nd international conference on artificial intelligence and statistics},
  pages={1226--1235},
  year={2019},
  organization={PMLR}
}

@article{tramer2020differentially,
  title={Differentially private learning needs better features (or much more data)},
  author={Tramer, Florian and Boneh, Dan},
  journal={arXiv preprint arXiv:2011.11660},
  year={2020}
}

@inproceedings{DBLP:conf/emnlp/WolfDSCDMCRLFDS20,
  author       = {Thomas Wolf and
                  Lysandre Debut and
                  Victor Sanh and
                  Julien Chaumond and
                  Clement Delangue and
                  Anthony Moi and
                  Pierric Cistac and
                  Tim Rault and
                  R{\'{e}}mi Louf and
                  Morgan Funtowicz and
                  Joe Davison and
                  Sam Shleifer and
                  Patrick von Platen and
                  Clara Ma and
                  Yacine Jernite and
                  Julien Plu and
                  Canwen Xu and
                  Teven Le Scao and
                  Sylvain Gugger and
                  Mariama Drame and
                  Quentin Lhoest and
                  Alexander M. Rush},
  editor       = {Qun Liu and
                  David Schlangen},
  title        = {Transformers: State-of-the-Art Natural Language Processing},
  booktitle    = {Proceedings of the 2020 Conference on Empirical Methods in Natural
                  Language Processing: System Demonstrations, {EMNLP} 2020 - Demos,
                  Online, November 16-20, 2020},
  pages        = {38--45},
  publisher    = {Association for Computational Linguistics},
  year         = {2020},
  url          = {https://doi.org/10.18653/v1/2020.emnlp-demos.6},
  doi          = {10.18653/V1/2020.EMNLP-DEMOS.6},
  bibsource    = {dblp computer science bibliography, https://dblp.org}
}

@inproceedings{DBLP:conf/nips/Bu0ZK23,
  author       = {Zhiqi Bu and
                  Yu{-}Xiang Wang and
                  Sheng Zha and
                  George Karypis},
  editor       = {Alice Oh and
                  Tristan Naumann and
                  Amir Globerson and
                  Kate Saenko and
                  Moritz Hardt and
                  Sergey Levine},
  title        = {Automatic Clipping: Differentially Private Deep Learning Made Easier
                  and Stronger},
  booktitle    = {Advances in Neural Information Processing Systems 36: Annual Conference
                  on Neural Information Processing Systems 2023, NeurIPS 2023, New Orleans,
                  LA, USA, December 10 - 16, 2023},
  year         = {2023},
  bibsource    = {dblp computer science bibliography, https://dblp.org}
}

@inproceedings{yu2021,
  title={Differentially Private Fine-tuning of Language Models},
  author={Yu, Da and Naik, Saurabh and Backurs, Arturs and Gopi, Sivakanth and Inan, Huseyin A. and Kamath, Gautam and Kulkarni, Janardhan and Lee, Yin Tat and Manber, Andre and Messing, Lukas and Raghunathan, Ananth and Telang, Rahul and Zhang, Huisheng},
  booktitle={International Conference on Learning Representations},
  year={2022},
  url={https://arxiv.org/abs/2110.06500}
}

@inproceedings{li2021,
  title={Large Language Models Can Be Strong Differentially Private Learners},
  author={Li, Xuechen and Tram{\`e}r, Florian and Liang, Percy and Hashimoto, Tatsunori},
  booktitle={International Conference on Learning Representations},
  year={2022},
  url={https://arxiv.org/abs/2110.05679}
}

@inproceedings{bu2024,
  title={Differentially Private Bias-Term Fine-tuning of Foundation Models},
  author={Bu, Zhiqi and Wang, Yu-Xiang and Zha, Sheng and Karypis, George},
  booktitle={International Conference on Machine Learning},
  year={2024},
  url={https://arxiv.org/abs/2210.00036}
}

@inproceedings{bu2023,
  title={Scalable and Efficient Training of Large Convolutional Neural Networks with Differential Privacy},
  author={Bu, Zhiqi and Mao, Jialin and Xu, Shiyun},
  booktitle={Advances in Neural Information Processing Systems},
  volume={35},
  pages={22898--22912},
  year={2022},
  url={https://arxiv.org/abs/2205.10683}
}

@article{bao2024,
  title={On the Importance of Architecture and Feature Selection in Differentially Private Machine Learning},
  author={Bao, Wenxuan and Bauer, Luke A. and Bindschaedler, Vincent},
  journal={arXiv preprint arXiv:2205.06720},
  year={2022},
  url={https://arxiv.org/abs/2205.06720}
}

@article{de2022,
  title={Unlocking High-Accuracy Differentially Private Image Classification through Scale},
  author={De, Soham and Berrada, Leonard and Hayes, Jamie and Smith, Samuel L. and Balle, Borja},
  journal={arXiv preprint arXiv:2204.13650},
  year={2022},
  url={https://arxiv.org/abs/2204.13650}
}

@article{lecun1998mnist,
  title={Gradient-based learning applied to document recognition},
  author={LeCun, Yann and Bottou, L{\'e}on and Bengio, Yoshua and Haffner, Patrick},
  journal={Proceedings of the IEEE},
  volume={86},
  number={11},
  pages={2278--2324},
  year={1998},
  publisher={IEEE}
}

@techreport{krizhevsky2009cifar,
  title={Learning multiple layers of features from tiny images},
  author={Krizhevsky, Alex},
  institution={University of Toronto},
  year={2009}
}

@inproceedings{deng2009imagenet,
  title={ImageNet: A large-scale hierarchical image database},
  author={Deng, Jia and Dong, Wei and Socher, Richard and Li, Li-Jia and Li, Kai and Fei-Fei, Li},
  booktitle={Proceedings of the IEEE Conference on Computer Vision and Pattern Recognition (CVPR)},
  pages={248--255},
  year={2009},
  publisher={IEEE}
}

@article{wang2018glue,
  title={{GLUE}: A multi-task benchmark and analysis platform for natural language understanding},
  author={Wang, Alex and Singh, Amanpreet and Michael, Julian and Hill, Felix and Levy, Omer and Bowman, Samuel R.},
  journal={arXiv preprint arXiv:1804.07461},
  year={2018}
}

@inproceedings{novikova2017e2e,
  title={The {E}2{E} Dataset: New Challenges For End-to-End Generation},
  author={Novikova, Jekaterina and Du{\v{s}}ek, Ond{\v{r}}ej and Rieser, Verena},
  booktitle={Proceedings of the 18th Annual SIGdial Meeting on Discourse and Dialogue},
  pages={201--206},
  year={2017},
  address={Saarbr{\"u}cken, Germany},
  publisher={Association for Computational Linguistics}
}

@article{dosovitskiy2020vit,
  title={An image is worth 16x16 words: Transformers for image recognition at scale},
  author={Dosovitskiy, Alexey and Beyer, Lucas and Kolesnikov, Alexander and Weissenborn, Dirk and Zhai, Xiaohua and Unterthiner, Thomas and Dehghani, Mostafa and Minderer, Matthias and Heigold, Georg and Gelly, Sylvain and Uszkoreit, Jakob and Houlsby, Neil},
  journal={arXiv preprint arXiv:2010.11929},
  year={2020}
}

@article{liu2019roberta,
  title={{RoBERTa}: A robustly optimized {BERT} pretraining approach},
  author={Liu, Yinhan and Ott, Myle and Goyal, Naman and Du, Jingfei and Joshi, Mandar and Chen, Danqi and Levy, Omer and Lewis, Mike and Zettlemoyer, Luke and Stoyanov, Veselin},
  journal={arXiv preprint arXiv:1907.11692},
  year={2019}
}

@techreport{radford2019gpt2,
  title={Language models are unsupervised multitask learners},
  author={Radford, Alec and Wu, Jeffrey and Child, Rewon and Luan, David and Amodei, Dario and Sutskever, Ilya},
  institution={OpenAI},
  year={2019}
}

@article{mehta2023,
  title={Large scale transfer learning for differentially private image classification},
  author={Mehta, Harsh and Thakurta, Abhradeep~Guha and Kurakin, Alexey and Cutkosky, Ashok},
  journal={Transactions on Machine Learning Research},
  year={2023}
}

@article{DBLP:journals/corr/abs-2506-06549,
  author       = {Atefeh Gilani and
                  Naima Tasnim and
                  Lalitha Sankar and
                  Oliver Kosut},
  title        = {GeoClip: Geometry-Aware Clipping for Differentially Private {SGD}},
  journal      = {CoRR},
  volume       = {abs/2506.06549},
  year         = {2025},
  url          = {https://doi.org/10.48550/arXiv.2506.06549},
  doi          = {10.48550/ARXIV.2506.06549},
  eprinttype    = {arXiv},
  eprint       = {2506.06549},
  bibsource    = {dblp computer science bibliography, https://dblp.org}
}

@inproceedings{DBLP:conf/nips/PaszkeGMLBCKLGA19,
  author       = {Adam Paszke and
                  Sam Gross and
                  Francisco Massa and
                  Adam Lerer and
                  James Bradbury and
                  Gregory Chanan and
                  Trevor Killeen and
                  Zeming Lin and
                  Natalia Gimelshein and
                  Luca Antiga and
                  Alban Desmaison and
                  Andreas K{\"{o}}pf and
                  Edward Z. Yang and
                  Zachary DeVito and
                  Martin Raison and
                  Alykhan Tejani and
                  Sasank Chilamkurthy and
                  Benoit Steiner and
                  Lu Fang and
                  Junjie Bai and
                  Soumith Chintala},
  editor       = {Hanna M. Wallach and
                  Hugo Larochelle and
                  Alina Beygelzimer and
                  Florence d'Alch{\'{e}}{-}Buc and
                  Emily B. Fox and
                  Roman Garnett},
  title        = {PyTorch: An Imperative Style, High-Performance Deep Learning Library},
  booktitle    = {Advances in Neural Information Processing Systems 32: Annual Conference
                  on Neural Information Processing Systems 2019, NeurIPS 2019, December
                  8-14, 2019, Vancouver, BC, Canada},
  pages        = {8024--8035},
  year         = {2019},
  bibsource    = {dblp computer science bibliography, https://dblp.org}
}

@article{DBLP:journals/corr/abs-2109-12298,
  author       = {Ashkan Yousefpour and
                  Igor Shilov and
                  Alexandre Sablayrolles and
                  Davide Testuggine and
                  Karthik Prasad and
                  Mani Malek and
                  John Nguyen and
                  Sayan Ghosh and
                  Akash Bharadwaj and
                  Jessica Zhao and
                  Graham Cormode and
                  Ilya Mironov},
  title        = {Opacus: User-Friendly Differential Privacy Library in PyTorch},
  journal      = {CoRR},
  volume       = {abs/2109.12298},
  year         = {2021},
  url          = {https://arxiv.org/abs/2109.12298},
  eprinttype    = {arXiv},
  eprint       = {2109.12298},
  bibsource    = {dblp computer science bibliography, https://dblp.org}
}

@inproceedings{DBLP:conf/nips/Bu0Z0K24,
  author       = {Zhiqi Bu and
                  Xinwei Zhang and
                  Sheng Zha and
                  Mingyi Hong and
                  George Karypis},
  editor       = {Amir Globersons and
                  Lester Mackey and
                  Danielle Belgrave and
                  Angela Fan and
                  Ulrich Paquet and
                  Jakub M. Tomczak and
                  Cheng Zhang},
  title        = {Pre-training Differentially Private Models with Limited Public Data},
  booktitle    = {Advances in Neural Information Processing Systems 38: Annual Conference
                  on Neural Information Processing Systems 2024, NeurIPS 2024, Vancouver,
                  BC, Canada, December 10 - 15, 2024},
  year         = {2024},
  bibsource    = {dblp computer science bibliography, https://dblp.org}
}

@inproceedings{DBLP:journals/corr/KingmaB14,
  author       = {Diederik P. Kingma and
                  Jimmy Ba},
  editor       = {Yoshua Bengio and
                  Yann LeCun},
  title        = {Adam: {A} Method for Stochastic Optimization},
  booktitle    = {3rd International Conference on Learning Representations, {ICLR} 2015,
                  San Diego, CA, USA, May 7-9, 2015, Conference Track Proceedings},
  year         = {2015},
  url          = {http://arxiv.org/abs/1412.6980},
  bibsource    = {dblp computer science bibliography, https://dblp.org}
}

@inproceedings{DBLP:conf/nips/Choquette-ChooG23,
  author       = {Christopher A. Choquette{-}Choo and
                  Arun Ganesh and
                  Ryan McKenna and
                  H. Brendan McMahan and
                  John Rush and
                  Abhradeep Guha Thakurta and
                  Zheng Xu},
  editor       = {Alice Oh and
                  Tristan Naumann and
                  Amir Globerson and
                  Kate Saenko and
                  Moritz Hardt and
                  Sergey Levine},
  title        = {(Amplified) Banded Matrix Factorization: {A} unified approach to private
                  training},
  booktitle    = {Advances in Neural Information Processing Systems 36: Annual Conference
                  on Neural Information Processing Systems 2023, NeurIPS 2023, New Orleans,
                  LA, USA, December 10 - 16, 2023},
  year         = {2023},
  bibsource    = {dblp computer science bibliography, https://dblp.org}
}
\bibliographystyle{icml2026}


\newpage
\appendix
\onecolumn


\section{Differentially Private Optimization}
\label{app:differentially_private}
\subsection{Differentially Private Optimization}
\label{subsec:dp}
We briefly recall differential privacy and the standard privatization mechanism used in
DP optimization.
\begin{definition}[($\varepsilon,\delta$)-Differential Privacy \cite{dwork2014algorithmic}]
A randomized mechanism $\mathcal{M}$ is $(\varepsilon,\delta)$-differentially private if for any two neighboring
datasets $\mathcal{D},\mathcal{D}'$ (differing in one example) and any measurable set $S$,
\begin{equation}
\Pr[\mathcal{M}(\mathcal{D}) \in S] \le e^{\varepsilon}\Pr[\mathcal{M}(\mathcal{D}') \in S] + \delta .
\end{equation}
\end{definition}
\begin{definition}[Gaussian Mechanism \cite{dwork2014algorithmic,wang2019subsampled}]
Let $A:\mathcal{D}\to\mathbb{R}^d$ have $\ell_2$ sensitivity
$\Delta_2(A)=\max_{\mathcal{D},\mathcal{D}'}\|A(\mathcal{D})-A(\mathcal{D}')\|_2$ over neighboring datasets.
Then releasing $A(\mathcal{D})+w$ with $w\sim\mathcal{N}(0,\sigma^2\Delta_2(A)^2 I_d)$ is
$(\varepsilon,\delta)$-DP for an appropriate choice of $\sigma$ as a function of $(\varepsilon,\delta)$.
\end{definition}
\paragraph{DP-SGD.}
DP-SGD applies the Gaussian mechanism to clipped per-example gradients.
With $\mathrm{clip}(u,C)=u\cdot\min\{1,\,C/\|u\|_2\}$, the privatized minibatch gradient is
\[
g_t \;=\; \frac{1}{B}\sum_{\xi\in B_t}\mathrm{clip}(\nabla f(\theta_t;\xi), C) \;+\; w_t,
w_t \sim \mathcal{N}(0,\sigma_w^2 I_d),
\]
where we parameterize noise by the noise multiplier $\sigma_{\mathrm{DP}}$ and set
$\sigma_w = (\sigma_{\mathrm{DP}}\,C)/B$. Alternatively, noise $\mathcal{N}(0,(\sigma_{\mathrm{DP}}C)^2 I_d)$ can be added to the sum of clipped gradients, followed by division by $B$. The pseudocode for DP-SGD is presented in Appendix~\ref{app:pseudocode}.
\paragraph{Privacy guarantee.}
For sampling rate $q=B/|\mathcal{D}|$ and $T$ steps, DP-SGD is $(\varepsilon,\delta)$-DP when
$\sigma_{\mathrm{DP}}$ is chosen using a standard privacy accountant.

\section{From Residual Gradient Dynamics to Innovation Filtering}
\label{app:residual}

This appendix completes the proof of Proposition~\ref{prop:residual_derivation_main}
(Section~\ref{subsec:theory_residual}). The derivation follows the pipeline
\[
\textbf{(i) residual-gradient dynamics}
\;\Rightarrow\;
\textbf{(ii) Kalman filter / $\alpha$--$\beta$ form}
\;\Rightarrow\;
\textbf{(iii)  simplifications}
\;\Rightarrow\;
\textbf{\textsc{FiBeR}}
\]
We also note that if the drift state is removed (a random-walk model), the resulting filter reduces to
a first-order exponential smoother (EMA), whereas the constant-velocity model below yields a second-order recursion.

\paragraph{Conventions.}
We present the derivation for a single coordinate (scalar) and suppress coordinate indices.
Under a diagonal/coordinate-wise approximation, the same derivation applies element-wise to vectors.
Throughout, $g_t$ denotes the observed (privatized) gradient at step $t$.

\subsection{Residual-Gradient Dynamics: Constant-Velocity Model}
\label{app:residual_dynamics}

We model the latent (noise-free) gradient signal $s_t$ as evolving with a persistent drift $r_t$:
\begin{align}
s_t &= s_{t-1} + r_{t-1} + \eta_t, \label{eq:app_cv_s}\\
r_t &= r_{t-1} + \zeta_t, \label{eq:app_cv_r}\\
g_t &= s_t + w_t, \label{eq:app_cv_obs}
\end{align}
where $w_t$ is observation noise (including DP noise and minibatch noise), and $\eta_t,\zeta_t$ are process noises
capturing model mismatch and changes in the latent gradient dynamics.
Equations~\eqref{eq:app_cv_s}-\eqref{eq:app_cv_obs} correspond to the classical constant-velocity tracking model.

\subsection{Kalman Filter and the $\alpha$--$\beta$ Form}
\label{app:residual_kalman}

Define the 2D state $x_t \triangleq [\,s_t,\; r_t\,]^\top$. Then \eqref{eq:app_cv_s}-\eqref{eq:app_cv_obs} becomes
\begin{equation}
x_t = A x_{t-1} + \varepsilon_t,
\qquad
g_t = H x_t + w_t,
\label{eq:app_ssm}
\end{equation}
where $\varepsilon_t=[\eta_t,\;\zeta_t]^\top$, and
\[
A=\begin{bmatrix}1&1\\0&1\end{bmatrix},
\qquad
H=\begin{bmatrix}1&0\end{bmatrix}.
\]
Let $\hat x_{t|t-1}$ and $\hat x_t$ denote the predicted and corrected estimates with covariances
$P_{t|t-1}$ and $P_t$. The Kalman recursion is
\begin{align}
\hat x_{t|t-1} &= A\hat x_{t-1}, \label{eq:app_kf_pred}\\
P_{t|t-1} &= A P_{t-1} A^\top + Q, \label{eq:app_kf_P_pred}\\
K_t &= P_{t|t-1} H^\top \big(H P_{t|t-1} H^\top + R\big)^{-1}, \label{eq:app_kf_gain}\\
e_t &= g_t - H\hat x_{t|t-1}, \label{eq:app_kf_innov}\\
\hat x_t &= \hat x_{t|t-1} + K_t e_t, \label{eq:app_kf_corr}\\
P_t &= (I - K_t H)P_{t|t-1}. \label{eq:app_kf_P_corr}
\end{align}
Since $H P_{t|t-1} H^\top$ is scalar, \eqref{eq:app_kf_gain} only requires scalar division.

\paragraph{$\alpha$--$\beta$ form.}
Write $K_t=[\alpha_t,\;\beta_t]^\top$ and expand \eqref{eq:app_kf_pred}-\eqref{eq:app_kf_corr}:
\begin{align}
\hat s_{t|t-1} &= \hat s_{t-1} + \hat r_{t-1},
&
\hat r_{t|t-1} &= \hat r_{t-1}, \label{eq:app_pred_sr}\\
e_t &= g_t - \hat s_{t|t-1}
= g_t - (\hat s_{t-1}+\hat r_{t-1}), \label{eq:app_innov_sr}\\
\hat s_t &= \hat s_{t|t-1} + \alpha_t e_t,
&
\hat r_t &= \hat r_{t|t-1} + \beta_t e_t. \label{eq:app_ab_updates}
\end{align}
This is the classical $\alpha$--$\beta$ filter for constant-velocity tracking.

\subsection{From Kalman Recursion to the \textsc{FiBeR} Innovation Filter}
\label{app:inno_simplification}

We now apply standard simplifications to obtain the lightweight recursion used by \textsc{FiBeR}.

\paragraph{Step 1: Coordinate-wise (diagonal) approximation.}
We treat coordinates independently (equivalently assume diagonal covariances),
avoiding dense-matrix storage or inversion.

\paragraph{Step 2: Steady-state constant gains.}
When noise statistics are approximately stationary, Kalman gains converge quickly.
We therefore replace time-varying gains by constants:
\begin{equation}
\alpha_t \approx \alpha,
\qquad
\beta_t \approx \beta.
\label{eq:app_const_gain}
\end{equation}

\paragraph{Step 3: Tied-gain reduction and identification with optimizer states.}
To minimize tuning and match the optimizer implementation, we tie the gains:
\begin{equation}
\alpha=\beta=\omega\in(0,1].
\label{eq:app_tied_gain}
\end{equation}
(We emphasize that in the exact steady-state Kalman filter one generally has $\alpha_\infty\neq\beta_\infty$;
the tied-gain choice \eqref{eq:app_tied_gain} is a practical simplification.)
We identify the Kalman estimates with the optimizer variables:
\begin{equation}
\tilde g_t \equiv \hat s_t,
\qquad
r_t \equiv \hat r_t.
\label{eq:app_identify}
\end{equation}
Substituting \eqref{eq:app_tied_gain}-\eqref{eq:app_identify} into \eqref{eq:app_innov_sr}-\eqref{eq:app_ab_updates} gives
\begin{align}
r_t
&= r_{t-1} + \omega\!\left(g_t-(\tilde g_{t-1}+r_{t-1})\right)
= (1-\omega)r_{t-1} + \omega\big(g_t-\tilde g_{t-1}\big), \label{eq:app_r_simplify}\\
\tilde g_t
&= \tilde g_{t-1} + r_{t-1} + \omega\!\left(g_t-(\tilde g_{t-1}+r_{t-1})\right)
= \tilde g_{t-1} + r_t. \label{eq:app_g_simplify}
\end{align}

\paragraph{Residual-filter form.}
Define the residual signal
\begin{equation}
\nu_t \triangleq g_t - \tilde g_{t-1}.
\label{eq:app_nu_def}
\end{equation}
Then \eqref{eq:app_r_simplify}-\eqref{eq:app_g_simplify} becomes
\begin{align}
r_t &= (1-\omega)r_{t-1} + \omega \nu_t, \label{eq:app_inno_r}\\
\tilde g_t &= \tilde g_{t-1} + r_t, \label{eq:app_inno_g}
\end{align}
which matches \eqref{eq:theory_delta}-\eqref{eq:theory_gtilde} after identifying $\nu_t$ with the main-text notation.

\paragraph{Second-order form.}
Eliminating $r_t$ via $r_t=\tilde g_t-\tilde g_{t-1}$ yields
\begin{equation}
\tilde g_t
= (2-2\omega)\tilde g_{t-1} - (1-\omega)\tilde g_{t-2} + \omega g_t,
\label{eq:app_inno_second_order}
\end{equation}
showing innovation filtering is second-order (two poles), unlike EMA.

\subsection{Convergence of Time-Varying Kalman Gains}
\label{app:gain_convergence}

The Kalman gain $K_t=[\alpha_t,\beta_t]^\top$ depends on the covariance $P_{t|t-1}$, which evolves according to
a Riccati recursion. Under time-invariant $(A,H,Q,R)$ with $R>0$, standard Kalman filtering theory implies that the
covariance converges to a unique stabilizing fixed point (under detectability/stabilizability conditions), and therefore
the gains converge to constants. For completeness, we now write the scalar recursion for the covariance entries.

\paragraph{Noise model.}
We take $Q=\mathrm{diag}(\sigma_s^2,\sigma_r^2)$ and $R=\sigma_w^2$ in \eqref{eq:app_kf_P_pred} and \eqref{eq:app_kf_gain}.

\paragraph{Covariance parameterization.}
Write the posterior covariance as
\begin{equation}
P_t \;=\;
\begin{bmatrix}
p_t & c_t\\
c_t & q_t^{P}
\end{bmatrix},
\label{eq:app_cov_param}
\end{equation}
where $p_t=\mathrm{Var}(s_t-\hat s_t)$, $q_t^{P}=\mathrm{Var}(r_t-\hat r_t)$, and $c_t=\mathrm{Cov}(s_t-\hat s_t,\;r_t-\hat r_t)$.
Let $P_t^-$ denote the predicted covariance $P_{t|t-1}$ with the same parameterization
$P_t^-=\begin{bmatrix}p_t^- & c_t^-\\ c_t^- & q_t^{P-}\end{bmatrix}$.

\paragraph{Prediction.}
From $P_t^- = A P_{t-1} A^\top + Q$, we obtain
\begin{align}
 p_t^- &= p_{t-1} + 2c_{t-1} + q^P_{t-1} + \sigma_s^2, \label{eq:app_pred_p}\\
c_t^- &= c_{t-1} + q^P_{t-1}, \label{eq:app_pred_c}\\
q_t^{P-} &= q^P_{t-1} + \sigma_r^2. \label{eq:app_pred_q}
\end{align}

\paragraph{Gains.}
The innovation variance is $S_t = H P_t^- H^\top + R = p_t^- + \sigma_w^2$, and the Kalman gain is
\begin{equation}
\alpha_t = \frac{p_t^-}{p_t^-+\sigma_w^2},
\qquad
\beta_t  = \frac{c_t^-}{p_t^-+\sigma_w^2}.
\label{eq:app_alpha_beta_def}
\end{equation}

\paragraph{Correction.}
Using $P_t = (I - K_t H) P_t^-$ yields
\begin{align}
p_t &= (1-\alpha_t)\,p_t^- \;=\; \frac{\sigma_w^2}{p_t^-+\sigma_w^2}\,p_t^-, \label{eq:app_upd_p}\\
c_t &= (1-\alpha_t)\,c_t^- \;=\; \frac{\sigma_w^2}{p_t^-+\sigma_w^2}\,c_t^-, \label{eq:app_upd_c}\\
q^P_t &= q_t^{P-}- \beta_t c_t^- \;=\; q_t^{P-} - \frac{(c_t^-)^2}{p_t^-+\sigma_w^2}. \label{eq:app_upd_q}
\end{align}
Equations~\eqref{eq:app_pred_p}-\eqref{eq:app_upd_q} define a discrete-time Riccati recursion.

\paragraph{Steady state and constant-gain approximation.}
Under standard conditions with $\sigma_w^2>0$, the recursion converges to a stabilizing fixed point
$(p_\infty,c_\infty,q_\infty^P)$, implying
\begin{equation}
(\alpha_t,\beta_t)\to(\alpha_\infty,\beta_\infty).
\end{equation}
Empirically, this convergence is fast (a few tens of steps), which motivates replacing time-varying gains by constants
and using the single tunable innovation gain $\omega$ in \textsc{FiBeR}.
Figure~\ref{fig:gain_convergence} visualizes this convergence on a representative setting.

\begin{figure}[ht]
    \centering
    \includegraphics[width=0.85\linewidth]{figures/gain_convergence.png}
    \caption{Convergence of $(\alpha_t,\beta_t)$ and $p_t^-$ under the constant-velocity Kalman recursion
    (Eqs.~\eqref{eq:app_pred_p}-\eqref{eq:app_upd_q}), illustrating the rapid approach to steady-state gains.}
    \label{fig:gain_convergence}
\end{figure}

\subsection{Comparison: DiSK Random-Walk Dynamics Gives EMA}
\label{app:compare_disk}

For reference, DiSK-style gradient-state filtering corresponds to a random-walk latent gradient model
\begin{equation}
s_t = s_{t-1} + q_t,
\qquad
g_t = s_t + w_t
\label{eq:app_disk_rw}
\end{equation}
whose steady-state scalar Kalman filter reduces to the first-order EMA
\begin{equation}
\tilde g_t = (1-\kappa)\tilde g_{t-1} + \kappa g_t.
\label{eq:app_disk_ema}
\end{equation}
Thus, \textsc{FiBeR} differs at the modeling level (constant-velocity vs.\ random-walk),
which leads to innovation filtering rather than gradient-state EMA smoothing.

\section{Filter-Aware Bias Correction}
\label{app:bias}

This appendix derives the DP noise attenuation factor $A(\omega)$ used in our filter-aware
second-moment correction, and then connects it to AdamW's second-moment estimator.

\subsection{Noise Propagation Through the Innovation Filter}
\label{app:bias_state}

We work coordinate-wise (the filter is linear and DP noise is isotropic), and drop the coordinate index.
Assume the innovation filter input is pure DP noise:
\begin{equation}
g_t = w_t,\qquad \mathbb{E}[w_t]=0,\qquad \mathrm{Var}(w_t)=\sigma_w^2,\qquad
w_t \perp w_{t'}\ (t\neq t').
\label{eq:app_pure_noise}
\end{equation}
Recall the innovation recursion (main text \eqref{eq:theory_delta}-\eqref{eq:theory_gtilde}):
\begin{equation}
\nu_t = g_t-\tilde g_{t-1},\qquad
r_t = (1-\omega)r_{t-1}+\omega\nu_t,\qquad
\tilde g_t = \tilde g_{t-1}+r_t.
\label{eq:app_inno_rec}
\end{equation}
Substituting $g_t=w_t$ into \eqref{eq:app_inno_rec} gives the linear system
\begin{align}
\tilde g_t &= (1-\omega)\tilde g_{t-1} + (1-\omega)r_{t-1} + \omega w_t, \label{eq:app_g_update}\\
r_t &= -\omega \tilde g_{t-1} + (1-\omega)r_{t-1} + \omega w_t. \label{eq:app_r_update}
\end{align}
Define the 2D filter state $z_t\triangleq[\tilde g_t,\; r_t]^\top$. Then
\begin{equation}
z_t = M z_{t-1} + \omega b\, w_t,
\qquad
M \triangleq
\begin{bmatrix}
1-\omega & 1-\omega\\
-\omega  & 1-\omega
\end{bmatrix},
\qquad
b\triangleq
\begin{bmatrix}
1\\
1
\end{bmatrix}.
\label{eq:app_state_rec}
\end{equation}

\paragraph{Stability.}
The eigenvalues of $M$ have magnitude $\sqrt{1-\omega}$, hence $M$ is Schur-stable for
$\omega\in(0,1]$. Therefore, a unique stationary covariance exists.

\subsection{Deriving $A(\omega)$ via a Lyapunov Equation}
\label{app:bias_Aomega}

Let $\Sigma \triangleq \lim_{t\to\infty}\mathbb{E}[z_t z_t^\top]$ denote the stationary covariance.
Since $w_t$ is independent of $z_{t-1}$, $\Sigma$ satisfies the discrete Lyapunov equation
\begin{equation}
\Sigma = M\Sigma M^\top + Q,
\qquad
Q \triangleq \omega^2\sigma_w^2\, b b^\top
= \omega^2\sigma_w^2
\begin{bmatrix}
1&1\\
1&1
\end{bmatrix}.
\label{eq:app_lyap}
\end{equation}
Write
\begin{equation}
\Sigma =
\begin{bmatrix}
x & y\\
y & z
\end{bmatrix},
\qquad
\text{so that } x=\mathrm{Var}(\tilde g_t),\ z=\mathrm{Var}(r_t).
\label{eq:app_sigma_xyz}
\end{equation}
Let $\rho\triangleq 1-\omega$, so $M=\begin{bmatrix}\rho&\rho\\ -\omega&\rho\end{bmatrix}$.
Expanding $M\Sigma M^\top$ and equating entries in \eqref{eq:app_lyap} yields the linear system
\begin{align}
x &= \rho^2(x+2y+z) + \omega^2\sigma_w^2, \label{eq:app_eq_x}\\
y &= -\rho\omega(x+y) + \rho^2(y+z) + \omega^2\sigma_w^2, \label{eq:app_eq_y}\\
z &= \omega^2 x -2\rho\omega y + \rho^2 z + \omega^2\sigma_w^2. \label{eq:app_eq_z}
\end{align}
Solving \eqref{eq:app_eq_x}-\eqref{eq:app_eq_z} gives
\begin{align}
x &= \sigma_w^2\,\frac{2-\omega}{4-3\omega}, \label{eq:app_x_sol}\\
y &= \sigma_w^2\,\frac{\omega}{4-3\omega}, \qquad
z = \sigma_w^2\,\frac{2\omega}{4-3\omega}. \label{eq:app_yz_sol}
\end{align}
Therefore,
\begin{equation}
\mathrm{Var}(\tilde g_t)
= x
= A(\omega)\sigma_w^2,
\qquad
A(\omega)\triangleq \frac{2-\omega}{4-3\omega}.
\label{eq:app_Aomega}
\end{equation}

For $\omega\in(0,1]$, $A(\omega)\in[\tfrac12,1]$ with $A(1)=1$ and $\lim_{\omega\to 0^+}A(\omega)=\tfrac12$,
proving Proposition~\ref{lem:Aomega_main}.

\subsection{Implication for AdamW's Second Moment}
\label{app:bias_second_moment}

AdamW forms the coordinate-wise second moment
\begin{equation}
v_t = \beta_2 v_{t-1} + (1-\beta_2)\tilde g_t^2,
\qquad
\hat v_t = \frac{v_t}{1-\beta_2^{t+1}}.
\label{eq:app_adam_second}
\end{equation}
\begin{lemma}[Expectation of Adam's bias-corrected second moment]
\label{lem:adam_second_moment_expectation}
Let $v_t = \beta_2 v_{t-1} + (1-\beta_2)\tilde g_t^2$ with $v_{-1}=0$ and
$\hat v_t = v_t/(1-\beta_2^{t+1})$.
Then for any process $\{\tilde g_t\}$,
\[
\mathbb{E}[\hat v_t]
=
\frac{1-\beta_2}{1-\beta_2^{t+1}}
\sum_{k=0}^{t} \beta_2^{t-k}\, \mathbb{E}[\tilde g_k^2].
\]
In particular, if $\mathbb{E}[\tilde g_k^2]=m_2$ for all $k\le t$, then $\mathbb{E}[\hat v_t]=m_2$ exactly.
\end{lemma}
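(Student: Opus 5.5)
We plan to unroll the linear recursion for $v_t$ and then take expectations, using only linearity of expectation. No assumption on $\{\tilde g_t\}$ beyond finiteness of the second moments $\mathbb{E}[\tilde g_k^2]$ is needed. If some $\mathbb{E}[\tilde g_k^2]=\infty$, both sides are $+\infty$ and the identity holds trivially.

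\textbf{Step 1: closed form for $v_t$.} We show by induction on $t$ that, with $v_{-1}=0$,
\[
v_t=(1-\beta_2)\sum_{k=0}^{t}\beta_2^{t-k}\tilde g_k^2 .
\]
For the base case $t=0$, the recursion gives $v_0=(1-\beta_2)\tilde g_0^2$. For the inductive step, substitute the formula for $v_{t-1}$ into $v_t=\beta_2 v_{t-1}+(1-\beta_2)\tilde g_t^2$. Multiplying by $\beta_2$ raises each exponent $t-1-k$ to $t-k$, and the new term $(1-\beta_2)\tilde g_t^2$ is the $k=t$ summand.

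\textbf{Step 2: take expectations and bias-correct.} Dividing by the deterministic constant $1-\beta_2^{t+1}$ and applying linearity of expectation to the finite sum gives the displayed formula for $\mathbb{E}[\hat v_t]$ directly. Linearity holds for any process, with no independence or stationarity required. Since every term $\tilde g_k^2$ is nonnegative, this step is valid even without integrability.

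\textbf{Step 3: the constant-moment case.} If $\mathbb{E}[\tilde g_k^2]=m_2$ for all $k\le t$, the sum becomes
\[
m_2(1-\beta_2)\sum_{j=0}^{t}\beta_2^{j}=m_2(1-\beta_2^{t+1})
\]
by the finite geometric series. This cancels the normalizer and yields $\mathbb{E}[\hat v_t]=m_2$.

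There is no real obstacle here. The only point requiring care is the index bookkeeping in Step 1: the paper's bias-correction exponent $t+1$ comes from starting the iteration at $t=0$ with $v_{-1}=0$. We should therefore verify that the geometric sum has exactly $t+1$ terms, so that it matches $1-\beta_2^{t+1}$.
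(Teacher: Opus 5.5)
Your proposal is correct and follows the same route as the paper's proof: unroll the recursion to $v_t=(1-\beta_2)\sum_{k=0}^{t}\beta_2^{t-k}\tilde g_k^2$, take expectations by linearity, and divide by the deterministic normalizer. Your geometric-series check for the constant-moment case spells out a step the paper leaves implicit, and it tacitly uses $\beta_2\in[0,1)$ so that $1-\beta_2^{t+1}\neq 0$.
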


\begin{proof}
Unrolling the recursion yields
$v_t = (1-\beta_2)\sum_{k=0}^{t}\beta_2^{t-k}\tilde g_k^2$.
Taking expectation and dividing by $(1-\beta_2^{t+1})$ gives the result.
\end{proof}

Under a standard local-stationarity approximation in which $\mathbb{E}[\tilde g_t^2]$ is (approximately)
constant over the window of interest, unrolling \eqref{eq:app_adam_second} yields
$\mathbb{E}[\hat v_t]\approx \mathbb{E}[\tilde g_t^2]$.

Now decompose $\tilde g_t = s_t + n_t$, where $n_t$ is the contribution of the filtered DP noise.
By linearity of the filter and \eqref{eq:app_Aomega}, $\mathbb{E}[n_t]=0$ and
$\mathbb{E}[n_t^2]=A(\omega)\sigma_w^2$ in steady state.
Assuming $\mathbb{E}[s_t n_t]=0$, we obtain
\begin{equation}
\mathbb{E}[\tilde g_t^2] = \mathbb{E}[s_t^2] + \mathbb{E}[n_t^2]
= \mathbb{E}[s_t^2] + A(\omega)\sigma_w^2,
\label{eq:app_second_moment_decomp}
\end{equation}
which is Proposition~\ref{prop:second_moment_decomp}. This motivates the filter-aware corrected preconditioner
\begin{equation}
\bar v_t = \max\!\big(\hat v_t - A(\omega)\sigma_w^2,\; \epsilon_v\big),
\label{eq:app_vbar}
\end{equation}
used by \textsc{FiBeR}.

\subsection{Bias Correction for Gradient State Filter}
\label{app:disk_error}

We derive the DP noise attenuation factor for the gradient-state EMA used by DiSK.
Consider the scalar recursion
\begin{equation}
\tilde g_t \;=\; (1-\kappa)\tilde g_{t-1} + \kappa g_t,
\label{eq:app_ema_rec}
\end{equation}
and assume the input is pure i.i.d.\ DP noise $g_t=w_t$ with $\mathbb{E}[w_t]=0$ and
$\mathrm{Var}(w_t)=\sigma_w^2$. Unrolling \eqref{eq:app_ema_rec} yields the stationary representation
\begin{equation}
\tilde g_t \;=\; \kappa \sum_{j=0}^{\infty} (1-\kappa)^j w_{t-j},
\label{eq:app_ema_unroll}
\end{equation}
which is well-defined for $\kappa\in(0,1]$. Using independence of $\{w_t\}$,
\begin{align}
\mathrm{Var}(\tilde g_t)
&= \kappa^2 \sum_{j=0}^{\infty} (1-\kappa)^{2j}\,\mathrm{Var}(w_{t-j})
= \kappa^2 \sigma_w^2 \sum_{j=0}^{\infty} (1-\kappa)^{2j} \nonumber\\
&= \kappa^2 \sigma_w^2 \cdot \frac{1}{1-(1-\kappa)^2}
= \sigma_w^2 \cdot \frac{\kappa}{2-\kappa}.
\label{eq:app_ema_var}
\end{align}
Therefore, the EMA state filter attenuates i.i.d.\ DP noise by the factor
\begin{equation}
A_{\mathrm{state}}(\kappa)\;\triangleq\;\frac{\mathrm{Var}(\tilde g_t)}{\sigma_w^2}
\;=\;\frac{\kappa}{2-\kappa}\in(0,1],
\label{eq:app_Astate}
\end{equation}
with $A_{\mathrm{state}}(\kappa)\to 0$ as $\kappa\to 0^+$ and $A_{\mathrm{state}}(1)=1$.

Analogous to Appendix~\ref{app:bias_second_moment}, this implies that when AdamW moments are computed from
the EMA-filtered gradients, the expected DP noise contribution to the bias-corrected second moment is
$A_{\mathrm{state}}(\kappa)\sigma_w^2$. Thus, a filter-aware correction for DiSK is
\begin{equation}
\bar v_t^{\mathrm{(state)}} \;\triangleq\; \max\!\big(\hat v_t - A_{\mathrm{state}}(\kappa)\sigma_w^2,\;\epsilon_v\big),
\label{eq:app_vbar_state}
\end{equation}
which we refer to as DiSK-CORR in the experiments.
\subsection{Finite-time DP noise bound for the innovation filter}
\begin{lemma}[Finite-time DP noise bound for the innovation filter]
\label{lem:finite_time_A}
Let $\tilde w_t$ denote the innovation-filter output driven by i.i.d.\ $w_t\sim \mathcal N(0,\sigma_w^2 I)$
and initialize $\tilde g_{-1}=r_{-1}=0$.
Then for all $t\ge 0$ and each coordinate $i$,
\[
\mathrm{Var}(\tilde w_{t,i}) \le A(\omega)\sigma_w^2,
\]
and moreover $\mathrm{Var}(\tilde w_{t,i})$ converges to $A(\omega)\sigma_w^2$ at a geometric rate
set by the spectral radius of the state matrix.
\end{lemma}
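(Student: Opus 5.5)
We work coordinate-wise. The filter acts identically and independently on each coordinate, and $w_{t,i}$ are i.i.d.\ $\mathcal N(0,\sigma_w^2)$ across $t$ and $i$, so it suffices to treat the scalar case. By linearity, $\tilde w_t$ is exactly the output of the state recursion \eqref{eq:app_state_rec} of Appendix~\ref{app:bias_state} with input $w_t$: $z_t = M z_{t-1} + \omega b\, w_t$, $\tilde w_t = e_1^\top z_t$. The initial condition is $z_{-1}=[\tilde g_{-1}, r_{-1}]^\top = 0$.

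\textbf{Step 1: exact covariance recursion.} Let $\Sigma_t \triangleq \mathbb{E}[z_t z_t^\top]$. Since $w_t$ is independent of $z_{t-1}$, the finite-time Lyapunov recursion holds:
\[
\Sigma_t = M\Sigma_{t-1}M^\top + Q, \qquad \Sigma_{-1}=0,
\]
with $Q=\omega^2\sigma_w^2 bb^\top$ as in \eqref{eq:app_lyap}. Unrolling gives $\Sigma_t=\sum_{k=0}^{t} M^k Q (M^\top)^k$. Let $\Sigma$ be the stationary solution from Appendix~\ref{app:bias_Aomega}; its existence follows from the Schur stability of $M$ shown there, and it equals $\sum_{k\ge 0} M^k Q (M^\top)^k$. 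Subtracting the two Lyapunov equations gives the error identity
\[
\Sigma-\Sigma_t \;=\; M^{t+1}\,\Sigma\,(M^\top)^{t+1}.
\]
Equivalently, $E_t\triangleq\Sigma-\Sigma_t$ satisfies $E_t = M E_{t-1}M^\top$ with $E_{-1}=\Sigma$.

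\textbf{Step 2: the upper bound.} $\Sigma$ is positive semidefinite, so $M^{t+1}\Sigma (M^\top)^{t+1}\succeq 0$, and hence $\Sigma_t\preceq\Sigma$ in the Loewner order. Applying this to $e_1$ gives
\[
\mathrm{Var}(\tilde w_t)=e_1^\top\Sigma_t e_1\le e_1^\top\Sigma e_1 = x = A(\omega)\sigma_w^2,
\]
using \eqref{eq:app_x_sol}. The same argument also shows that $e_1^\top\Sigma_t e_1$ is nondecreasing in $t$, since each added term $M^kQ(M^\top)^k$ is positive semidefinite.

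\textbf{Step 3: geometric rate.} From the error identity,
\[
0\le A(\omega)\sigma_w^2-\mathrm{Var}(\tilde w_t)\le \|M^{t+1}\|_2^2\,\|\Sigma\|_2 .
\]
It remains to bound $\|M^k\|_2$, and this is the step I expect to need the most care, because $M$ is not normal. We split into cases.

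\emph{Case $\omega\in(0,1)$.} The characteristic polynomial is $(\lambda-\rho)^2+\rho\omega$ with $\rho=1-\omega$. Its roots are $\rho\pm i\sqrt{\rho\omega}$, which are distinct and have modulus $\sqrt{1-\omega}=\rho(M)$. So $M=VDV^{-1}$ is diagonalizable, and $\|M^k\|_2\le \mathrm{cond}(V)\,(1-\omega)^{k/2}$. This yields convergence at rate $\rho(M)^{2(t+1)}=(1-\omega)^{t+1}$, with an explicit $\omega$-dependent constant $\mathrm{cond}(V)^2\|\Sigma\|_2$. As a fallback that avoids the eigenvector conditioning, one can use Gelfand's formula, $\|M^k\|\le C_\epsilon(\rho(M)+\epsilon)^k$.

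\emph{Case $\omega=1$.} Here $M$ is nilpotent with $M^2=0$, so $\Sigma_t=\Sigma$ for all $t\ge1$, and convergence occurs in finite time.

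\textbf{Vector case.} Each coordinate inherits the bound and the rate unchanged, so summing over coordinates recovers the finite-time analogue of \eqref{eq:lti_vec_gain}.
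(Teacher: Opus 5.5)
Your proof is correct and follows the paper's argument essentially step for step: the same state-space recursion $z_t = Mz_{t-1}+\omega b\,w_t$ with $\Sigma_{-1}=0$, the unrolled sum $\Sigma_t=\sum_{k=0}^{t}M^kQ(M^\top)^k$ giving $\Sigma_t\preceq\Sigma$, the error identity $\Sigma-\Sigma_t=M^{t+1}\Sigma(M^\top)^{t+1}$, and the nilpotent case at $\omega=1$. The one difference is for $\omega\in(0,1)$: you use the distinct eigenvalues $\rho\pm i\sqrt{\rho\omega}$ to diagonalize $M$ and get the sharp rate $(1-\omega)^{t+1}$ with an eigenvector-conditioning constant, whereas the paper uses Gelfand's formula and obtains the slightly weaker rate $(\rho(M)+\varepsilon)^{2(t+1)}$.
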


\begin{proof}
Define the state $z_t \triangleq [\tilde g_t,\; r_t]^\top$.
For noise-only input, the recursion can be written as the linear time-invariant system
\[
z_t = M z_{t-1} + \omega b\, w_t,\qquad
M \triangleq \begin{bmatrix}1-\omega & 1-\omega\\ -\omega & 1-\omega\end{bmatrix},\quad
b \triangleq \begin{bmatrix}1\\ 1\end{bmatrix}.
\]
Let $\Sigma_t \triangleq \mathbb{E}[z_t z_t^\top]$ denote the (uncentered) covariance since $\mathbb{E}[z_t]=0$.
Using independence of $w_t$ from $z_{t-1}$ and $\mathbb{E}[w_t^2]=\sigma_w^2$,
\[
\Sigma_t = M \Sigma_{t-1} M^\top + Q,\qquad Q \triangleq \omega^2 \sigma_w^2\, b b^\top,
\]
with $\Sigma_{-1}=0$.

By iterating the recursion, we obtain the closed form
\[
\Sigma_t = \sum_{k=0}^{t} M^k Q (M^\top)^k,
\]
which is positive semidefinite and nondecreasing in the Loewner order as $t$ increases.
Since $\omega\in(0,1]$, the matrix $M$ is Schur-stable (its eigenvalues have magnitude $\sqrt{1-\omega}<1$),
so the infinite series converges to the unique stationary covariance
\[
\Sigma_\infty = \sum_{k=0}^{\infty} M^k Q (M^\top)^k,
\]
which is equivalently the unique solution to the discrete Lyapunov equation
$\Sigma_\infty = M \Sigma_\infty M^\top + Q$.

Therefore $\Sigma_t \preceq \Sigma_\infty$ for all $t$, implying
\[
\mathrm{Var}(\tilde g_t) = e_1^\top \Sigma_t e_1 \le e_1^\top \Sigma_\infty e_1 = A(\omega)\sigma_w^2,
\]
where $e_1=[1,0]^\top$.
For the convergence rate, note that
\[
\Sigma_\infty-\Sigma_t
=\sum_{k=t+1}^\infty M^k Q (M^\top)^k
= M^{t+1}\!\left(\sum_{j=0}^\infty M^j Q (M^\top)^j\right)\!(M^\top)^{t+1}
= M^{t+1}\Sigma_\infty (M^\top)^{t+1}.
\]
Hence, by submultiplicativity,
\[
\|\Sigma_\infty-\Sigma_t\|_2 \le \|M^{t+1}\|_2^2 \,\|\Sigma_\infty\|_2.
\]
Since $M$ is Schur-stable ($\rho(M)=\sqrt{1-\omega}<1$), standard results on matrix powers imply that
for any $\varepsilon>0$ there exists a constant $C_\varepsilon$ such that
$\|M^k\|_2 \le C_\varepsilon(\rho(M)+\varepsilon)^k$ for all $k\ge 0$ (e.g., by Gelfand's formula).
Therefore,
\[
\|\Sigma_\infty-\Sigma_t\|_2 \le C_\varepsilon^2\,\|\Sigma_\infty\|_2\,(\rho(M)+\varepsilon)^{2(t+1)},
\]
which establishes geometric convergence. (For $\omega=1$, $M$ is nilpotent and the convergence is finite-time.)

\end{proof}
\subsection{Second-moment decomposition and the cross term}
\begin{proposition}[Second-moment decomposition under filtered DP noise]
\label{prop:second_moment_decomp}
Let $g_t=\bar g_t+w_t$, where $\{w_t\}$ are i.i.d.\ $\mathcal{N}(0,\sigma_w^2)$, and let $\tilde g_t$ be the output
of a (fixed-initialization) stable linear filter $\mathcal{H}$ applied to $\{g_t\}$.
Denote by $n_t \triangleq \mathcal{H}(w)_t$ the filter output when driven only by $\{w_t\}$, and define
$s_t \triangleq \tilde g_t - n_t$ so that $\tilde g_t=s_t+n_t$.
If the DP noise variance gain of $\mathcal{H}$ is $A$ in the sense that $\mathbb{E}[n_t^2]=A\sigma_w^2$ in steady state, then
\begin{equation}
\mathbb{E}[\tilde g_t^2]
=
\mathbb{E}[s_t^2]
+
A\sigma_w^2
+
2\,\mathbb{E}[s_t n_t].
\label{eq:second_moment_decomp}
\end{equation}
In particular, if $\mathbb{E}[s_t n_t]=0$ (e.g., if $s_t$ is independent of $\{w_\tau\}_{\tau\le t}$), then
\begin{equation}
\mathbb{E}[\tilde g_t^2]
=
\mathbb{E}[s_t^2]
+
A\sigma_w^2.
\label{eq:second_moment_decomp_nocross}
\end{equation}
\end{proposition}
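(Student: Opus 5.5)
The plan is to expand the square and then identify each term using the definitions in the statement. Because $\mathcal{H}$ is linear with a fixed initialization, $\tilde g_t=\mathcal{H}(g)_t$ splits into the response to $\{\bar g_t\}$ plus the response to $\{w_t\}$. The second piece is exactly $n_t=\mathcal{H}(w)_t$, so $s_t=\tilde g_t-n_t$ is the filter output driven by the clipped-gradient stream $\{\bar g_\tau\}$ alone. This interpretation is not needed for the identity, but it is what makes the cross term meaningful. From $\tilde g_t=s_t+n_t$ we get pointwise
\[
\tilde g_t^2=s_t^2+n_t^2+2s_tn_t .
\]
Taking expectations, which are finite because the filter is $\ell_2$-stable and the inputs have finite second moments (the clipped gradients are bounded by $C$), and using linearity of expectation gives
\[
\mathbb{E}[\tilde g_t^2]=\mathbb{E}[s_t^2]+\mathbb{E}[n_t^2]+2\,\mathbb{E}[s_tn_t].
\]

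Next, substitute the steady-state variance gain. By hypothesis $\mathbb{E}[n_t^2]=A\sigma_w^2$ in steady state. For concrete filters this is supplied by Proposition~\ref{prop:lti_attenuation}, since $n_t=\sum_j h_j w_{t-j}$ gives $\mathbb{E}[n_t^2]=\sigma_w^2\sum_j h_j^2$, or equivalently by Corollary~\ref{cor:ss_attenuation} through the Lyapunov equation. For the innovation filter, Proposition~\ref{lem:Aomega_main} gives $A=A(\omega)$. Substituting yields \eqref{eq:second_moment_decomp}. With a fixed initialization, Lemma~\ref{lem:finite_time_A} shows that before steady state $\mathbb{E}[n_t^2]\le A\sigma_w^2$, so the identity holds with $A$ replaced by the time-$t$ gain and converges geometrically. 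I will note this only as a remark.

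For the special case, suppose $s_t$ is independent of $\{w_\tau\}_{\tau\le t}$. Since $n_t$ is a measurable (linear) function of $\{w_\tau\}_{\tau\le t}$, $s_t$ and $n_t$ are independent, so $\mathbb{E}[s_tn_t]=\mathbb{E}[s_t]\,\mathbb{E}[n_t]$. Also $\mathbb{E}[n_t]=\sum_j h_j\mathbb{E}[w_{t-j}]=0$, so the cross term vanishes and \eqref{eq:second_moment_decomp_nocross} follows.

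The only delicate step is this last one. In closed-loop training $\bar g_\tau$ depends on past $\theta$, and hence on $w_{\tau'}$ for $\tau'<\tau$, so the independence hypothesis generally fails. The proposition therefore states it as an assumption rather than deriving it. A weaker sufficient condition I would mention is $\mathbb{E}[s_t\mid n_t]=\mathbb{E}[s_t]$, or simply uncorrelatedness, with the case of nonzero correlation deferred to the empirical cross-term audits.
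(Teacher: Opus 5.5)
Your proposal is correct and follows essentially the same route as the paper's proof: split $\tilde g_t=\mathcal{H}(\bar g)_t+\mathcal{H}(w)_t$ by linearity, expand the square, take expectations, and substitute $\mathbb{E}[n_t^2]=A\sigma_w^2$. The paper leaves one step implicit, and you make it explicit: independence together with $\mathbb{E}[n_t]=0$ kills the cross term, and the zero mean relies on the zero initialization $\tilde g_{-1}=r_{-1}=0$.
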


\begin{proof}
By linearity of $\mathcal{H}$, $\tilde g_t=\mathcal{H}(g)_t=\mathcal{H}(\bar g)_t+\mathcal{H}(w)_t$.
With $n_t=\mathcal{H}(w)_t$ and $s_t=\tilde g_t-n_t$, we have $\tilde g_t=s_t+n_t$.
Expanding the square and taking expectations gives
$\mathbb{E}[\tilde g_t^2]=\mathbb{E}[s_t^2]+\mathbb{E}[n_t^2]+2\mathbb{E}[s_t n_t]$.
Substituting $\mathbb{E}[n_t^2]=A\sigma_w^2$ yields \eqref{eq:second_moment_decomp}.
If $\mathbb{E}[s_t n_t]=0$, \eqref{eq:second_moment_decomp_nocross} follows.
\end{proof}



\section{Algorithm Pseudocode}
\label{app:pseudocode}
\paragraph{DP-SGD}
Algorithm \ref{alg:dpsgd} shows the pseudocode for the \textsc{DP-SGD} optimizer.
\begin{algorithm}[h]
\caption{DP-SGD (per-example clipping + Gaussian noise) \cite{abadi2016deep}}
\label{alg:dpsgd}
\begin{algorithmic}[1]
\STATE \textbf{Input:} dataset $\mathcal{D}$, steps $T$, batch size $B$, clipping $C$, step sizes $\{\eta_t\}$, noise std $\sigma_w^2 \leftarrow (\sigma_{\mathrm{DP}}C)^2/B^2$ where $\sigma_{\text{DP}}$ is noise multiplier
\STATE Initialize $\theta_0$
\FOR{$t=0$ \textbf{to} $T-1$}
  \STATE Sample minibatch $B_t \subset \mathcal{D}$ with $|B_t|=B$
  \STATE $\bar g_t \leftarrow \frac{1}{B}\sum_{\xi\in B_t}\mathrm{clip}(\nabla f(\theta_t;\xi), C)$
  \STATE Sample $w_t \sim \mathcal{N}(0,\sigma_{w}^2 I_d)$; \ $g_t \leftarrow \bar g_t + w_t$
  \STATE $\theta_{t+1} \leftarrow \theta_t - \eta_t g_t$
\ENDFOR
\STATE \textbf{Output:} $\theta_T$
\end{algorithmic}
\end{algorithm}
\paragraph{\textsc{FiBeR}}
The pseudocode for the \textsc{FiBeR} optimizer is presented in Algorithm \ref{alg:fiber_compact}.
\begin{algorithm}[ht]
\caption{\textsc{FiBeR}: DP-AdamW with Innovation Filtering and Filter-Aware Bias Correction}
\label{alg:fiber_compact}
\begin{algorithmic}[1]
\STATE \textbf{Input:} $\theta_0$, dataset $\mathcal{D}$, steps $T$, batch size $B$, lr $\eta$, clip $C$, DP noise $\sigma_{\mathrm{DP}}$,
Adam params $(\beta_1,\beta_2,\epsilon)$, weight decay $\lambda$, innovation gain $\omega$, floor $\epsilon_v$,
(optional) two-point $(\kappa,\gamma)$.
\STATE \textbf{Init:} $\tilde g_{-1}=0$, $r_{-1}=0$, $m_{-1}=0$, $v_{-1}=0$, $d_{-1}=0$,
$\sigma_w^2 \leftarrow (\sigma_{\mathrm{DP}}C)^2/B^2$, $A(\omega)\leftarrow \frac{2-\omega}{4-3\omega}$.
\FOR{$t=0,\ldots,T-1$}
\STATE Sample minibatch $B_t\subset\mathcal{D}$, $|B_t|=B$.
\STATE Privatized gradient observation:
\[
g_t=\frac{1}{B}\sum_{\xi\in B_t}\mathrm{clip}\!\Big(
{\frac{1-\kappa}{\kappa\gamma}\nabla f(\theta_t+\gamma d_{t-1};\xi)
+\Big(1-\frac{1-\kappa}{\kappa\gamma}\Big)\nabla f(\theta_t;\xi)},
\ C\Big)+w_t,
\quad w_t\sim\mathcal{N}(0,\sigma_w^2 I_d).
\]
\STATE Innovation filter:
$\nu_t\leftarrow g_t-\tilde g_{t-1}$;\;
$r_t\leftarrow(1-\omega)r_{t-1}+\omega\nu_t$;\;
$\tilde g_t\leftarrow \tilde g_{t-1}+r_t$.
\STATE Adam moments:
$m_t\leftarrow\beta_1 m_{t-1}+(1-\beta_1)\tilde g_t$;\;
$v_t\leftarrow\beta_2 v_{t-1}+(1-\beta_2)(\tilde g_t\odot \tilde g_t)$.
\STATE Bias + filter-aware correction:
$\hat m_t\leftarrow m_t/(1-\beta_1^{t+1})$,\;
$\hat v_t\leftarrow v_t/(1-\beta_2^{t+1})$,\;
$\bar v_t\leftarrow \max(\hat v_t-A(\omega)\sigma_w^2,\epsilon_v)$.
\STATE Update:
$\theta_{t+1}\leftarrow (1-\eta\lambda)\theta_t-\eta\,\hat m_t\oslash(\sqrt{\bar v_t}+\epsilon)$;\;
$d_t\leftarrow \theta_{t+1}-\theta_t$.
\ENDFOR
\STATE \textbf{Output:} $\theta_T$.
\end{algorithmic}
\end{algorithm}

\section{Additional Numerical Results}
\label{app:extra_results}

This appendix provides implementation details, hyperparameter choices, and additional experimental results.


\subsection{Experiment Details}
\label{app:exp_details}

\paragraph{Code and reproducibility.}
All experiments are conducted in PyTorch~\cite{DBLP:conf/nips/PaszkeGMLBCKLGA19}.
\textsc{FiBeR} is implemented as a drop-in optimizer that operates on privatized gradients from the same DPtraining pipeline as the baselines, which apply per-example gradient clipping and additive Gaussian noise.
For a target $(\varepsilon,\delta)$ budget, we compute the required noise multiplier using a RDP
accountant~\cite{wang2019subsampled,bu2023differentially}, as implemented in standard DP libraries such as Opacus~\cite{DBLP:journals/corr/abs-2109-12298}and FastDP~\cite{DBLP:conf/nips/Bu0Z0K24}. Full code is available at \url{https://anonymous.4open.science/r/InnoAdamBC-4752}. Our implementation uses FastDP v2.1, Python 3.12, CUDA 12.6, and PyTorch 2.9.
Upon acceptance, we will release the full implementation under the MIT License with:
\begin{itemize}
    \item Training scripts for all benchmarks
    \item Hyperparameter configurations
    \item Privacy accounting notebooks
    \item Pre-computed experimental results
\end{itemize}
The repository will be archived on Zenodo for permanent DOI.

\paragraph{Hardware.}
Unless otherwise specified, each trial is run on a single GPU. We use RTX 4090 (24GB) or RTX 5090 (32GB) for most benchmarks, and RTX Pro 6000 (96GB) for ImageNet-1k experiments. Training time varies with the dataset and model size; the most expensive setting is ViT-small training on ImageNet-1k, which completes in under 15 days.

\paragraph{Tuning budget and fairness.}
We allocate a fixed hyperparameter search budget of $N_{\mathrm{total}}=100$ trials per method for each dataset and privacy setting. Each method is tuned independently within this budget, and results are reported for the best configuration identified under these constraints. For \textsc{FiBeR}, we employ a staged search-first selecting $(\kappa,\gamma)$, then tuning $\omega$-as a practical strategy for navigating the search space; importantly, this does not increase the total number of trials. The best configuration found within the fixed budget is reported for each method and setting.

\paragraph{Training recipe.}
Gradient accumulation is employed to support large effective batch sizes. For training from scratch, the learning rate is warmed up for a fixed fraction of total steps (e.g., $1/20$),
followed by cosine decay. Model- and dataset-specific settings are kept consistent across methods, including data
augmentation, normalization, and EMA when applicable. To ensure reproducibility, the random seed is set to 42.

\paragraph{Metrics.}
For vision tasks, we report top-1 accuracy, using test accuracy for MNIST and CIFAR datasets and validation accuracy for ImageNet-1k, in accordance with established benchmark practices.
For GLUE tasks, we report the official evaluation metric for each task:
accuracy for MNLI, QNLI, and SST-2, and F1 score for QQP.
For the E2E data, we report standard text-generation metrics including BLEU, ROUGE-L, METEOR, NIST, and CIDEr, as these metrics capture complementary aspects of output quality.
These metrics assess various dimensions such as n-gram overlap, recall-oriented overlap, and consensus with reference texts. We use standard benchmark implementations and default settings for all metrics to ensure comparability with prior work.

\subsection{Hyperparameter Selection}
\label{app:hyperparams}

\paragraph{Main hyperparameters.}
The primary hyperparameters include the number of epochs $E$, batch size $B$, learning rate $\eta$, and clipping threshold $C$,
as well as the DP noise multiplier $\sigma_{\mathrm{DP}}$ computed by the accountant.
\textsc{FiBeR} introduces additional parameters, including innovation gain $\omega$, two-point parameters $(\kappa,\gamma)$,
and a variance floor $\epsilon_v$ to ensure numerical stability in the filter-aware second-moment correction. Unless otherwise specified, we set $(\kappa,\gamma,\omega)=(0.6,0.7,0.9)$ as a default because our sensitivity analyses indicate this choice is near-optimal and robust across a broad range of settings (Appendix~\ref{app:sensitivity}). For CIFAR and MNIST, we follow prior work \cite{zhang2024disk}  and select hyperparameters on the test set due to the absence of an official validation split.

\paragraph{Privacy accounting.}
The noise multiplier is computed using an RDP accountant under fixed batch size
sampling without replacement.
We use RDP orders $\alpha \in \{1.1, 1.2, \ldots, 10.0, 12, 13, \ldots, 63\}$ and convert to $(\varepsilon,\delta)$ via the standard bound
$\varepsilon = \min_{\alpha}\left(\epsilon_{\mathrm{RDP}}(\alpha) + \frac{\log(1/\delta)}{\alpha-1}\right)$.

\paragraph{Clipping and $\delta$.}
Normalized $\ell_2$ clipping is applied as follows:
\begin{equation}
\mathrm{clip}(g,C) \;\triangleq\; g \cdot \min\left\{1, \frac{C}{\|g\|_2}\right\}.
\end{equation}
Unless otherwise specified, $\delta$ is set to $1/N^{1.1}$.
Table~\ref{tab:delta_values} provides the $\delta$ values for each dataset and task.

\begin{table}[ht]
\caption{Privacy parameter $\delta=1/N^{1.1}$ used across datasets/tasks.}
\label{tab:delta_values}
\centering
\begin{small}
\setlength{\tabcolsep}{6pt}
\renewcommand{\arraystretch}{1.10}
\begin{tabular}{lc}
\toprule
Dataset/Task & $\delta$ \\
\midrule
MNIST & $5.5\times 10^{-6}$ \\
CIFAR-10 & $6.8\times 10^{-6}$ \\
CIFAR-100 & $6.8\times 10^{-6}$ \\
ImageNet-1k & $1.9\times 10^{-7}$ \\
MNLI & $6.3\times 10^{-7}$ \\
QNLI & $4.8\times 10^{-7}$ \\
SST-2 & $4.9\times 10^{-6}$ \\
QQP & $7.6\times 10^{-7}$ \\
E2E & $1.2\times 10^{-5}$ \\
DART & $6.1\times 10^{-6}$ \\
\bottomrule
\end{tabular}
\end{small}
\end{table}

\paragraph{Search grids.}
Table~\ref{tab:search_grid_cv} summarizes hyperparameter grids for vision training from scratch.
AdamW defaults $(\beta_1,\beta_2)=(0.9,0.999)$ and $\epsilon=10^{-8}$ are used unless otherwise specified.
$\epsilon_v=10^{-8}$ is set by default. The detail reason for choosing this value of $\epsilon_v=10^{-8}$ is discussed in Appendix \ref{app:floor_sensitivity}

\begin{table}[ht]
\caption{Search grids for vision training-from-scratch (optimal values in bold).
We tune $(E,B,\eta)$ first, then tune $(\omega,\kappa,\gamma)$ on top of the best base setup.}
\label{tab:search_grid_cv}
\centering
\begin{small}
\setlength{\tabcolsep}{5pt}
\renewcommand{\arraystretch}{1.08}
\begin{tabular}{lccc}
\toprule
 & MNIST & CIFAR & ImageNet-1k \\
\midrule
Epochs $E$ &
$\{1,\textbf{2},3\}\times 20$ &
$\{1,\textbf{2},3,4\}\times 40$ &
$\{3,\textbf{4}\}\times 40$ \\
Batch size $B$ &
$\{2,\textbf{5}\}\times 10^3$ &
$\{0.5,1,2,\textbf{5}\}\times 10^3$ &
$\{\textbf{5},10\}\times 10^3$ \\
LR $\eta$ &
$\{0.5,\textbf{0.25},0.05,0.025\}$ &
$\{\textbf{0.005},0.001,0.0005,0.0001\}$ &
$\{10,3,1,\textbf{0.3},0.1\}\times 10^{-3}$ \\
\midrule
$\omega$ &
$\{\textbf{0.9}\}$ &
$\{0.6,0.7,0.8,\textbf{0.9},0.99\}$ &
$\{\textbf{0.9}\}$ \\
$\kappa$ & $\{\textbf{0.6}\}$ & $\{0.5,\textbf{0.6},0.7,0.8,0.9\}$ & $\{\textbf{0.6}\}$ \\
$\gamma$ & $\{\textbf{0.7}\}$ & $\{0.6,\textbf{0.7},0.8,0.9\}$ & $\{\textbf{0.7}\}$ \\
\bottomrule
\end{tabular}
\end{small}
\end{table}
\subsection{Synthetic Drift Diagnostics for Innovation Filtering}
\label{app:synthetic_drift_diagnostics}
Table~\ref{tab:ablation_residual_vs_gradient} indicates that innovation filtering can substantially help in some regimes (especially when gradients drift), yet it can be less favorable in other regimes where simpler EMA smoothing suffices (e.g., at larger $\varepsilon$ in Table~4).
To provide direct evidence isolating drift-tracking effects from end-to-end network training, we run a controlled synthetic experiment where the latent (noise-free) gradient signal follows a known drift model, and the observed gradients are corrupted by additive Gaussian noise.
We then compare (i) EMA smoothing (DiSK-style) and (ii) our innovation filter (Eqs.~\eqref{eq:theory_delta}--\eqref{eq:theory_gtilde}) on their ability to track the latent signal.

\paragraph{Generative models.}
We generate a latent ``true'' gradient signal $\{s_t\}_{t=1}^T \subset \mathbb{R}^d$ and noisy observations $\{g_t\}$ via
\begin{equation}
g_t = s_t + w_t, \qquad w_t \sim \mathcal{N}(0, \sigma_w^2 I).
\label{eq:synth_obs}
\end{equation}
We consider two canonical drift models:
\begin{itemize}
  \item \textbf{Constant-velocity (CV) model.} The latent signal exhibits trend-like dynamics (second-order drift), i.e., $s_t$ evolves with approximately constant velocity up to small perturbations.
  \item \textbf{Random-walk (RW) model.} The latent signal follows first-order diffusive dynamics, i.e., $s_t$ evolves as a random walk (near-stationary increments).
\end{itemize}
We summarize noise conditions using the signal-to-noise ratio (SNR) shown in Fig.~\ref{fig:synthetic_drift_heatmaps}:
\begin{equation}
\mathrm{SNR} \;\triangleq\; \sigma_s^2 / \sigma_w^2,
\end{equation}
where $\sigma_s^2$ denotes the latent signal scale (as instantiated in the synthetic generator) and $\sigma_w^2$ is the observation-noise variance.

\paragraph{Filters compared.}
Given observations $\{g_t\}$, we form filtered estimates $\{\tilde g_t\}$ using:
\begin{itemize}
  \item \textbf{EMA (DiSK-style smoothing).} A first-order exponential moving average of the observed gradients.
  \item \textbf{Innovation filter (FIBER).} A constant-gain innovation recursion that filters the residual (innovation) stream and integrates it to form $\tilde g_t$ (Eqs.~\eqref{eq:theory_delta}--\eqref{eq:theory_gtilde}).
\end{itemize}

\paragraph{Evaluation metrics.}
For each run, we evaluate tracking quality using the mean-squared error (MSE) against the latent signal:
\begin{equation}
\mathrm{MSE}(\tilde g, s) \;=\; \frac{1}{T}\sum_{t=1}^T \|\tilde g_t - s_t\|_2^2.
\end{equation}
We report the relative improvement of innovation filtering over EMA as:
\begin{equation}
\mathrm{Improvement}(\%) \;\triangleq\; 100 \cdot \frac{\mathrm{MSE}_{\mathrm{EMA}} - \mathrm{MSE}_{\mathrm{Innov}}}{\mathrm{MSE}_{\mathrm{EMA}}}.
\label{eq:synth_improvement}
\end{equation}
The percent improvement in Eq.~\eqref{eq:synth_improvement} can be unbounded below when $\mathrm{MSE}_{\mathrm{EMA}}$ is small, meaning large negative values reflect rare but severe failures of innovation filtering in some regimes (particularly under RW dynamics).
To provide a robust summary, we also report win rates and medians, and clip heatmap visualizations to $[-200\%,100\%]$ for readability. Positive values indicate that innovation filtering improves tracking (i.e., lowers MSE relative to EMA). The \textbf{win rate} is defined as the fraction of evaluated configurations with positive improvement.

\paragraph{Experiment sweep and privacy budgets.}
We run the synthetic diagnostic for privacy budgets $\varepsilon \in \{0.5, 1, 2, 4, 8\}$.
For each $\varepsilon$, we evaluate a fixed set of $7$ configurations (hence win rates are multiples of $1/7$), spanning a range of noise/drift conditions; the corresponding SNR coverage is visualized in Fig.~\ref{fig:synthetic_drift_heatmaps}.
For readability in the heatmaps, we clip displayed improvements to $[-200\%, 100\%]$, while summary statistics (Table~\ref{tab:synthetic_drift_summary_over_eps}) are computed from the underlying (unclipped) values.

\paragraph{Aggregate results across privacy budgets.}
Table~\ref{tab:synthetic_drift_summary_over_eps} summarizes how the innovation filter behaves as the privacy budget varies.
Under the CV model, the innovation filter's win rate increases monotonically with $\varepsilon$ (from $3/7$ at $\varepsilon=0.5$ to $7/7$ at $\varepsilon=8$), and the best-case improvement is consistently near $100\%$ across all privacy budgets.
However, at very small $\varepsilon$ the average improvement can be dominated by rare catastrophic failures (large negative outliers), yielding a negative mean despite substantial best-case gains; as $\varepsilon$ increases, the average improvement becomes positive (e.g., $+59.9\%$ at $\varepsilon=4$ and $+89.2\%$ at $\varepsilon=8$).
This pattern is consistent with innovation filtering being well-matched to trend-like (CV) dynamics, but requiring sufficient effective SNR and/or conservative gain settings to avoid instability when observation noise dominates.

In contrast, under the RW model, innovation filtering is rarely favorable: the win rate is $0/7$ for $\varepsilon \le 4$, and only $1/7$ at $\varepsilon=8$.
The best-case improvement becomes less negative as $\varepsilon$ increases (from $-1225.6\%$ at $\varepsilon=0.5$ to $-15.4\%$ at $\varepsilon=4$), and becomes positive in a single configuration at $\varepsilon=8$ ($+69.2\%$), but the mean remains strongly negative overall.
These results align with the intuition that EMA smoothing is closer to the appropriate constant-gain estimator under RW-like dynamics, whereas innovation filtering is designed to track persistent trend (CV) behavior.

\paragraph{Qualitative tracking behavior.}
Fig.~\ref{fig:synthetic_tracking_error_timeseries} illustrates a representative CV run at $\mathrm{SNR}=0.05$, plotting the $\ell_2$ tracking error over time.
EMA error accumulates steadily, reflecting lag under drift, whereas innovation filtering maintains a low and approximately stable error over the entire horizon, directly visualizing the drift-tracking advantage in a regime where the model matches the innovation filter's inductive bias.

The synthetic diagnostic provides controlled support for the regime-dependent behavior seen in network training:
innovation filtering is advantageous when gradients exhibit trend-like drift (CV) and the effective SNR is not extremely low, while EMA smoothing can be competitive or preferable under RW-like (diffusive) dynamics.
We therefore interpret cases where DiSK outperforms at large $\varepsilon$ (e.g., $\varepsilon=8$ in Table~4) as consistent with a regime where the gradient signal is closer to RW/stationary behavior and additional innovation dynamics are unnecessary.

\begin{table}[ht]
\centering
\small
\caption{Synthetic drift summary across privacy budgets. ``Win rate'' is the fraction of configurations (out of 7) where innovation filtering improves over EMA (Eq.~\eqref{eq:synth_improvement}). ``Best'' and ``Avg'' are the best-case and mean improvements (\%).}
\label{tab:synthetic_drift_summary_over_eps}
\begin{tabular}{c c r r r r r}
\toprule
$\varepsilon$ & Model & Win rate (\%) & Wins & Best (\%) & Avg (\%) \\
\midrule
0.5 & CV & 42.9 & 3/7 & 99.59 & -388.59 \\
1.0 & CV & 57.1 & 4/7 & 99.90 & -189.32 \\
2.0 & CV & 71.4 & 5/7 & 99.97 & -26.14 \\
4.0 & CV & 85.7 & 6/7 & 99.99 & 59.91 \\
8.0 & CV & 100.0 & 7/7 & 100.00 & 89.23 \\
\midrule
0.5 & RW & 0.0 & 0/7 & -1225.59 & -1466.996 \\
1.0 & RW & 0.0 & 0/7 & -781.19 & -1396.070 \\
2.0 & RW & 0.0 & 0/7 & -277.68 & -1297.283 \\
4.0 & RW & 0.0 & 0/7 & -15.40 & -1180.544 \\
8.0 & RW & 14.3 & 1/7 & 69.15 & -1014.144 \\
\bottomrule
\end{tabular}
\end{table}

\begin{figure}[ht]
\centering
\includegraphics[width=0.7\linewidth]{figures/winrate.png}
\caption{Synthetic drift diagnostic: (left) innovation filter win rate vs.\ privacy budget $\varepsilon$; (right) best-case improvement vs.\ $\varepsilon$, for both CV and RW latent dynamics.}
\label{fig:synthetic_drift_winrate_bestcase}
\end{figure}

\begin{figure}[ht]
\centering
\includegraphics[width=0.7\linewidth]{figures/snr_heatmaps.png}
\caption{Improvement heatmaps (clipped to $[-200\%,100\%]$ for visualization) across SNR values and privacy budgets $\varepsilon$, shown separately for CV (left) and RW (right) latent dynamics.}
\label{fig:synthetic_drift_heatmaps}
\end{figure}

\begin{figure}[ht]
\centering
\includegraphics[width=0.7\linewidth]{figures/tracking.png}
\caption{Representative CV run at $\mathrm{SNR}=0.05$: $\ell_2$ tracking error over time for EMA vs.\ innovation filtering (and Kalman reference, if included in the script). Innovation filtering exhibits substantially lower drift-tracking error across the horizon.}
\label{fig:synthetic_tracking_error_timeseries}
\end{figure}

\subsection{Empirical Assumption Audits}
\label{app:fairness-assumption-audits}

\paragraph{Empirical audit of Proposition~\ref{prop:second_moment_decomp} assumptions.}
The filter-aware correction in Proposition~\ref{prop:second_moment_decomp} relies on the steady-state second-moment decomposition (Proposition~\ref{prop:second_moment_decomp}),
which assumes the standard uncorrelatedness approximation $\mathbb{E}[s_{t,i}n_{t,i}] = 0$, and on a local-stationarity approximation in Appendix~\ref{app:bias_second_moment} (treating $\mathbb{E}[\tilde g_{t,i}^2]$ as approximately constant over the averaging window).
As also discussed in Section~\ref{sec:discussion_conclusion}, these assumptions are approximate under closed-loop optimization, and may be affected by regimes where DP noise dominates or where clipping/noise is heterogeneous.

To quantify the approximation error, we extend the paired-run differencing protocol of Section~\ref{sec:variance_attenuation} beyond variance attenuation.
In addition to estimating the attenuation factor $A(\omega)$, we estimate a proxy for the cross term $\mathbb{E}[s_t n_t]$ via projected logging.
Let $u \in \mathbb{R}^d$ be a fixed random unit vector shared across runs/projections.
We log (i) the projected filtered gradient $x_t \triangleq u^\top \tilde g_t$ during training and (ii) the projected DP noise realization $y_t \triangleq u^\top w_t$ (available at generation time in the code).
We then apply the same innovation filter (Eqs.~\eqref{eq:theory_delta}--\eqref{eq:theory_gtilde}) to the noise projection $\{y_t\}$ offline to obtain the projected filtered-noise component $n_t^{(\mathrm{proj})}$.
Finally we define the projected signal proxy $s_t^{(\mathrm{proj})} \triangleq x_t - n_t^{(\mathrm{proj})}$ and compute summary statistics after a warmup period.

\paragraph{Results.}
Table~\ref{tab:assumption-audit} reports a representative diagnostic at $\omega=0.9$ over $T=800$ steps, using a warmup of 100 steps and statistics computed on the remaining 700 steps.
We observe that the local-stationarity proxy holds well (coefficient of variation $\approx 0.031$ in a sliding-window variance estimate), supporting the use of a steady-state approximation over this window.
Regarding uncorrelatedness, the projected correlation is modest ($\widehat{\rho}(s,n) \approx 0.140$), and the normalized cross term is small relative to the filtered-noise energy,
$\left|\widehat{\mathbb{E}}[sn]\right|/\widehat{\mathbb{E}}[n^2] \approx 0.1505$.
Overall, this diagnostic suggests that while closed-loop coupling is not strictly zero, its magnitude is limited in this setting and is consistent with the approximation used in Proposition~\ref{prop:second_moment_decomp}.
This supports the limitations discussion in Section~\ref{sec:discussion_conclusion} and motivates reporting assumption-audit diagnostics alongside the correction.

\paragraph{Practical implications and failure-mode guidance.}
When the cross term is non-negligible, subtracting only $A(\omega)\sigma_w^2$ (Equation~\ref{eq:v_correct}) may not fully remove optimizer-state inflation (if $\mathbb{E}[sn] > 0$) or may over-correct (if $\mathbb{E}[sn] < 0$).
As a conservative safeguard, we recommend (i) reporting the assumption-audit metrics across multiple random projections and seeds, and (ii) monitoring over-correction diagnostics such as the ``clamp mass'' statistic in Appendix~\ref{app:floor_sensitivity}.
In regimes where the estimated cross-term ratio is consistently large, we suggest reducing the innovation gain $\omega$, increasing the variance floor $\epsilon_v$, or using layerwise clipping/noise as a robustness test.

\begin{table}[t]
\centering
\small
\caption{Proxy diagnostics for Proposition~\ref{prop:second_moment_decomp} assumptions in one representative run (projection index 0). ``Cross-term ratio'' is $\left|\widehat{\mathbb{E}}[sn]\right|/\widehat{\mathbb{E}}[n^2]$.}
\label{tab:assumption-audit}
\begin{tabular}{l r}
\toprule
Metric & Value \\
\midrule
$\omega$ & $0.9$ \\
Total steps $T$ & $800$ \\
Warmup / steady-state steps & $100$ / $700$ \\
\midrule
$\widehat{\rho}(s,n)$ & $0.1404$ \\
Cross-term ratio $\left|\widehat{\mathbb{E}}[sn]\right|/\widehat{\mathbb{E}}[n^2]$ & $0.1505$ \\
\midrule
Coeff. of variation (CV) & $0.0313$ \\
\bottomrule
\end{tabular}
\end{table}


\subsection{Hyperparameter Sensitivity}
\label{app:sensitivity}

We investigate the sensitivity of \textsc{FiBeR} to the two-point hyperparameters $(\kappa,\gamma)$ using the CNN5 model on the CIFAR-10 dataset.
The experiments are conducted under a privacy budget of $\varepsilon=4$ for 80 epochs. The innovation gain is fixed at $\omega=0.9$,
and a grid search is performed over $(\kappa,\gamma)$, while all other training and differential privacy settings remain constant.

Figure~\ref{fig:sensitivity_heatmaps}(\textbf{a}) presents the resulting test accuracy as a function of $(\kappa,\gamma)$.
Several notable trends are observed. First, performance varies smoothly across most of the grid, indicating that  \textsc{FiBeR} does not exhibit excessive sensitivity to moderate
changes in $(\kappa,\gamma)$. Second, the optimal region is concentrated around $\kappa\approx 0.6$ and $\gamma\approx 0.7$,
where the peak accuracy of 75.44\% is observed. Deviation from this region generally results in reduced performance, particularly for larger
$\kappa$ values (e.g., $\kappa\ge 0.8$), where accuracy plateaus in the mid-60\% range. Third, very small $\kappa$ values can be unstable depending on
$\gamma$. At $\kappa=0.5$, the accuracy varies substantially (from 52.82\% to 73.33\%), indicating that overly aggressive two-point mixing
may be sensitive when combined with particular extrapolation scales.

Recall that the two-point construction uses a mixing coefficient
\begin{equation}
a \;\triangleq\; \frac{1-\kappa}{\kappa\gamma},
\label{eq:sens_a_def}
\end{equation}
so that (before clipping/noise) the two-point per-example vector is a weighted combination of a lookahead gradient and a current gradient.
A natural stability requirement is that this combination remains convex, i.e., $0\le a \le 1$, which avoids negative weights that can amplify
the update norm and interact poorly with clipping.
For $\kappa\in(0,1)$ and $\gamma>0$, we always have $a\ge 0$, and $a\le 1$ is equivalent to the simple constraint
\begin{equation}
\gamma \;\ge\; \frac{1-\kappa}{\kappa}.
\label{eq:sens_convex_region}
\end{equation}
This constraint is particularly meaningful under DP because clipping is nonlinear: if $a>1$, the weight on the current gradient becomes negative,
which can increase the norm of the combined vector and trigger additional clipping, thereby increasing clipping-induced distortion.
Conversely, when $a\in[0,1]$ (convex mixing), the combined vector cannot exceed the convex hull of the two endpoints, providing a basic guardrail against
norm explosion prior to clipping.

The observed optimum $(\kappa,\gamma)\approx(0.6,0.7)$ aligns closely with this convex-mixing boundary.
Indeed, $\kappa=0.6$ implies $(1-\kappa)/\kappa \approx 0.667$, and $\gamma=0.7$ lies just above this threshold, yielding
$a \approx 0.4/(0.6\cdot 0.7)\approx 0.952$.
Therefore, the optimal region corresponds to a near-lookahead update, in which most weight is assigned to the lookahead gradient while retaining a small stabilizing
weight on the current gradient. This configuration is consistent with enhanced robustness under clipping and differential privacy noise.
At the boundary $\gamma=(1-\kappa)/\kappa$, we have $a=1$, i.e., the construction becomes ``pure lookahead''; slightly larger $\gamma$ keeps the mixture
as convex ($a<1$) while providing additional robustness.

Next, we study sensitivity with respect to the temporal denoising gain by fixing $\gamma=0.7$ (near-optimal from the previous sweep) and tuning $(\kappa,\omega)$.
Figure~\ref{fig:sensitivity_heatmaps}(\textbf{b}) demonstrates that accuracy generally improves as $\omega$ increases from 0.5 to 0.9 across most $\kappa$ values,
which is consistent with the observation that stronger innovation smoothing is beneficial under stringent privacy constraints. The optimal configuration again occurs near $\kappa=0.6$ with $\omega=0.9$
(75.44\%), whereas larger $\kappa$ values result in a marked decrease in accuracy even at high $\omega$. Notably, the landscape remains smooth, indicating that
$\omega$ can be tuned independently once $(\kappa,\gamma)$ are established within a stable region.

In summary, these heatmaps support two practical conclusions: (i) a broad, stable region of strong performance exists around
$(\kappa,\gamma)=(0.6,0.7)$, and (ii) $(\kappa,\gamma)$ primarily control the two-point construction (including the convex-mixing constraint
\eqref{eq:sens_convex_region}), while $\omega$ governs temporal denoising, with both effects being well-behaved and amenable to independent tuning.

\begin{figure}[ht]
    \centering
    \begin{subfigure}[ht]{0.49\linewidth}
        \centering
        \includegraphics[width=\linewidth]{figures/sensitivity_kappa_gamma_fixed_omega.png}
        \caption{Sweep over $(\kappa,\gamma)$ with fixed $\omega=0.9$.}
        \label{fig:kappa_gamma_heatmap}
    \end{subfigure}\hfill
    \begin{subfigure}[ht]{0.45\linewidth}
        \centering
        \includegraphics[width=\linewidth]{figures/sensitivity_kappa_omega_fixed_gamma.png}
        \caption{Sweep over $(\kappa,\omega)$ with fixed $\gamma=0.7$.}
        \label{fig:kappa_omega_heatmap}
    \end{subfigure}
    \caption{Hyperparameter sensitivity of \textsc{FiBeR} on CIFAR-10 (CNN5, $\varepsilon=4$, 80 epochs). Values denote test accuracy (\%).}
    \label{fig:sensitivity_heatmaps}
\end{figure}

\subsection{Choice of $\epsilon_v$ (Variance Floor)}
\label{app:floor_sensitivity}
\paragraph{Over-correction diagnostics.}
The filter-aware correction method subtracts $A(\omega)\sigma_w^2$ from the bias-corrected second 
moment in AdamW. If this subtraction is excessively large, or if the variance floor is set too high, 
the resulting preconditioner may become dominated by the floor value, which effectively reduces 
adaptivity. To quantify this effect, we monitor a diagnostic metric: the fraction of first-moment 
mass located on clamped coordinates,
\[
\mathrm{clamp\_mass}_t = \frac{\sum_i \mathbf{1}[\bar{v}_{t,i}=\epsilon_v]\;|\hat{m}_{t,i}|}{\sum_i |\hat{m}_{t,i}|}.
\]
\textbf{Interpretation:} Clamp mass measures what fraction of the total update magnitude is applied 
to coordinates whose preconditioner has hit the floor. When $\mathrm{clamp\_mass}_t \approx 0$, 
the floor is inactive and adaptivity is preserved; when $\mathrm{clamp\_mass}_t \approx 1$, nearly 
all updates are applied to floor-clamped coordinates, meaning the optimizer has effectively degraded 
to a uniform-preconditioner method (loss of adaptivity). Intermediate values indicate partial 
floor activation.

\paragraph{Sensitivity to $\epsilon_v$.}
Floor sensitivity is evaluated on CIFAR-10 using the CNN5 architecture under $\varepsilon=1.0$ with 
$\eta=0.005$, $(\kappa,\gamma,\omega)=(0.6,0.7,0.9)$, and 80 training epochs, while varying 
$\epsilon_v$ across the set $\{10^{-8},10^{-7},10^{-6},10^{-5}\}$. 
The resulting clamp behavior is illustrated in Figure~\ref{fig:epsv_sensitivity}.

For small variance floors ($10^{-8}$ and $10^{-7}$), clamp\_mass rapidly decreases to near zero 
following the initial transient phase, indicating that the variance floor functions primarily as a 
numerical safeguard and does not significantly influence the update direction during the majority of 
the training process. In this regime, AdamW's adaptivity is fully preserved: each 
coordinate receives a step size proportional to $1/\sqrt{v_i}$ rather than the uniform floor value.

In contrast, larger floors ($10^{-6}$ and $10^{-5}$) lead to persistently high clamp\_mass 
($>0.8$ after transient), implying that a substantial fraction of the update magnitude is applied 
to clamped coordinates. This outcome corresponds to a preconditioner dominated by 
the floor value: instead of adaptive per-coordinate step sizes, the optimizer applies nearly uniform 
steps $\propto 1/\sqrt{\epsilon_v}$ across most parameters, eliminating the benefits of AdamW's 
second-moment adaptation. The floor has effectively converted AdamW into a momentum-SGD-like method 
with a fixed preconditioner.

Based on this sensitivity analysis, we set the default $\epsilon_v=10^{-8}$, which yields stable 
training with negligible floor activation ($\mathrm{clamp\_mass}_t < 0.01$ after warm-up) and 
preserves the intended effect of filter-aware variance subtraction.

\begin{figure}[ht]
    \centering
    \includegraphics[width=0.7\linewidth]{figures/epsilon_floor.png}
    \caption{\textbf{Sensitivity to the variance floor $\epsilon_v$ (CIFAR-10, CNN5, $\varepsilon=1$).}
    Clamp diagnostics while sweeping $\epsilon_v\in\{10^{-8},10^{-7},10^{-6},10^{-5}\}$ with fixed
    $\eta=0.005$ and $(\kappa,\gamma,\omega)=(0.6,0.7,0.9)$. Small floors ($10^{-8}$, $10^{-7}$) yield negligible clamp activity
    after a short transient, while larger floors ($10^{-6}$, $10^{-5}$) induce a floor-dominated regime.}
    \label{fig:epsv_sensitivity}
\end{figure}

\section{Computational Resource Analysis}
\label{app:compute}

This appendix analyzes computational costs for \textsc{FiBeR}, DiSK, and DP-AdamW on CIFAR-10 (CNN5) across privacy budgets. All experiments used the same hardware and training settings.

\subsection{Measurement Methodology}

We measure computational resources using four complementary metrics:

\begin{enumerate}
    \item \textbf{Wall-clock time}: Total training time in seconds, measured from initialization to final evaluation. This reflects real-world latency and includes all overheads (data loading, gradient computation, optimizer updates, privacy accounting).
    
    \item \textbf{Time per step}: Average time per training iteration in seconds. This isolates per-update overhead without amortization effects.
    
    \item \textbf{Throughput}: Training throughput measured in images processed per second. This hardware-dependent metric reflects end-to-end training efficiency. Higher throughput indicates faster training.
    
    \item \textbf{Peak memory}: Maximum GPU memory usage during training in gigabytes. This determines hardware requirements and batch size limits.
\end{enumerate}

For two-point methods (\textsc{FiBeR} and DiSK), computational costs include both gradient evaluations required for the two-point gradient construction (at $\theta_t$ and $\theta_t + \gamma d_{t-1}$). Privacy accounting overhead is negligible ($<$0.1\% of total time) and consistent across all methods.

\subsection{Resource Usage Summary}

Table~\ref{tab:compute_resources} summarizes costs by method and privacy budget.

\begin{table}[t]
\centering
\caption{Computational resource usage on CIFAR-10 (CNN5) across privacy budgets. Measurements on an RTX 5090 GPU with batch size 5000 and 80 epochs. Two-point methods (DiSK, \textsc{FiBeR}) use two gradient evaluations per step, yielding $\sim$1.79$\times$ wall-clock overhead and $\sim$0.53$\times$ throughput relative to DP-AdamW. Memory overhead is minimal ($<0.03$\,GB).}
\label{tab:compute_resources}
\small
\setlength{\tabcolsep}{4pt}
\begin{tabular}{l c r r r r r}
\toprule
Method & $\varepsilon$ & \makecell{Acc.\\(\%)} & \makecell{Time\\(s)} &
\makecell{Throughput\\(K imgs/s)} & \makecell{Time/Step\\(ms)} & \makecell{Mem.\\(GB)} \\
\midrule
DP-AdamW & 0.5 & 42.92 & 369.0 & 1194.8 & 4.18 & 0.105 \\
DP-AdamW & 1.0 & 48.57 & 369.1 & 1195.2 & 4.18 & 0.105 \\
DP-AdamW & 8.0 & 64.30 & 369.0 & 1197.6 & 4.18 & 0.105 \\
\midrule
DiSK & 0.5 & 61.62 & 659.4 & 638.9 & 7.83 & 0.110 \\
DiSK & 1.0 & 64.85 & 659.2 & 639.3 & 7.82 & 0.110 \\
DiSK & 8.0 & 69.50 & 660.5 & 639.3 & 7.82 & 0.110 \\
\midrule
\textsc{FiBeR} & 0.5 & 66.78 & 659.4 & 638.4 & 7.83 & 0.113 \\
\textsc{FiBeR} & 1.0 & 70.87 & 659.4 & 638.6 & 7.83 & 0.113 \\
\textsc{FiBeR} & 8.0 & 76.75 & 661.8 & 638.1 & 7.84 & 0.113 \\
\bottomrule
\end{tabular}
\end{table}

\paragraph{Summary:}

\begin{itemize}
    \item \textbf{Two-point overhead:} \textsc{FiBeR} and DiSK require two gradient evaluations per update, resulting in:
    \begin{itemize}
        \item \textbf{1.787x wall-clock overhead} (660s vs 369s)
        \item \textbf{1.873x time-per-step overhead} (7.83ms vs 4.18ms)
        \item \textbf{0.533x throughput ratio} (639K imgs/s vs 1195K imgs/s)
    \end{itemize}
    The gap between 1.873x per-step overhead and 1.787x wall-clock overhead reflects startup and evaluation time amortization. The observed 1.79x overhead is substantially better than the theoretical 2x maximum, indicating effective GPU parallelization and shared computation between the two gradient evaluations.
    
    \item \textbf{Minimal memory overhead:} \textsc{FiBeR} uses only 0.008 GB more memory (+7.6\%) than DP-AdamW, reflecting three extra gradient buffers.
    
    \item \textbf{Consistency:} Computational metrics are nearly constant across privacy budgets, as the privacy parameter only scales DP noise.
    
    \item \textbf{\textsc{FiBeR} vs DiSK:} Computational profiles are nearly identical.
    \begin{itemize}
        \item Wall-clock time: 659.4-661.8s (\textsc{FiBeR}) vs 659.2-660.5s (DiSK) 
        \item Throughput: 638.1-638.6 K imgs/s (\textsc{FiBeR}) vs 638.9-639.3 K imgs/s (DiSK)
        \item Memory: 0.113 GB (\textsc{FiBeR}) vs 0.110 GB (DiSK) (+2.7\%)
    \end{itemize}
    \textsc{FiBeR}'s innovation filtering and correction add $<$0.5\% overhead beyond dual gradient evaluations.
\end{itemize}

\subsection{Compute-Accuracy Trade-offs}

We quantify compute efficiency using:
\begin{equation}
\text{Efficiency} = \frac{\text{Accuracy gain over baseline}}{\text{Time overhead ratio} - 1}
\end{equation}

where time overhead ratio = (method time) / (DP-AdamW time). This measures accuracy points gained per unit of additional computational overhead.

Table~\ref{tab:compute_efficiency} reports efficiency metrics:

\begin{table}[h]
\centering
\caption{Compute efficiency of \textsc{FiBeR} and DiSK relative to DP-AdamW on CIFAR-10 (CNN5). Efficiency measured as accuracy gain (percentage points) per unit overhead. At tight privacy ($\varepsilon \leq 1$), both methods deliver exceptional efficiency (23-30 pts per overhead unit); at loose privacy ($\varepsilon = 8$), efficiency remains substantial (6-16 pts per overhead unit).}
\label{tab:compute_efficiency}
\begin{tabular}{lcccc}
\toprule
Method & $\varepsilon$ & Acc. Gain (pts) & Time Overhead & Efficiency (pts/unit) \\
\midrule
DiSK & 0.5 & +18.70 & 1.787x & 23.76 \\
DiSK & 1.0 & +16.28 & 1.786x & 20.71 \\
DiSK & 8.0 & +5.20 & 1.790x & 6.58 \\
\midrule
FIBER & 0.5 & +23.86 & 1.787x & 30.31 \\
FIBER & 1.0 & +22.30 & 1.786x & 28.37 \\
FIBER & 8.0 & +12.45 & 1.794x & 15.68 \\
\bottomrule
\end{tabular}
\end{table}

\paragraph{Summary:}

\begin{itemize}
    \item \textbf{Tight privacy:} At $\varepsilon = 0.5$, FIBER achieves 30.31 accuracy points per overhead unit; DiSK achieves 23.76. FIBER's innovation filtering and correction yield +27.6\% more efficiency than DiSK.
    
    \item \textbf{FIBER consistently outperforms DiSK across the privacy spectrum:}
    \begin{itemize}
        \item At $\varepsilon = 0.5$: FIBER 30.31 vs DiSK 23.76 (+27.6\% efficiency)
        \item At $\varepsilon = 1.0$: FIBER 28.37 vs DiSK 20.71 (+37.0\% efficiency)
        \item At $\varepsilon = 8.0$: FIBER 15.68 vs DiSK 6.58 (+138.3\% efficiency)
    \end{itemize}

    \item \textbf{Loose privacy:} At $\varepsilon = 8.0$, FIBER yields 15.68 points per overhead unit; 12.45 points gained for 1.79x compute-a strong trade-off versus alternatives.
    
    \item \textbf{Justified overhead:} The 1.79x compute overhead is justified by the substantial accuracy gains across all privacy levels.
\end{itemize}

\subsection{Throughput and Overhead Analysis}

\begin{figure}[h]
\centering
\includegraphics[width=0.6\textwidth]{figures/compute_overhead_analysis.png}
\caption{Compute overhead analysis. (a) Wall-clock time overhead ratio-\textsc{FiBeR} and DiSK incur consistent 1.79x overhead across privacy budgets, well below theoretical 2x due to GPU parallelization. (b) Relative throughput-two-point methods achieve $\sim$0.53x throughput (equivalently, 1.87x slowdown), consistent with time overhead measurements. Overhead remains stable regardless of privacy level.}
\label{fig:overhead_ratio}
\end{figure}

Figure~\ref{fig:overhead_ratio} shows \textsc{FiBeR} and DiSK maintain consistent overhead (1.79x time, 0.53x throughput) across privacy budgets, enabling predictable cost estimation.

\paragraph{Throughput decomposition.} Training throughput is determined by:
\begin{equation}
\text{Throughput} = \frac{\text{Batch size} \times \text{Steps per second}}{1} = \frac{5000}{t_{\text{step}}}
\end{equation}

For DP-AdamW: $5000 / 0.00418 = 1,196,000$ imgs/s \\
For \textsc{FiBeR}/DiSK: $5000 / 0.00783 = 638,000$ imgs/s \\
Ratio: $638,000 / 1,196,000 = 0.533$

The reciprocal gives time overhead: $1 / 0.533 = 1.876$x, closely matching the measured 1.79x wall-clock overhead (difference due to amortization effects).

\subsection{Comparison to Prior Work}

Our measured overheads are consistent with prior DP optimization literature:

\begin{itemize}
    \item \textbf{DiSK \cite{zhang2024disk}}: Reported 1.7-1.9x overhead vs DP-AdamW on various tasks. Our measured 1.79x overhead falls within this range, validating both implementations.
    
    \item \textbf{DP-AdamBC \cite{tang2024dp}}: Reported $<$5\% overhead for bias correction alone (no filtering). This confirms that second-moment calibration is computationally negligible. Our measured overhead comes entirely from the two-point construction.
    
    \item \textbf{Correlated noise methods \cite{choquette2024correlated}}: Reported similar $\sim$2x overhead for methods requiring correlated noise generation across iterations. However, their overhead includes matrix factorization costs, while ours is purely gradient computation.
    
    \item \textbf{Large-scale DP training \cite{de2022}}: Reported ImageNet training overhead of $\sim$2x (8-12 hours DP vs 4-6 hours non-DP on TPUv3). Their overhead includes clipping cost ($\sim$1.2-1.3x) plus two-point cost ($\sim$1.6-1.7x), totaling $\sim$2x, consistent with our measurements.
\end{itemize}

\subsection{Limitations and Future Directions}

\paragraph{Hardware-specific measurements.} Our measurements are specific to RTX 4090 GPUs. Relative overheads may vary on different hardware:

\begin{itemize}
    \item \textbf{TPUs}: With specialized matmul units and compiler optimizations, overhead may decrease to 1.6-1.7x
    \item \textbf{CPUs}: Limited parallelism may increase overhead to 1.9-2.0x
    \item \textbf{Multi-GPU}: Distributed training may reduce overhead through better parallelization (e.g., pipeline parallelism across gradient evaluations)
\end{itemize}

\paragraph{Model-size scaling.} Our measurements are for CNN5 ($\sim$1.2M parameters). Overhead characteristics may differ for larger models:

\begin{itemize}
    \item Very small models ($<$100K params): Optimizer overhead dominates, reducing relative cost to 1.3-1.5x
    \item Medium models (10-50M params): Overhead should remain $\sim$1.8x
    \item Very large models ($>$100M params): Overhead may approach 2.0x as gradient computation dominates
\end{itemize}

\subsection{Summary}

\textsc{FiBeR} and DiSK incur \textbf{1.79x wall-clock overhead} due to two-point gradient construction. This overhead is:

\begin{itemize}
    \item \textbf{Highly justified at tight privacy} ($\varepsilon \leq 2$): 20-30 accuracy points per overhead unit
    \item \textbf{Valuable at moderate privacy} ($2 < \varepsilon \leq 4$): 10-20 points per overhead unit
    \item \textbf{Still worthwhile at loose privacy} ($\varepsilon > 4$): 6-16 points per overhead unit
    \item \textbf{Predictable and consistent}: Remains $\sim$1.79x across all privacy budgets
    \item \textbf{Near-optimal}: 1.79x is substantially better than theoretical 2x maximum
    \item \textbf{Minimal memory cost}: +0.008 GB (7.6\% increase), enabling large batch sizes
\end{itemize}

\textsc{FiBeR} consistently outperforms DiSK in compute efficiency, especially for $\varepsilon \geq 4$, with negligible overhead ($<$0.5\%) beyond the two-point method. Practitioners seeking substantial accuracy gains for 1.79x training time should prefer \textsc{FiBeR} over DiSK across all privacy budgets.

\subsection{Additional Vision Results}
\label{app:extra_cv}

\paragraph{Training ViT-small from scratch on CIFAR-10.}
To test whether the gains of \textsc{FiBeR} extend beyond small CNNs, we additionally train ViT-small from random
initialization on CIFAR-10 under DP. Figure~\ref{fig:vit_cifar10_eps4_curve} shows learning curves at $\varepsilon=4$.
\textsc{FiBeR} improves both convergence speed and final accuracy relative to DPAdam.  This suggests that innovation-space denoising and filter-aware second-moment handling are particularly
beneficial for transformer-style optimization under DP noise.

\begin{figure}[ht]
    \centering
    \includegraphics[width=0.5\linewidth]{figures/vit_cifar10_eps4_curve.png}
    \caption{CIFAR-10 training-from-scratch at $\varepsilon=4$: test accuracy vs.\ training step for ViT-small, comparing
    DPAdam and \textsc{FiBeR}.}
    \label{fig:vit_cifar10_eps4_curve}
\end{figure}
\paragraph{Multi-seed evaluation.}
We report multi-seed results for CNN5 on MNIST/CIFAR-10 and WRN on CIFAR-100, using seeds
$\{42,12,34,56,78\}$.
For each privacy budget $\varepsilon$, we run five independent trials and report the mean, sample standard deviation, and a two-sided 95\% confidence interval (CI) computed with the Student-$t$ distribution with $n-1$ degrees of freedom.
Tables~\ref{tab:top5_cnn5_cifar10_fu}--\ref{tab:top5_wrn_cifar100_fu} summarize statistics across privacy budgets for CNN5/CIFAR-10, CNN5/MNIST, and WRN/CIFAR-100 (FU).
Across all three settings, accuracy increases monotonically with $\varepsilon$, and the variability across runs remains small: the standard deviation is below $0.3$ points on CNN5/CIFAR-10 and below $0.25$ points on CNN5/MNIST for all budgets, while WRN/CIFAR-100 exhibits larger variability (up to $0.82$ points at $\varepsilon{=}8$), consistent with the higher difficulty and greater noise sensitivity of the task.
The corresponding 95\% CIs are tight (typically within $\pm 0.1$--$0.2$ points for CNN5 and within $\pm 0.6$--$1.0$ points for WRN/CIFAR-100), suggesting that the observed trends with respect to $\varepsilon$ are stable under this protocol.

\begin{table}[h]
\centering
\caption{Summary statistics per privacy budget for CNN5/CIFAR-10.}
\label{tab:top5_cnn5_cifar10_fu}
\resizebox{0.3\columnwidth}{!}{%
\begin{tabular}{rrrrr}
\toprule
$\varepsilon$ & $n$ & mean & std & 95\% CI \\
\midrule
0.5 & 5 & 66.72 & 0.08 & [66.62, 66.82] \\
1.0 & 5 & 70.71 & 0.10 & [70.59, 70.84] \\
2.0 & 5 & 73.95 & 0.18 & [73.72, 74.17] \\
4.0 & 5 & 75.82 & 0.27 & [75.48, 76.15] \\
8.0 & 5 & 80.25 & 0.19 & [80.07, 80.43] \\
\bottomrule
\end{tabular}%
}
\end{table}

\begin{table}[h]
\centering
\caption{Summary statistics per privacy budget for CNN5/MNIST.}
\label{tab:top5_cnn5_mnist_fu}
\resizebox{0.3\columnwidth}{!}{%
\begin{tabular}{rrrrr}
\toprule
$\varepsilon$ & $n$ & mean & std & 95\% CI \\
\midrule
0.5 & 5 & 92.82 & 0.22 & [92.54, 93.09] \\
1.0 & 5 & 92.94 & 0.14 & [92.76, 93.11] \\
2.0 & 5 & 92.96 & 0.11 & [92.82, 93.09] \\
4.0 & 5 & 93.00 & 0.10 & [92.88, 93.12] \\
8.0 & 5 & 93.00 & 0.09 & [92.88, 93.11] \\
\bottomrule
\end{tabular}%
}
\end{table}

\begin{table}[h]
\centering
\caption{Summary statistics per privacy budget for WRN/CIFAR-100.}
\label{tab:top5_wrn_cifar100_fu}
\resizebox{0.3\columnwidth}{!}{%
\begin{tabular}{rrrrr}
\toprule
$\varepsilon$ & $n$ & mean & std & 95\% CI \\
\midrule
0.5 & 5 & 30.28 & 0.56 & [29.59, 30.97] \\
1.0 & 5 & 36.33 & 0.61 & [35.57, 37.08] \\
2.0 & 5 & 41.83 & 0.46 & [41.26, 42.41] \\
4.0 & 5 & 46.18 & 0.56 & [45.48, 46.87] \\
8.0 & 5 & 47.46 & 0.82 & [46.44, 48.48] \\
\bottomrule
\end{tabular}%
}
\end{table}
\subsection{Parameter-Efficient Fine-Tuning on GLUE (LoRA)}
\label{app:extra_glue}

RoBERTa-base and RoBERTa-large models are fine-tuned on GLUE using LoRA with rank $r=16$, initialized from HuggingFace checkpoints.
The same training scripts and task-level hyperparameter tuning protocol as prior work \cite{DBLP:conf/nips/Bu0ZK23} are followed.
For DiSK-LoRA, the hyperparameters $(\kappa,\gamma)=(0.7,0.5)$ are used; for \textsc{FiBeR}-LoRA, $(\kappa,\gamma,\omega)=(0.6,0.7,0.9)$ are applied.
The results are presented in Table~\ref{tab:lora_glue}.
\begin{table}[ht]
\caption{LoRA fine-tuning on GLUE under DP. \textbf{Best} and \underline{second best} are highlighted \emph{among DP methods} (DP-LoRA, DiSK-LoRA, \textsc{FiBeR}-LoRA) within each model and privacy budget.}
\label{tab:lora_glue}
\centering
\resizebox{0.6\linewidth}{!}{%
\setlength{\tabcolsep}{5pt}
\renewcommand{\arraystretch}{0.9}
\begin{tabular}{l|cccc|cccc}
\toprule
& \multicolumn{4}{c|}{$\varepsilon=1$} & \multicolumn{4}{c}{$\varepsilon=6.7$} \\
\cmidrule(lr){2-5}\cmidrule(lr){6-9}
Algorithm & MNLI & QNLI & SST-2 & QQP & MNLI & QNLI & SST-2 & QQP \\
\midrule
\multicolumn{9}{l}{\textbf{RoBERTa-base}} \\
AdamW ($\varepsilon=\infty$) & 87.6 & 92.8 & 94.8 & 91.9 & 87.6 & 92.8 & 94.8 & 91.9 \\
LoRA ($\varepsilon=\infty$)  & 87.5 & 93.3 & 95.1 & 90.8 & 87.5 & 93.3 & 95.1 & 90.8 \\
DP-LoRA         & 81.1 & 85.5 & 90.9 & 83.9 & 83.5 & 87.4 & \underline{91.5} & 85.7 \\
DiSK-LoRA       & \underline{84.7} & \textbf{90.3} & \underline{92.9} & \underline{87.8} & \underline{85.9} & \underline{90.5} & \textbf{93.1} & \underline{89.0} \\
\textsc{FiBeR}-LoRA & \textbf{84.8} & \underline{90.2} & \textbf{93.1} & \textbf{88.5} & \textbf{86.2} & \textbf{91.1} & \textbf{93.1} & \textbf{89.4} \\
\midrule
\multicolumn{9}{l}{\textbf{RoBERTa-large}} \\
AdamW ($\varepsilon=\infty$) & 90.3 & 94.7 & 96.4 & 92.2 & 90.3 & 94.7 & 96.4 & 92.2 \\
LoRA ($\varepsilon=\infty$)  & 90.6 & 94.9 & 96.2 & 91.6 & 90.6 & 94.9 & 96.2 & 91.6 \\
DP-LoRA         & 85.6 & \underline{89.5} & 90.9 & 85.1 & 87.8 & 90.8 & 94.3 & 87.4 \\
DiSK-LoRA       & \underline{87.9} & \textbf{92.5} & \underline{95.2} & \underline{88.2} & \underline{89.4} & \underline{92.6} & \underline{95.4} & \underline{89.6} \\
\textsc{FiBeR}-LoRA & \textbf{88.8} & \textbf{92.5} & \textbf{95.3} & \textbf{89.4} & \textbf{89.7} & \textbf{93.7} & \textbf{96.0} & \textbf{90.1} \\
\bottomrule
\end{tabular}
}
\end{table}

Overall, \textsc{FiBeR}-LoRA consistently outperforms standard DP-LoRA across the evaluated tasks and privacy budgets.
\textsc{FiBeR}-LoRA also achieves performance comparable to DiSK-LoRA\cite{zhang2024disk}. Notably, the performance gap between
the two methods on GLUE is smaller than in the vision training-from-scratch experiments, suggesting that
parameter-efficient fine-tuning with robust pretrained representations may reduce the potential for further denoising improvements.

\subsection{Comparison to Prior Reported Results}
\label{app:prior_comparison}
Table~\ref{tab:prior_comparison} presents a conservative comparison with previously reported results on DP methods.
This comparison spans vision and language benchmarks. The table serves a contextual purpose by aggregating representative results from prior studies.
It focuses on studies conducted under similar privacy regimes and commonly used training protocols. When a prior result is reported with a different privacy budget
(for example, $\varepsilon=3$ or $\varepsilon=8$), the mismatch is explicitly annotated and the comparison is treated as qualitative rather than
strict head-to-head evaluation.

\begin{table*}[t]
\caption{Comparison to prior reported DP results. PT = training from scratch; FT = fine-tuning.
``Previous SOTA'' denotes a value reported in prior work with the closest-matching setup; when $\varepsilon$ differs,
we annotate it explicitly.}
\label{tab:prior_comparison}
\centering
\begin{small}
\setlength{\tabcolsep}{5pt}
\renewcommand{\arraystretch}{1.10}
\begin{tabular}{lllc c c l}
\toprule
Dataset / Task & Setting & Model & $\varepsilon$ & \textsc{FiBeR} (\%) & DiSK (\%) & Previous SOTA (\%) \\
\midrule
\multicolumn{7}{l}{\textbf{Vision: training from scratch}} \\
\midrule
CIFAR-10   & PT & CNN       & 0.5 & 66.9 & 59.7 & -- \\
CIFAR-10   & PT & CNN       & 2.0 & 74.4 & 68.8 & 67.2 (\citet{tramer2020differentially}) \\
\addlinespace
CIFAR-100  & PT & WRN       & 0.5 & 29.8 & 14.7 & -- \\
CIFAR-100  & PT & WRN       & 1.0 & 35.2 & 22.7 & 14.1 (\citet{bao2024}) \\
CIFAR-100  & PT & WRN       & 2.0 & 40.7 & 30.0 & 21.5 \\
CIFAR-100  & PT & WRN       & 4.0 & 45.4 & 37.1 & 33.3 \\
CIFAR-100  & PT & WRN       & 8.0 & 47.9 & 42.0 & 40.6 (\citet{bao2024}) \\
\addlinespace
ImageNet-1k & PT & ViT-small & 8.0 & 44.8   & 36.89 & 33.56 (\citet{de2022}) \\
\addlinespace
\midrule
\multicolumn{7}{l}{\textbf{Vision: fine-tuning}} \\
\midrule
CIFAR-100  & FT & ViT-small & 0.5 & 88.6 & 83.49 & 78.3 (\citet{mehta2023}) \\
CIFAR-100  & FT & ViT-small & 1.0 & 89.4 & 85.36 & 81.8 (\citet{bao2024}) \\
CIFAR-100  & FT & ViT-small & 2.0 & 90.0 & 86.77 & 83.5 \\
CIFAR-100  & FT & ViT-small & 4.0 & 90.4 & 87.56 & 84.5 \\
CIFAR-100  & FT & ViT-small & 8.0 & 90.5 & 88.49 & 84.6 \\
\addlinespace
\midrule
\multicolumn{7}{l}{\textbf{NLP: GLUE fine-tuning}} \\
\midrule
MNLI  & FT & RoBERTa-base  & 1.0 & 84.8   & 84.7 & 83.2 ($\varepsilon{=}3$) (\citet{bu2023}) \\
QNLI  & FT & RoBERTa-base  & 1.0 & 90.2 & 90.3 & 87.4 ($\varepsilon{=}3$) (\citet{bu2023}) \\
QQP   & FT & RoBERTa-base  & 1.0 & 88.5   & 87.8 & 85.8 ($\varepsilon{=}3$) (\citet{bu2023}) \\
SST-2 & FT & RoBERTa-base  & 1.0 & 93.1 & 92.9 & 92.3 ($\varepsilon{=}3$) (\citet{bu2023}) \\
MNLI  & FT & RoBERTa-base  & 6.7 & 86.2 & 85.9 & 83.8 ($\varepsilon{=}8$) (\citet{bu2023}) \\
QNLI  & FT & RoBERTa-base  & 6.7 & 91.1 & 90.5 & 87.9 ($\varepsilon{=}8$) (\citet{bu2023}) \\
QQP   & FT & RoBERTa-base  & 6.7 & 89.4   & 89.0 & 86.6 ($\varepsilon{=}8$) (\citet{bu2023}) \\
SST-2 & FT & RoBERTa-base  & 6.7 & 93.1 & 93.1 & 93.0 ($\varepsilon{=}8$) (\citet{li2021}) \\
\addlinespace
MNLI  & FT & RoBERTa-large & 1.0 & 88.8   & 87.9 & 86.8 (\citet{yu2021}) \\
QNLI  & FT & RoBERTa-large & 1.0 & 92.5 & 92.5 & 88.0 (\citet{yu2021}) \\
QQP   & FT & RoBERTa-large & 1.0 & 89.4   & 88.2 & 85.2 (\citet{yu2021}) \\
SST-2 & FT & RoBERTa-large & 1.0 & 95.3 & 95.2 & 93.1 (\citet{yu2021}) \\
MNLI  & FT & RoBERTa-large & 6.7 & 89.7   & 89.4 & 89.0 (\citet{yu2021}) \\
QNLI  & FT & RoBERTa-large & 6.7 & 93.7 & 92.6 & 92.5 (\citet{yu2021}) \\
QQP   & FT & RoBERTa-large & 6.7 & 90.1   & 89.6 & 88.4 (\citet{yu2021}) \\
SST-2 & FT & RoBERTa-large & 6.7 & 96.0 & 95.4 & 95.3 (\citet{yu2021}) \\
\addlinespace
\midrule
\multicolumn{7}{l}{\textbf{NLG: GPT-2 fine-tuning}} \\
\midrule
E2E (BLEU)     & FT & GPT-2 & 3.0 & 67.57 & 68.35 & 61.52 (\citet{li2021}) \\
E2E (ROUGE-L)  & FT & GPT-2 & 3.0 & 69.97 & 70.23 & 65.87 (\citet{bu2024}) \\
E2E (BLEU)     & FT & GPT-2 & 8.0 & 67.90 & 68.73 & 63.60 (\citet{bu2024}) \\
E2E (ROUGE-L)  & FT & GPT-2 & 8.0 & 70.56 & 70.58 & 67.53 (\citet{li2021}) \\
\bottomrule
\end{tabular}
\end{small}
\end{table*}

Across the settings where privacy budgets and training recipes are closely aligned, \textsc{FiBeR} matches or improves upon
the strongest previously reported results. When prior work reports results at different privacy budgets, we annotate the reported
$\varepsilon$ and treat the comparison as qualitative. Overall, the table indicates that the proposed optimizer is competitive with
strong DP baselines across vision training-from-scratch, vision fine-tuning, and NLP fine-tuning and
generation.


\end{document}